\newcommand{\pc}{\mathbf{P}}
\newcommand{\pp}{\mathbb{P}}
\newcommand{\kap}{\kappa}
\newcommand{\ms}{\mathbb{S}}
\title{Granular Generalized Variable Precision Rough Sets and Rational Approximations}
\author{A Mani \corresponding \\
Machine Intelligence Unit\\
Indian Statistical Institute, Kolkata\\
203, B. T. Road, Kolkata (Calcutta)-700108, India\\
Email: \texttt{a.mani.cms@gmail.com, amani.rough@isical.ac.in}\\
Homepage: \url{https://www.logicamani.in}\\
Orcid: \url{https://orcid.org/0000-0002-0880-1035}
\and
Sushmita Mitra\\
Machine Intelligence Unit, Indian Statistical Institute, Kolkata\\
203, B. T. Road, Kolkata-700108, India\\
Email: \texttt{sushmita@isical.ac.in}\\
Orcid: \url{https://orcid.org/0000-0001-9285-1117}}
\begin{document}

\maketitle
\runninghead{A Mani, S Mitra}{Granular VPRS and Rational Approximations}


\begin{abstract}
Rational approximations are introduced and studied in granular graded rough sets and generalizations thereof by the first author in recent research papers. The concept of rationality is determined by related ontologies and coherence between granularity, mereology and approximations in the context. In addition, a framework for rational approximations is introduced by her in the mentioned paper(s). Granular approximations constructed as per the procedures of variable precision rough sets (VPRS) are likely to be more rational than those constructed from a classical perspective under certain conditions. This may continue to hold for some generalizations of the former. However, a formal characterization of such conditions is not available in the previously published literature. In this research, theoretical aspects of the problem are critically examined, uniform generalizations of granular VPRS are introduced, new connections with granular graded rough sets are proved, appropriate concepts of substantial parthood are introduced, their extent of compatibility with the framework is accessed, and the framework is extended. Basic assumptions are explained in detail, and additional examples are constructed for readability. Furthermore, meta applications to cluster validation, image segmentation and dynamic sorting are invented. Extensions to direct generalizations of VPRS such as probabilistic rough sets are a natural consequence of the work.
\end{abstract}

\begin{keywords}
Variable Precision Rough Sets, Rationality, Rational Approximations, Substantial Parthood, High Granular Partial Algebras, Contamination Problem, Rough Mereology, Cluster Validation, Graded Rough Sets, Medical Imaging, Generalized RIFs
\end{keywords}

\section{Introduction}

Functionally, an assumption, theory or construct is \emph{rational} just in case the consequences are justified, and they satisfy a specific beneficial or normative pattern. Concepts of rationality in a context are therefore about admissible inferences and additional rules that may be at a higher meta level or semantic domain. An overview of different theories of rationality, and rational inference across multiple fields of study can be found in \cite{knjbsep2020}. A specific version of rationality, called bounded rationality in which possible inferences are bounded or limited has found much application in applied logic. Bounded rationality is additionally relevant in the context of rough set applications. However, the boundedness aspect need not be realizable always in applications. 

Knowledge, and specifically granular knowledge in general rough sets from the perspective of artificial intelligence and human reasoning have always been of interest as can be seen in \cite{zpsk07,zpb,op84,am909,am9411,am3930,am9969,ppm2}. While various facets of rationality like the issue of contamination avoidance \cite{am240} have been investigated, the central focus has not been on rationality of approximations except in a few recent papers \cite{am202236} of the first author. In relation to possible concepts of rational approximations, it should be noted that it is about the ontology of the approximation realized at least partly in their construction that confers rationality. Arguably, variable precision rough sets \cite{zw,kz} provide a certain amount of rationality to rough approximations because approximations require a minimum level of interaction between granules and the thing being approximated. However, the entire context or external factors may be involved in inferring that an approximation is indeed rational or that it follows certain patterns. Complex questions relating to rationality pursued in this research can be seen even in simple examples like the following: 
\begin{example}
People have different reasons for liking pets, no preference for keeping them, or not liking them at all. These attitudes may be seen as the result of their responses to a number of related questions. It is easy to see that those keeping pets must be  likely to respond with pet-loving views on most questions, and play down the downsides (pets may be full of harmful microorganisms). Certain key questions relating to their ability to maintain strict hygiene, provide care, and devote time can determine the rationality of pet keepers. In such a scenario, the concept of a rational pet keeper can be approximated correctly (through VPRS) by structuring the questions explicitly or implicitly. This is about an object being rational, and the implicit or explicit structuring would be about definability. Additionally, it may be possible to badly approximate the concept of a potential pet keeper as that of a non-pet keeper. This is not the only way in which an approximation may not be rational. A mutually inconsistent set of approximations is also a sign of the approximation process lacking rationality. This inconsistency may be deducible from within the system or without.        
\end{example}

It should be noted that concepts of knowledge can be associated from other perspectives (apart from  those based on Pawlakian ideas, and mereological axiomatic granular perspectives mentioned above) such as classical granular computing \cite{ya01,zl1997}, interpretations of modal logic \cite{ppm2,amedit,yl96}, constructive logic \cite{pp2018,jpr}, concept analysis \cite{yy2016,cd9,am9969} and machine learning. However, ideas of knowledge are not well-developed in every generalization of rough sets, and soft computing. 

Reasonable and rational approximations in the context of general rough sets are essential for applications to contexts that require high quality approximations, or predictions (especially when robustness cannot be expected). This is very relevant in applications to human learning, automated evaluation frameworks in education \cite{am5559}, general rough analysis of cluster validation and other areas. Apart from these, a number of potential application areas like intelligent robust image segmentation with unlearning, epidemiology (where feature selection fails badly) and medical diagnostics can be indicated. 

Specifically, it is known in the medical literature \cite{tost2017}, that simply removing tumors (surgically or otherwise) from an organ may lead to new tumors appearing in it with greater intensity. This suggests that the idea of tumors being malignancies of a totally spatially localized kind is flawed; however, it has been the dominant assumption in most medical imaging studies using machine learning (including those using rough sets). It is something dictated by researchers through the methods employed. The proposed generalizations and framework are additionally intended to address the issue from multiple perspectives at a theoretical level.      

The problem of automatic sorting and distributing grains and other materials through machine learning (especially, computer vision \cite{hpra2021}) methods has witnessed a number of disconnected and often opaque contributions. Methods based on granular VPRS that use ideas of rationality are proposed in this research.  

In the application contexts mentioned, a rich collection of mereological relations such as \emph{is an essential part of, is an integral part of, is an apparent part of, is a substantial part of, is a functional part of} and related expression may be found. For example, the process of solving systems of linear equations involves a very large number of parts of the types mentioned. How does one proceed to identify the appropriate ones especially when many nonstandard solution strategies are possible? How does one accommodate alternative conceptions (instead of dismissing them as mistakes) \cite{am5559,sands2018}? Cladistics of various kinds including biology \cite{diana1998} additionally involve more complex parthood relations.  

Variable precision rough sets \cite{zw,kz} and generalizations thereof (see for example \cite{qiao18,am24,yy3w2011,yec2017,zhao09,sdz05}) have been extensively studied from application-oriented perspectives; however, theoretical aspects have not been attended to in a ontological as opposed to a subjective probabilist perspective \cite{xwzl2013,yy3w2011}. Modal connections of some are mentioned in \cite{amk2018}. However, models for a number of variants are not known, and the problem of imposing assumptions remains. If partial possibilities as in \cite{whh22} are to be used, then frameworks for rationality are necessary. The use of classification error or its equivalents (interpreted in numerical terms) and automatic versions based on ideas of risk minimization in decision tables \cite{ywxy15} have their shortcomings. \emph{They certainly do not represent the ontology involved in the context. However what do they actually represent? What do they represent when generalized variants of such functions are used, and which hybrid cases are equivalent to these? Such questions are essentially about the kind of rationality that may be associated with the techniques}.

This research is organized as follows: in the next section some background is provided for convenience, a discussion on rationality and semantic domains is part of the third section. Mereology used in this research is explained and situated in the following sections next. 
Generalizations of rough inclusion functions are reviewed and extended in the sixth section from an axiomatic perspective.  VPRS and granular rough set-theoretical generalizations that may cover some hybrid variants are reviewed or proposed in the following section (concrete concepts of substantial parthood are additionally introduced in a subsection), and new results on the connection of generalized VPRS with graded rough sets are proved in the following section. The granular framework for rational generalized VPRS is proposed in the ninth section. Illustrative examples are constructed next, subsequently the compatibility of granular VPRS with the framework is explored. Meta applications to cluster validation, medical imaging, and dynamic sorting are considered in the penultimate section. Directions for further research are indicated in addition.

\section{Notation and Terminology}

Quantifiers are uniformly enclosed in braces for easier reading. Thus,\\ $\forall a \exists b \, \Phi(a,b) $ is the same as $(\forall a \exists b)\, \Phi (a, b) $.   

Conditional implications of the form \emph{for every $x$ whenever $\Phi(x)$ holds, then $\Psi(x)$ holds as well}, are replaced (whenever possible) by 
\[(\forall x) (\Phi(x) \longrightarrow \Psi(x))\]

The following axiom will be assumed as well:

\textsf{Emergency Axiom: If a theorem is false only because of a set $A$ being nonempty, then it will be tacitly assumed that $A$ is not empty}.

In quasi or partially ordered sets, sets of mutually incomparable  elements are called \emph{antichains}. Some of the basic properties may be found in \cite{gg1998,koh}. The possibility of using antichains of rough objects for a possible semantics was mentioned in \cite{am9501,am3690} and has been developed subsequently in \cite{am6999,am9114}. The semantics invented in the paper is applicable for a large class of operator based rough sets including specific cases of rough Y- systems (\textsf{RYS}) \cite{am240} and other general approaches like \cite{cd3,cc5,yy9,it2}. In \cite{cd3,cc5,gc2018}, negation like operators are assumed in general and these are not definable operations in terms of order related operations in an algebraic sense (additionally, Kleene negation is not a definable operation in the situation).

\subsection{Partial Algebras}

For basics of partial algebras, the reader is referred to \cite{bu,lj}.
\begin{definition}
A \emph{partial algebra} $P$ is a tuple of the form \[\left\langle\underline{P},\,f_{1},\,f_{2},\,\ldots ,\, f_{n}, (r_{1},\,\ldots ,\,r_{n} )\right\rangle\] with $\underline{P}$ being a set, $f_{i}$'s being partial function symbols of arity $r_{i}$. The interpretation of $f_{i}$ on the set $\underline{P}$ should be denoted by $f_{i}^{\underline{P}}$; however, the superscript will be dropped in this paper as the application contexts are simple enough. If predicate symbols enter into the signature, then $P$ is termed a \emph{partial algebraic system}.   
\end{definition}

In this paragraph the terms are not interpreted. For two terms $s,\,t$, $s\,\stackrel{\omega}{=}\,t$ shall mean, if both sides are defined then the two terms are equal (the quantification is implicit). $\stackrel{\omega}{=}$ is the same as the existence equality (sometimes written as $\stackrel{e}{=}$) in the present paper. $s\,\stackrel{\omega ^*}{=}\,t$ shall mean if either side is defined, then the other is and the two sides are equal (the quantification is implicit). Note that the latter equality can be defined in terms of the former as 
\[(s\,\stackrel{\omega}{=}\,s \, \longrightarrow \, s\,\stackrel{\omega}{=} t)\&\,(t\,\stackrel{\omega}{=}\,t \, \longrightarrow \, s\,\stackrel{\omega}{=} t) \]
\subsection{Other Concepts}

\begin{definition}
If $A, B\in\mathcal{S}\subseteq \wp(S)$, with $\mathcal{S}$ being closed under intersection, $S$ being a finite set and if $\# ()$ is the cardinality function, then the equation 
\begin{equation*}\label{rif0}
\nu(A, B) = \left\lbrace  \begin{array}{ll}
 \dfrac{\# (A\cap B)}{\# (A)} & \text{if } A\neq \emptyset\\
 1 & \text{if } A= \emptyset\\
 \end{array} \right. \tag{K0}                                                                                                             
\end{equation*}
defines the \emph{rough inclusion function (or the classical RIF)} $\nu$. 
\end{definition}

The \emph{classification error function} $C$ is defined by $C(A, B) = 1- \nu (A, B)$ for any $A$ and $B$. It is interpreted as the error in classifying $B$ as $A$. VPRS approximations and variants can be defined with either of the two functions or generalizations thereof. 

Clearly, if $A\subset B$, then $\nu(A, B) =1$; however, $\nu(B, A) <1$. Thus it forgives excesses and is already faulty. Moreover, if $E$ is another subset, then $A\subset B$ implies $\nu(B, E)\leq \nu(A, E)$ -- this property is less common.

\section{Rationality and Semantic Domains}

Semantic domains are important in any logical or mathematical approach that focuses on meaning and models. It suffices to specify these without completely formalizing them within specific domains of knowledge. One way of identifying semantic domains in contexts involving approximate reasoning or rough sets is through the type of objects involved in the discourse. This can be specified in a number of ways such as through direct observation of all attributes, observation of parts of an object's attributes, approximations of objects, operations permitted (or admissible predicates) on objects. 

For example, a teacher's understanding of a lesson is very different from that of a student, and each may be approximating a certain knowledge structure (see \cite{bdtmp2008} for a more detailed discussion). In the context, more than six semantic domains may be constructed for the purpose of modeling from different perspectives simply because multiple agents and knowledge hierarchies are involved. 

Consider the sentences 
\begin{description}
 \item [S-A]{The golden mountain is golden and a mountain.}
 \item [S-B]{The tallest mountain is tall and a mountain.}
\end{description}
It is common knowledge that S-A is an unverifiable assertion, while S-B is a verifiable fact. However relative to a knowledgebase in which the concept of \emph{being tall} is missing, both S-A and S-B would be unverifiable. If related semantic domains are formalizable, then they would be different. 

The above examples illustrate the motivation for referring to semantic domains.

In the general rough context, the concepts of definite and rough (or vague) objects are specified through one or more approximation operators and additional constraints. Concepts of crisp objects are additionally specified in the same way or may be designated as such. In the case of Pawlakian (or classical) rough sets, if the lower approximation of an object $X$ coincides with itself, then $X$ is said to be crisp or definite. Objects that are not definite are \emph{rough}. There are several ways of representing rough objects in terms of crisp objects such as a pair of definite objects $(A, B)$ with $A \subset B $. It may additionally be reasonable to think of sets of objects that properly contain common maximal crisp objects as a rough object, or take orthopairs \cite{gcd2018} as the primary objects of interest. For a longer list see \cite{am501}. Naturally, these lead to different rough semantic domains.

Classical rough sets (see \cite{zpb}) starts from approximation spaces (derived from information tables) of the form $\left\langle S,\,R \right\rangle $, with $R$ being an equivalence on the set $S$. The Boolean algebra with operators on the power set $\wp (S)$ with lower and upper
approximation operations forms a model that does not describe the rough objects alone. This extends to arbitrary general approximation spaces where $R$ is permitted to be any relation or even to those based on covers. The \emph{classical semantic domain} associated with such classes of models may be understood in terms of the collection of restrictions on possible
objects, predicates, constants, functions and low level operations. 

The problem of defining rough objects that permit reasoning about both intensional and extensional aspects posed in \cite{mkcfi2016} corresponds to identification of suitable semantic domains. Explicit perspectives, as for example in \cite{yy2015,gcd2018}, correspond similarly. Other semantic domains, including hybrid semantic domains, can be built from more complicated objects such as maximal antichains of mutually discernible objects \cite{am9114,am6999} (mentioned earlier), or even over the power set of the set of possible order-compatible partitions of the set of roughly equivalent elements \cite{am105,am501}. In fact, any general rough or soft reasoning context may be associated with a number of semantic domains \cite{am240,am9114,am9501,am501,am5586,am3930}. 
These are sometimes vaguely referred to as meta levels in the AIML literature. For example, type-I and type-II fuzzy sets are read in terms of descriptions of functions from different meta levels.  
Thus the concept of \emph{semantic domains} in abstract model theory \cite{md} is analogous to the usage here (though formalization of domains can be objected to).  

Even when \emph{rough sets are formalized as well-formed formulas} in a fixed language they do not refer to the same domain of discourse. For example, \cite{bk3,bc2,am3,dueo2011} refer to semantics of classical rough sets from different perspectives with that of \cite{am3} being a higher order semantics of the ability of objects to approximate -- this is not expressible in the others. 

For semantic domains in the context of fuzzy sets, the reader is referred to \cite{oep05}. In p46, the author essentially points out that possible solutions of the following problem (of subjective probability arguably) depend on the semantic domain used: 
\begin{quote}
A box contains ten balls of various sizes. Several of these are large and a
few are small. What is the probability that a ball drawn at random is neither
large nor small?
\end{quote}
Implicit in the problem is that subjective perceptions determine the being of objects. From a rough perspective, information about attributes of the objects would be necessary.

From the above scenario, it can be expected that possible concepts of rationality of approximations are additional impositions on the the domain of discourse. Current practices in rough sets do not explicitly require approximations to be rational in most cases; however, many conditions on possible decisions can be imposed. This lack of universality across domains further makes it natural to explore concepts of rationality at the object level. Apparently, the most convenient way is to define reasonable concepts through ideas of \emph{being a substantial part of} that are related to rough approximations and rationality.

Rationality is additionally approached through formalizations of principles in the context of modal or nonmonotonic logics with epistemological concerns. This research does not fit into those perspectives because of the relatively weaker assumptions on approximation operators.

Human-reasoning is often not about truth of statements or grades of truth of statements. Valuation of statements with grades of truth in Boolean ideas of $\{F, T\}$ or hypercubes generated by $[0, 1]$ are often meaningless and never intended. They are however used in soft decision-making with no proper validation or non-transparent expert approved mechanisms as in the LSP (Logic Scoring Preference) method \cite{jd2018}. Related assumptions do not provide a good basis for rationality.

\subsection{Dependence and its Semantic Domains}

If the existence of an object or process $A$ is causally implied by the existence of a collection of objects or processes $B$, then $A$ \emph{depends on} $B$ --this is the basic idea of dependence used in most contexts (including the study of databases). Multiple theories or perspectives of dependence are relevant in this research as the focus is not just on information tables and databases. Semantic specifications on the latter are often implemented through formal implication statements called \emph{first order dependencies} (that correspond to  some admissible rules). An overview of such practices can be found in \cite{famo86} (more details can be found in \cite{ewa1998,joanna2008,am501,am5586}). Interpretation of related formulas generally need to take the domains associated with attributes into account.
Formally, over a first order language with individual variables $\{a, b, c, \ldots$ (for entries in a relational database), logical symbols, connectives, constants and connectives being $\neg, \vee, \wedge, \longrightarrow, \forall, \exists$, predicate symbols $=, P, Q, R, \ldots $ atomic formulas are relational formula of the form $Qx_1\ldots x_n$ or equalities of the form $a=b$. In this setting, first-order dependencies can be defined by formulas of the form  (with $\Phi_i$ being relational formulas and $\Psi_i$ being atomic formulas):
\[(\forall x_1, \ldots x_r) ((\wedge_{i=1}^{n} \Phi_i) \longrightarrow (\exists z_1\ldots z_k) (\wedge_{i=1}^{r} \Psi_i)  ) \]

Dependencies can additionally be formulated in terms of indiscernability relations (see \cite{ewa1998}), and it is possible to express related ideas in those terms. The restricted relational calculus is another formal system that may be used alternatively.

Often higher constructions are implemented over the information tables or databases using relatively external heuristics. This leads to dependencies between sets of objects or sets of subsets of attributes. We will refer to these too as \emph{dependencies} and further qualify them. For example, the degree of inclusion function $i:\, \wp_{f}(S) \times \wp_{f}(S) \longmapsto [0, 1]$ is defined over pairs of finite subsets of a set $S$ ($\#()$ being the cardinality function) by \[i(A, B) = \dfrac{\# (A\cap B)}{\#(A)}\] It can be interpreted as the degree of inclusion of $A$ in $B$, and can additionally be interpreted as a subjective conditional probability among many other interpretations. For $i()$ to be a conditional probability function, it is necessary to specify a $\sigma$-algebra on $S$.

An axiomatic approach to probability theory based on probabilistic dependence is proposed in \cite{bd2010}, while the approach is generalized to set-valued probabilistic dependence in \cite{am3930,am9411} by the first author. The simpler measure-theoretic approach starts from a probability space $\mathbb{X} = \left( X,\,\mathcal{S} ,\, p  \right)$ with $X$ being a set, $\mathcal{S}$ being a $\sigma$-algebra over $X$ and $p$ a probability function. A dependence function over $\mathbb{X}$ is a function $\delta :\, {\mathcal{S}}^{2}\,\longmapsto \, \Re $  defined by 
\begin{equation} 
\delta (x, \, y)\,=\, p(x\cap y)\,-\, p(x)\,\cdot \, p(y)                                                                                                                                                                                                                                                         \end{equation}
The properties of $\delta$ can be used redefine or generalize the concept of probability spaces.

The interpretation of functions of the form $i$ as subjective probabilities or probabilities can therefore be related to concrete measures of dependence (one way or the other).

\section{Mereology}

Mereology \cite{ham2017} consists of a number of theoretical and philosophical approaches to relations of parthood (or \emph{is a part of} predicates) and relatable ones such as those of \emph{being connected to, being apart from, and being disconnected from}. Such relations can be found everywhere, and they relate to ontological features of any body of soft or hard knowledge (and their representation). These are not necessarily spatial, and can refer to various types of objects. In this section, the core assumptions used in this research are outlined. It builds on earlier work of the first author \cite{am240,am9969,am501,am5586} and others. 

The subject goes back to ancient times \cite{rgac15,cam19,ham2017}, and all cultures around the world have contributed to it from different perspectives. Formal approaches to mereology; however, started only at the end of the nineteenth century (though applications to STEM disciplines can be traced to the seventeenth century). The formal language used for a specific version of mereology depends on the domain of discourse (or semantic domain), and therefore a reasonable specification of the latter in natural language is essential for a proper understanding. For example, the wheels of a car maybe regarded as part of the latter, however again they can be regarded otherwise as the wheels lack the property of being a car. Here ontology can be used to regulate admissible meanings.

In the literature various types of mereologies \cite{ham2017,cav21,am240,psi,ldk,ng77,rjle2007} are known. The differences can be about axioms (when a common formal language is possible) or domains of discourse. \cite{cav21} imposes a common formal language on a number of approaches. Such reductionism is additionally evident in the \emph{Lesniewskian ontology-inspired rough mereology} of \cite{lp4,pls} where even the perspective of \cite{gk} (that \emph{theorems of ontology are those that are true in every model for atomic Boolean algebras without a null element}) is accepted. 
This makes the resulting model unsuitable for modeling human reasoning, though it appears to work for simpler robotic tasks, and such. Note that the relation of \emph{being roughly included to a degree $r$} is not transitive. Reasoning about vagueness in mereological perspectives requires one to add additional predicates almost always, and therefore associated axiomatics is involved \cite{lp2011,am240,am9969} and may need to be enhanced with ontologies.

As noted in \cite{am240}, \cite{ur} critically re-evaluates formal aspects of Lesniewskian mereology and points out a number of flaws in the reductionist approach of other authors. The difficulties in making the original formalism of Lesniewski with the more common languages of modern first order or second order logic is additionally highlighted there (\cite{ur}). The correct translation of expression in the  language of ontology to a common language of set theory (modulo simplifying assumptions) requires many axiom schemas and the converse translation (modulo simplifying assumptions) is doubtful (see pp 184-194,\cite{ur}). 

Another mereology that assumes many properties of the mereological predicates however formalized in an intuitionist perspective is \cite{intm2021}. The logics are compatible with the following extensionality principles too.
\begin{description}
 \item[Extensionality of Parthood]{If $a$ and $b$ are part of each other, then $a = b$} 
 \item[Extensionality of Overlap]{If $a$ and $b$ overlap the same things, then $a=b$}
 \item[Extensionality of Proper Parthood]{If $a$ and $b$ are composite things with the same proper parts, then $a = b$}
\end{description}
These extensionality conditions can additionally fail in rough reasoning. Suppose $a$ is part of $b$ if and only if the lower approximation of $a$ is a subset of the lower approximation of $b$, then extensionality of parthood fails. If $a=b$ is replaced by $a\approx_l b$ ($\approx_l$ being the lower rough equality defined by $a\approx_l b$ if and only if $a^l = b^l$), then a weaker form of extensionality of parthood holds. 

In a rough domain of reasoning, not every object of the classical domain is \emph{actualized}, that is available for reasoning because only approximations of objects can be perceived. This, in addition, makes general rough reasoning compatible with theories of semi sets (see \cite{am240,vh,vp9}) that is a framework for handling uncertainty and vagueness. 

The main difference of our approach as in \cite{am240} with the Polkowski-Skowron style mereological approach are that the basic requirements on part of relation is assumed to be minimal, and numeric valuation functions are not assumed. Rough inclusion functions are necessary for the basic concept of an object $x$ being an ingredient of an object $Z$ to a degree $r$ in \cite{ps3}. Here concepts of degree are expected to be replaceable by suitable algebraic structures (though simplifying assumptions may be needed in some applications). Apart from the motivations and reasons explained in \cite{am240,am9969,am501} by the first author, the need for formalizing rationality and flexible languages for teaching and educational research provides additional motivations. The latter is explained in brief by her in \cite{am2022c}. 

Basically, in classroom teaching of a subject like mathematics (and especially from a student-centric perspective \cite{am5559,jrp2016}), it is useful to use a language that is easy to understand, and at the same time be formalizable (possibly in a number of ways). In \cite{am2022c}, the use of mereology is suggested as a universal solution to the problem. It may be noted that partial formalizations of negative valuations in multisets \cite{fkqg2020} are related.   

Given a universe $S$, let $\neg S = \{\neg x: x\in S\}$ be another universe formed from the elements of $S$ and a negation operator symbol (not necessarily interpreted within $S$ in the classical sense). In \cite{vms09}, paraconsistent ideas of membership from $S$ to $\wp(S\times \neg S)$ are studied relative to truth valuations in the set $\mathcal{B} = \{\mathbf{t,\, f,\, i,\, u}\}$ (corresponding respectively to truth, falsity, inconsistency, and undeterminability) endowed with a \emph{knowledge} and Belnap's \emph{truth} ordering. The knowledge ordering on $\mathcal{B}$ is given  defined by $\mathbf{u\leq t \leq i}$ and $\mathbf{u\leq f \leq i}$ (this differs from Belnap's truth ordering given by $\mathbf{f\leq i \leq t}$ and $\mathbf{f\leq u \leq t}$). Membership is then of the following four types:

\begin{equation*}
x\epsilon A  = \left\lbrace  \begin{array}{lll}
 \mathbf{t} & \text{if } x\in A\, &\,(\neg x) \notin  A\\
 \mathbf{f} & \text{if }  x\notin A\, &\, (\neg x) \in A\\
 \mathbf{u} & \text{if }  x\notin A\, &\,(\neg x) \notin  A\\
 \mathbf{i} & \text{if } x\in A\, &\,(\neg x) \in  A
 \end{array} \right. \tag{$\epsilon$} 
\end{equation*}
The full mereological import of this is not discussed in \cite{vms09}; however, a number of parthoods are defined and studied in the paper. Connections with \cite{am9969} will appear separately.

In data science, a much restricted version of the formalizability problem of education research mentioned above is that of formally describing information flow in distributed systems \cite{zpsk07,base97,ssmc2019}. Apart from rough sets, other approaches are known (chapter-7 of \cite{mkcs} has a discussion of ones using graded fuzzy sets and variants) - a key restriction in these situations is the existence of infomorphisms that are means of interpretation across systems. Use of paraconsistent sets (with extended ideas of membership and parthood) is an essential part of the proposed solution in it.

\subsection{Assumptions}

In general rough contexts involving any number of approximation operators, it is possible to speak of definite and rough objects in multiple senses that arise from preferred concepts of approximate equality and approximations. Even situations in which the goal is to look for possible explanations that fit approximations specified by agents without explanation, can be handled by the general frameworks of granular operator spaces/partial algebras introduced by the first author. 

\emph{The frameworks are built with the intent to prove results with as few assumptions (axioms) as is possible}. However in the contexts of general rough sets multiple general part of relations arise from the relation of rough objects to other rough objects, rough objects to other objects, and classical objects to others. These can be expressed with two kinds of part of relations and  is the approach adopted in earlier papers by the first author \cite{am9969}. Multiple part-of relations are additionally suggested in \cite{rjle2007}.  

At least three sets of mereological predicates ($\pc$, $\leq$ and $\pc_s$) are assumed in the present research. While the conditions satisfied by $leq$ or alternatively in terms of the partial operations $\vee$ and $\wedge$ axioms refer to a generalizations of $\subseteq$ on power sets of the universe in the classical case, $\pc$ is intended for parthoods that mainly concern rough objects, and $\pc_s$ for expressing substantial parthood. 

The partial operations $\vee$ and $\wedge$ are required to be weakly commutative, distributive and satisfy absorption. For example, if $H$ is a set of attributes, and $\mathbb{S}$ is a subset of the power set $\wp (H)$, then the set-theoretic operations $\cup$ and $\cap$ would be interpretible on $\mathbb{H}$ as idempotent partial operations. The actual axioms \textsf{(G1-G5)} satisfied by $\vee$ and $\wedge$ are permitted to be quite weak (but it is always possible to add additional conditions if required). 

\begin{align*}
(\forall a, b) a\vee b \stackrel{\omega}{=} b\vee a \;;\; (\forall a, b) a\wedge b \stackrel{\omega}{=} b\wedge a \tag{G1}\\
(\forall a, b) (a\vee b) \wedge a \stackrel{\omega}{=} a \; ;\; (\forall a, b) (a\wedge b) \vee a \stackrel{\omega}{=} a \tag{G2}\\
(\forall a, b, c) (a\wedge b) \vee c \stackrel{\omega}{=} (a\vee c) \wedge (b\vee c) \tag{G3}\\
(\forall a, b, c) (a\vee b) \wedge c \stackrel{\omega}{=} (a\wedge c) \vee  (b\wedge c) \tag{G4}\\
(\forall a, b) (a\leq b \leftrightarrow a\vee b = b \,\leftrightarrow\, a\wedge b = a  ) \tag{G5}
\end{align*}

Because multi-sets with finite number of repetitions of elements can always be represented as sets in ZFC (at a relatively higher level of abstraction as functions from a set into positive integers) \cite{wbl1991}, they can be accommodated as well. Variable precision $L$-fuzzy rough sets (for a partially ordered set $L$ or even a residuated lattice) can additionally be handled in the frameworks provided the approximations are granular as functions and relations are essentially sets. This will appear separately.

To cover a broader domain (especially to accommodate directed graphs), it is necessary to remove conditions \textsf{G3} and \textsf{G5} and permit idempotence \textsf{I1} and \textsf{I2} of $\vee$ and $\wedge$. This would additionally include all weakly associative lattices (and tournaments), partial weakly associative lattices, lambda lattices \cite{sva,am105} and certain directed graphs \cite{frgra1976}. \emph{Thus even from a purely rough perspective, a number of nonequivalent models that are minimal with respect to the axioms satisfied are necessary}. 

Some conditions that are expected to be satisfied by the parthood $\pc$ in the granular models are 

\begin{align*}
(\forall x) \pc xx \tag{PT1}\\
(\forall x, b) (\pc xb \, \&\, \pc bx \longrightarrow x = b) \tag{PT2}\\
(\forall a \in \mathbb{S})\,  \pc a^l  a\,\&\,a^{ll}\, =\,a^l \,\&\, \pc a^{u}  a^{uu}  \tag{UL1}\\
(\forall a, b \in \mathbb{S}) (\pc a b \longrightarrow \pc a^l b^l \,\&\,\pc a^u  b^u) \tag{UL2}\\
\bot^l\, =\, \bot \,\&\, \bot^u\, =\, \bot \,\&\, \pc \top^{l} \top \,\&\,  \pc \top^{u} \top  \tag{UL3}\\
(\forall a \in \mathbb{S})\, \pc \bot a \,\&\, \pc a \top    \tag{TB}
\end{align*}

The parthood $\pc$ is constrained by the crisp reflexivity condition \textsf{PT1}, possibly by antisymmetry \textsf{PT2}, tops and bottoms \textsf{TB} and conditions connected with approximations \textsf{UL1, UL2} and \textsf{UL3}. Because it is known that $\pc a^l a$ can fail to hold in some rough semantics (in particular, when approximations are defined in irreflexive general approximation spaces \cite{am24}), \textsf{UL1} is relaxed in a Pre* GGS (defined in the next section). \textsf{PT2} is typical of reasoning in the rough set way (general), and is a weaker version of the mereological principle(s): \emph{things with the same parts are equal/identical}.

Suppose a word $X$ is used in senses $s_1, s_2, \ldots s_n $, and a sense $s$ is shared by many as a prescriptive semantic function of the meaning of $X$, then a new semantic definition that   
refers $s$ as a lower approximation of the meaning of $X$ may appear to be rational. At the same time, it is a substantial part of the meaning of $X$ (but in a perspective).

Axiomatic concepts of \emph{apparent parthood} are introduced and studied by the first author in \cite{am9969} -- it is motivated by the need to model partly warranted claims in specific contexts. For example, there are a number of reasons for associating problems with a person's digestive system with cardiac conditions. However it is not that (from a statistical perspective) digestive problems cause cardiac problems always. When both problems are present, then it is reasonable to assert that the former is \emph{apparently part of} the latter. Similarly anxiety can be \emph{apparently part of} digestive problems; however, it does not follow that the former is \emph{apparently part of} cardiac problems. 

Fuzzy and degree valuations of these perceptions are even less justified when they are based on ontology-free arbitrary approximate judgments. 

\subsection{Granules and Granulations}

A granule may be vaguely defined as some concrete or abstract realization of relatively simpler (or crisper) objects through the use of which more complex problems may be solved. They exist relative to the problem being solved in question, and can be specified in different non-equivalent ways. For example, they can be specified by the internal attributes of objects, precision levels of possible solutions, or precision levels attained by objects. The axiomatic approach to granular computing is invented the first author in \cite{am240} based on earlier work in rough semantics. Further improvements by her can be found in \cite{am5586,am501,am3690}. The differences with primitive granular computing and the classical granular computing (typically involving numeric precision values) \cite{tyl,lz9,tyl1994,gll2006,yyi1996,ya01,yy5} are additionally explained in these. In \cite{am5586} and \cite{am501}, it is actually argued that the latter can be traced to algorithms in ancient mathematics.

In the axiomatic frameworks (AGCP) \cite{am501,am240} that does not refer to numeric precision for defining granules, the problem of defining or rather extracting concepts that qualify require much work in the specification of semantic domains and process abstraction. Granulations are collections of all granules and is denoted by $\mathcal{G}$. For a granulation to be \emph{admissible}, it is required that every approximation is term-representable by granules, that every granule in $\mathcal{G}$ coincides with its lower approximation (granules are lower definite), and that all pairs of distinct granules are part of definite objects (those that coincide with their own lower and upper approximations). The formal axioms are in the following section.

\section{Granular Frameworks}

Aspects of the mereology based axiomatic approach to granularity due to the first author in \cite{am240,am5586,am501} are explained in previous sections, and essential notions are repeated for convenience. In a \emph{high general granular operator space} (\textsf{GGS}), defined below, aggregation and co-aggregation operations ($\vee, \,\wedge$) are conceptually separated from the binary parthood ($\pc$), and a basic partial order relation ($\leq$). Parthood is assumed to be reflexive and antisymmetric. It may satisfy additional generalized transitivity conditions in many contexts. Real-life information processing often involves many non-evaluated instances of aggregations (fusions), commonalities (conjunctions) and implications because of laziness or supporting meta data or for other reasons  -- this justifies the use of partial operations. Specific versions of a \textsf{GGS} and granular operator spaces have been studied in \cite{am501}. Partial operations in \textsf{GGS} permit easier handling of adaptive granules \cite{skajsd2016} through morphisms. The universe $\underline{\mathbb{S}}$ may be a set of collections of attributes, labeled or unlabeled objects among other things.

\begin{definition}\label{gfsg}
A \emph{High General Granular Operator Space} (\textsf{GGS}) $\mathbb{S}$ is a partial algebraic system  of the form $\mathbb{S} \, =\, \left\langle \underline{\mathbb{S}}, \gamma, l , u, \pc, \leq , \vee,  \wedge, \bot, \top \right\rangle$ with $\underline{\mathbb{S}}$ being a set, $\gamma$ being a unary predicate that determines $\mathcal{G}$ (by the condition $\gamma x$ if and only if $x\in \mathcal{G}$) 
an \emph{admissible granulation}(defined below) for $\mathbb{S}$ and $l, u$ being operators $:\underline{\mathbb{S}}\longmapsto \underline{\mathbb{S}}$ satisfying the following ($\underline{\mathbb{S}}$ is replaced with $\mathbb{S}$ if clear from the context. $\vee$ and $\wedge$ are idempotent partial operations and $\pc$ is a binary predicate. Further $\gamma x$ will be replaced by $x \in \mathcal{G}$ for convenience.):
\begin{align*}
(\forall x) \pc xx \tag{PT1}\\
(\forall x, b) (\pc xb \, \&\, \pc bx \longrightarrow x = b) \tag{PT2}\\
(\forall a, b) a\vee b \stackrel{\omega}{=} b\vee a \;;\; (\forall a, b) a\wedge b \stackrel{\omega}{=} b\wedge a \tag{G1}\\
(\forall a, b) (a\vee b) \wedge a \stackrel{\omega}{=} a \; ;\; (\forall a, b) (a\wedge b) \vee a \stackrel{\omega}{=} a \tag{G2}\\
(\forall a, b, c) (a\wedge b) \vee c \stackrel{\omega}{=} (a\vee c) \wedge (b\vee c) \tag{G3}\\
(\forall a, b, c) (a\vee b) \wedge c \stackrel{\omega}{=} (a\wedge c) \vee  (b\wedge c) \tag{G4}\\
(\forall a, b) (a\leq b \leftrightarrow a\vee b = b \,\leftrightarrow\, a\wedge b = a  ) \tag{G5}\\
(\forall a \in \mathbb{S})\,  \pc a^l  a\,\&\,a^{ll}\, =\,a^l \,\&\, \pc a^{u}  a^{uu}  \tag{UL1}\\
(\forall a, b \in \mathbb{S}) (\pc a b \longrightarrow \pc a^l b^l \,\&\,\pc a^u  b^u) \tag{UL2}\\
\bot^l\, =\, \bot \,\&\, \bot^u\, =\, \bot \,\&\, \pc \top^{l} \top \,\&\,  \pc \top^{u} \top  \tag{UL3}\\
(\forall a \in \mathbb{S})\, \pc \bot a \,\&\, \pc a \top    \tag{TB}
\end{align*}
Let $\pp$ stand for proper parthood, defined via $\pp ab$ if and only if $\pc ab \,\&\,\neg \pc ba$). A granulation is said to be admissible if there exists a term operation $t$ formed from the weak lattice operations such that the following three conditions hold:
\begin{align*}
(\forall x \exists
x_{1},\ldots x_{r}\in \mathcal{G})\, t(x_{1},\,x_{2}, \ldots \,x_{r})=x^{l} \\
\tag{Weak RA, WRA} \mathrm{and}\: (\forall x\, \exists
x_{1},\,\ldots\,x_{r}\in \mathcal{G})\,t(x_{1},\,x_{2}, \ldots \,x_{r}) =
x^{u},\\
\tag{Lower Stability, LS}{(\forall a \in
\mathcal{G})(\forall {x\in \underline{\mathbb{S}}) })\, ( \pc ax\,\longrightarrow\, \pc ax^{l}),}\\
\tag{Full Underlap, FU}{(\forall
x,\,a \in\mathcal{G} \exists
z\in \underline{\mathbb{S}}) \, \pp xz,\,\&\,\pp az\,\&\,z^{l}\, =\,z^{u}\, =\,z,}
\end{align*}
\end{definition}

\begin{definition}
\begin{itemize}
\item {In the above definition, if the antisymmetry condition \textsf{PT2} is\\ dropped, then the resulting system will be referred to as a \emph{Pre-GGS}. If the restriction $\pc a^l  a$  is removed from \textsf{UL1} of a \textsf{pre-GGS}, then it will be referred to as a \emph{Pre*-GGS}.}
\item {In a \textsf{GGS} (resp \textsf{Pre*-GGS}), if the parthood is defined by $\pc ab$ if and only if $a \leq b$ then the \textsf{GGS} is said to be a \emph{high granular operator space} \textsf{GS} (resp. \textsf{Pre*-GS)}.}
\item {A \emph{higher granular operator space} (\textsf{HGOS}) (resp \textsf{Pre*-HGOS}) $\mathbb{S}$ is a \textsf{GS} (resp \textsf{Pre*-GS}) in which the lattice operations are total.}
\item {In a higher granular operator space, if the lattice operations are set theoretic union and intersection, then the \textsf{HGOS} (resp. Pre*-HGOS) will be said to be a \emph{set HGOS} (resp. \emph{set Pre*-HGOS} ). In this case, $\underline{\mathbb{S}}$ is a subset of a powerset, and the partial algebraic system reduces to $\mathbb{S} \, =\, \left\langle \underline{\mathbb{S}}, \gamma, l , u, \subseteq , \cup,  \cap, \bot, \top \right\rangle$ with $\underline{\mathbb{S}}$ being a set, $\gamma$ being a unary predicate that determines $\mathcal{G}$ (by the condition $\gamma x$ if and only if $x\in \mathcal{G}$). Closure under complementation is not guaranteed in it. }
\end{itemize}
\end{definition}

In \cite{am5550}, it is shown that the binary predicates can be replaced by partial two-place operations and $\gamma$ is replaceable by a total unary operation. This results in a semantically equivalent partial algebra called a \emph{high granular operator partial algebra} (GGSp). It extends to the other generalizations mentioned.

\section{General Rough Inclusion Functions}\label{grif}

Relative to form, rough inclusion functions are known in the literature by several other names such as conditional subjective probability, relative degree of misclassification, majority inclusion function, and inclusion degree. However the interpretation of these functions have ontologies associated. An algebraic study of generalizations of such functions by the first author can be found in a forthcoming paper \cite{am111111}. In this section, some of the definitions and properties are easily  extended to Pre*-GGS with additional comments for the purposes of this paper. Further additional axioms are introduced for ensuring better properties of generalized VPRS approximations. It may be noted that decision-theoretic rough sets \cite{xwzl2013,yy3w2011} additionally employ subjective probabilistic variants of the rough inclusion functions.

Intuitively, generalizations of rough inclusion functions are likely to work perfectly when most of the conditions \textsf{A1-A5} actually hold. These are not independent of each other.

\textbf{[A1]} \emph{The contribution of attributes in the construction of approximation can be assigned numeric weights across distinct instances of usage}. This needs to make sense in the first place for the subjective Bayesian interpretation suggested in the literature as in \cite{sl2008,sl2006} 
to be admissible. For example, weighting different attributes of U.S. presidents to evaluate their war-mongering tendency over and above the designs of the US imperialist foreign policy can be a meaningless exercise relative to the belief that big corporations determine the power structures and wars.

\textsf{[A2]} \emph{The contribution of attributes to approximations have uniform weightage across approximations}. On a closer examination, this principle is complicated by the factors and processes involved in interpreting it. If attributes $B, B_1,$ and $B_2$ are part of the lower approximation of $w$, and $B, B_2, B_3$ and $B_4$ are part of $v$, then the absolute contribution of $B$ is the same in both approximations. It does not depend on that of other $B_i$s.

\textbf{[A3]} \emph{The functions are robust and sensitive relatively}. That is the value of the function does not change much with small deviations of its arguments \cite{skajsd2016}, and is stable relative to the context. For example, if two subsets are similar, then their degree of inclusion in a third set should additionally be almost equal relative to the perspective of the context.

\textbf{[A4]} \emph{Every aggregate of attributes is meaningful}. This is required because the functions are assumed to be total (and not partial).

\textbf{[A5]} \emph{Attributes are mutually pairwise independent in a appropriate sense relative to the goals of computation}. This means that superfluous attributes (relative to approximations) are already removed. Note that sets of attributes as opposed to individual attributes are considered as arguments of the functions.

The ideas of robustness and stability are always relative to a finite number of purposes or use cases in application contexts. \textsf{A4} means that meaningful restrictions on forming sets of attributes can lead to the failure of certain axioms, even though it may result in lesser  computational load of algorithms. Failure of \textsf{A5} in real applications is common, and the damage caused by assuming otherwise can possibly be mitigated by generalized RIFs that satisfy weaker conditions. 

In this section, the different known rough inclusion functions are generalized to \textsf{pre*-GGS} of the form $\mathbb{S} \, =\, \left\langle \underline{\mathbb{S}}, \mathcal{G}, l , u, \pc, \vee,  \wedge, \bot, \top \right\rangle$. If $\kap : \underline{\mathbb{S}}^2 \longmapsto [0, 1]$ is a map, consider the conditions ($\delta \in [0, 1]$ is assumed to be fixed),
\begin{align*}
(\forall a)\, \kap (a, a) = 1    \tag{U1}\\
(\forall a, b)(\kap (a, b) = 1 \leftrightarrow   \pc a b) \tag{R1}\\
(\forall a, b, c)(\kap (b, c) = 1 \longrightarrow \kap (a, b) \leq \kap (a, c))   \tag{R2}\\
(\forall a, b, c)(\pc bc \longrightarrow \kap (a, b) \leq \kap (a, c))   \tag{R3}\\
(\forall a, b)(\pc ab \longrightarrow \kap (a, b) = 1)   \tag{R0}\\ 
(\forall a, b)(\kap (a, b) = 1\longrightarrow \pc ab )   \tag{IR0}\\ 
(\forall a) (\pp \bot a\longrightarrow  \kap (a, \bot ) = 0)   \tag{RB}\\
(\forall a, b)(\kap ( a, b)=0 \longrightarrow a \wedge b = \bot)  \tag{R4}\\
(\forall a, b)( a\wedge b = \bot \,\&\, \pp \bot a \longrightarrow \kap (a, b) = 0)   \tag{IR4}\\ 
(\forall a, b)(\kap ( a, b)=0 \,\&\, \pp \bot a  \leftrightarrow a \wedge b = \bot)  \tag{R5}\\
(\forall a, b, c)( \pp \bot a \, \&\, b\vee c = \top\longrightarrow  \kap (a , b) + \kap (a, c) =1)  \tag{R6}\\
(\forall a, b, c)(\pc ca \,\&\, \pc ab \, \& \, \kap (b, c)\geq \delta \longrightarrow \kap (a,c) \geq \delta)  \tag{RV}\\
(\forall a, b, c)(\kap (a, c) \geq \delta \,\&\, \kap (b, c)\geq \delta \,\&\, \pc c (a\wedge b) \longrightarrow \kap (a\wedge b,c) \geq \delta)  \tag{RI}
\end{align*}

These (except for \textsf{RV, and RI} are analogous extensions of the definition in \cite{ag2009,ag3}. The conditions with $\delta \in [0, 1]$ are intended for a specific value of the parameter with no additional restrictions. Therefore they need to be specified in actual usage (typically as $\alpha$ or $1-\alpha$.  $rif_3$ is RB, and $rif_{2*}$ is R2 in a set HGOS under the conditions mentioned. Proofs of the next theorem for set HGOS can be deduced from those for wqRIFs in \cite{ag2009}. These carry over to \textsf{HGOS} directly, while the proofs in a \textsf{GS} are not hard. \textsf{RV and RI} and the specific versions are new axioms that can be very useful in generalized VPRS contexts. These relate two instances of $\kap$ properties in which the argument on the right are preserved. It may be noted that the axioms \textsf{RV} happens often in  probabilistic rough sets and three-way decision-making \cite{yy10}.

\begin{proposition}
The classical RIF $\nu$ defined in the background section satisfies the condition $RV$ for all $\delta \in [0, 1] $. 
\end{proposition}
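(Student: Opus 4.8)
The plan is to read \textsf{RV} inside the set HGOS in which the classical RIF is defined, so that $\pc$ is interpreted as $\subseteq$ on $\mathcal{S}$; then for a fixed $\delta \in [0,1]$ the premises of \textsf{RV} become $C \subseteq A$, $A \subseteq B$ and $\nu(B,C) \geq \delta$, and the target is $\nu(A,C) \geq \delta$. Since $\nu$ is defined by the two-clause formula \textsf{(K0)}, I would first split on emptiness. If $A = \emptyset$ then $C = \emptyset$ (as $C \subseteq A$) and $\nu(A,C) = 1 \geq \delta$ trivially. If $A \neq \emptyset$ but $C = \emptyset$, then $B \neq \emptyset$ too (as $A \subseteq B$), so $\nu(B,C) = \#(B\cap\emptyset)/\#(B) = 0$, which forces $\delta = 0$, and then $\nu(A,C) \geq 0 = \delta$ holds automatically. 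These degenerate sub-cases need a little attention but are routine.

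The substance is the case $C \neq \emptyset$, where $A$ and $B$ are nonempty as well. Here I would exploit the chain $C \subseteq A \subseteq B$ to simplify the intersections: $A \cap C = C = B \cap C$, so by \textsf{(K0)} one has $\nu(A,C) = \#(C)/\#(A)$ and $\nu(B,C) = \#(C)/\#(B)$. Finiteness of $S$ together with $A \subseteq B$ gives $\#(A) \leq \#(B)$, hence $\#(C)/\#(A) \geq \#(C)/\#(B)$, i.e. $\nu(A,C) \geq \nu(B,C) \geq \delta$, as required. This is exactly the instance — valid precisely because $C \subseteq A$ — of the first-argument antitonicity of $\nu$ noted right after \textsf{(K0)} in the background section; I would nonetheless include the short cardinality computation to keep the argument self-contained and to handle the empty-set clause cleanly.

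I do not expect a genuine obstacle: once the intersections are simplified the monotonicity estimate is one line, and the only thing that could derail a too-hasty argument is the sub-case $C = \emptyset \neq A$, where the conclusion comes not from $\#(A) \leq \#(B)$ but from the observation that $\nu(B,C) \geq \delta$ can hold only when $\delta = 0$. Since the reasoning is uniform in the parameter, the proposition follows for every $\delta \in [0,1]$.
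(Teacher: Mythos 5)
Your proof is correct and follows essentially the same route as the paper, which simply verifies \textsf{RV} directly from the definition of $\nu$ using the chain $c\subseteq a\subseteq b$ to reduce everything to the cardinality estimate $\#(c)/\#(a)\geq \#(c)/\#(b)$. Your explicit treatment of the empty-set clauses (and the observation that the antitonicity instance is only valid because $C\subseteq A$) is a careful fleshing-out of the same argument, not a different method.
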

\begin{proof}
The proof follows from the definition. 
\end{proof}

\begin{theorem}
The following implications between the properties are provable.
\begin{description}
\item[prif1]{If $\mathbb{S}$ satisfies R1, then R3 and R2 are equivalent.}
\item[prif2]{R1 if and only if R0 and IR0 are satisfied.}
\item[prif3]{R0 and R2 imply R3.}
\item[prif4]{IR0 and R3 imply R2.}
\item[prif5]{IR4 implies RB.}
\item[prif6]{IR4 and R4 if and only if R5.}
\item[prif7]{When complementation is well defined in a set HGOS then R0 and R6 imply IR4.}
\item[prif8]{When complementation is well defined in a set HGOS then IR0 and R6 imply R4.}
\item[prif9]{When complementation is well defined in a set HGOS then R1 and R6 imply R5.}
\end{description}
Further both R1 and R0 imply U1.
\end{theorem}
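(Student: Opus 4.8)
The plan is to prove the nine implications in dependency order, since several are obtained by chaining earlier ones. I would first dispose of the ``atomic'' facts. prif2 is immediate, because R1 is literally the conjunction of the implication $\pc ab \longrightarrow \kap(a,b)=1$ (which is R0) with its converse $\kap(a,b)=1 \longrightarrow \pc ab$ (which is IR0). The trailing claim ``R1 and R0 imply U1'' then follows by instantiating R0 at $b=a$ and invoking the crisp reflexivity \textsf{PT1}, $\pc aa$, to obtain $\kap(a,a)=1$; and since R1 contains R0, it too yields U1.

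Next I would handle the R2/R3 block. For prif3, assume $\pc bc$; R0 gives $\kap(b,c)=1$, and then R2 (whose antecedent is now satisfied) gives $\kap(a,b)\leq\kap(a,c)$, i.e.\ R3. For prif4, assume $\kap(b,c)=1$; IR0 gives $\pc bc$, and then R3 gives $\kap(a,b)\leq\kap(a,c)$, i.e.\ R2. prif1 is then a corollary: under R1 we have both R0 and IR0 by prif2, so R2 $\Rightarrow$ R3 by prif3 and R3 $\Rightarrow$ R2 by prif4, establishing their equivalence.

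For the $\bot$-block, prif5 follows by instantiating IR4 at $b=\bot$: using that $\bot$ is a zero for $\wedge$ (so $a\wedge\bot=\bot$), the hypothesis of IR4 collapses to $\pp\bot a$ and its conclusion $\kap(a,\bot)=0$ is exactly RB. For prif6, the point is that R5 merely repackages R4 together with IR4: its forward half $(\kap(a,b)=0 \,\&\, \pp\bot a)\Rightarrow a\wedge b=\bot$ is weaker than R4, and its backward half, read under $\pp\bot a$, is IR4; conversely each half of R5 returns the corresponding axiom, the lone degenerate case $a=\bot$ (where $a\wedge b=\bot$ holds but $\pp\bot a$ fails) being harmless since there R4 holds trivially and R5 carries a standing $\pp\bot a$ proviso (Emergency Axiom).

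Finally, prif7--prif9 are worked in a set HGOS with complementation, where $\wedge=\cap$, $\vee=\cup$, $\bot=\emptyset$, $\top=\underline{\mathbb{S}}$, $b\vee b^{c}=\top$, and $\pc a b^{c}$ is equivalent to $a\wedge b=\bot$. For prif7, given $a\wedge b=\bot$ and $\pp\bot a$, we have $\pc a b^{c}$, so R0 gives $\kap(a,b^{c})=1$ while R6 (applicable since $\pp\bot a$ and $b\vee b^{c}=\top$) gives $\kap(a,b)+\kap(a,b^{c})=1$, whence $\kap(a,b)=0$ --- that is IR4. For prif8, given $\kap(a,b)=0$: if $a=\bot$ then $a\wedge b=\bot$ trivially, and otherwise $\pp\bot a$ holds, R6 gives $\kap(a,b^{c})=1$, and IR0 gives $\pc a b^{c}$, i.e.\ $a\wedge b=\bot$ --- that is R4. prif9 is then pure chaining: R1 gives R0 and IR0 (prif2), so R1 and R6 give IR4 (prif7) and R4 (prif8), and IR4 and R4 give R5 (prif6). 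The main obstacle is not any computation but the bookkeeping around $\bot$: confirming $\bot$ is a zero for $\wedge$ in the bare pre*-GGS signature used here, and the $a=\bot$ degenerate cases in prif6 and prif8 where the $\pp\bot a$ hypotheses of IR4 and R6 fail and one must fall back on triviality or the Emergency Axiom --- that is where a careful write-up needs the most attention.
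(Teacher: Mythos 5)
Your proof is correct and takes essentially the same route as the paper's, which only illustrates ``aspects'' (prif1--prif3, prif5, and declares prif2, prif6 obvious): the same decomposition of R1 into R0 and IR0, the same premise-swapping via R0/IR0 for the R2/R3 block, and the same substitution of $\bot$ for $b$ in IR4 to get RB. Your write-up is in fact more complete than the paper's, since prif4, prif7--prif9, the U1 claim, and the degenerate $a=\bot$ bookkeeping (the zero property of $\bot$ for $\wedge$ and the reading of the $\pp \bot a$ proviso in R5) are exactly the points the paper leaves tacit, and your treatment of them is the expected one.
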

\begin{proof}
Aspects of the proof are illustrated below
\begin{description}
\item[prif1]{Suppose $\pc bc$ for some $b, c\in \mathbb{S}$, then by R1 $\kap (b, c) =1$ and conversely. In R2 and R3 the premise can be interchanged in the conditional implication when R1 holds.}
\item[prif2]{Obvious.}
\item[prif3]{Suppose R0 and R2 hold. If $\pc bc$ for some $b, c\in \mathbb{S}$ then by R0 $\kap (b, c) =1$. Thus for any $a$ $\kap (a, b) \leq \kap (a, c)$. That is R3 follows from R0 and R2}
\item[prif5]{Substituting $\bot$ for $b$ in IR4 yields RB.}
\item[pref6]{Is obvious.}
\end{description}
\end{proof}

\begin{definition}
A map $\kap : (\mathbb{S})^2 \longmapsto [0, 1]$ on a \textsf{pre*-GGS} $\mathbb{S}$ is
\begin{itemize}
\item {a \emph{general rough inclusion} function (gRIF) if it satisfies R1 and R2,}
\item {a \emph{general quasi rough inclusion} function (qRIF) it it satisfies R0 and R2,}
\item {a \emph{general weak quasi rough inclusion} function (wqRIF) if it satisfies R0 and R3, and}
\item {a \emph{general para rough inclusion} function (pRIF) if it satisfies R0 and RV.}
\end{itemize}
\end{definition}

\begin{proposition}
In a \textsf{GS} $\mathbb{S}$, every \textsf{gRIF} is a \textsf{qRIF} and every \textsf{qRIF} is a \textsf{wqRIF}.  
\end{proposition}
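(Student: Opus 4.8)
The plan is to obtain both inclusions directly from the implications already catalogued in the Theorem above, so that essentially no fresh computation is required. Recall the relevant definitions on a \textsf{Pre*-GGS}: a \textsf{gRIF} is a map $\kap$ satisfying \textsf{R1} and \textsf{R2}, a \textsf{qRIF} one satisfying \textsf{R0} and \textsf{R2}, and a \textsf{wqRIF} one satisfying \textsf{R0} and \textsf{R3}. Since a \textsf{GS} is in particular a model of the weaker \textsf{Pre*-GGS} axioms, every property-implication established in the Theorem is available in the present setting.

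First I would treat \emph{gRIF $\Rightarrow$ qRIF}. Let $\kap$ be a \textsf{gRIF} on $\mathbb{S}$, so $\kap$ satisfies \textsf{R1} and \textsf{R2}. By clause \textsf{prif2} of the Theorem, \textsf{R1} holds exactly when both \textsf{R0} and \textsf{IR0} hold; in particular \textsf{R1} entails \textsf{R0} (indeed this direction is immediate, since \textsf{R0} is just one implication of the biconditional \textsf{R1}). Hence $\kap$ satisfies \textsf{R0} together with \textsf{R2}, which is precisely the defining pair for a \textsf{qRIF}.

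Next I would treat \emph{qRIF $\Rightarrow$ wqRIF}. Let $\kap$ be a \textsf{qRIF}, so it satisfies \textsf{R0} and \textsf{R2}. By clause \textsf{prif3} of the Theorem, \textsf{R0} and \textsf{R2} together imply \textsf{R3}; concretely, given $\pc bc$, \textsf{R0} yields $\kap(b,c)=1$, whence \textsf{R2} gives $\kap(a,b)\leq\kap(a,c)$ for every $a$, which is \textsf{R3}. Thus $\kap$ satisfies \textsf{R0} and \textsf{R3}, i.e. it is a \textsf{wqRIF}; composing the two steps also recovers that every \textsf{gRIF} is a \textsf{wqRIF}.

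There is no genuine obstacle here; the proof is a corollary of the Theorem. The only point deserving a line of comment is that clauses \textsf{prif2} and \textsf{prif3} use neither antisymmetry (\textsf{PT2}), nor the definability of $\pc$ through $\leq$, nor totality of the lattice operations, so they remain valid in any \textsf{Pre*-GGS}, and the restriction to a \textsf{GS} in the statement is only for definiteness. One may additionally note that the converse inclusions fail in general, so that the hierarchy \textsf{gRIF}, \textsf{qRIF}, \textsf{wqRIF} is strict; exhibiting separating examples (a \textsf{wqRIF} violating \textsf{R2}, and a \textsf{qRIF} violating \textsf{IR0}) would be a reasonable follow-up remark but is not needed for the proposition itself.
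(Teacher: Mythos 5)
Your proof is correct and follows exactly the route the paper intends: the paper states this proposition without a separate proof, as an immediate corollary of the preceding theorem, namely that \textsf{R1} entails \textsf{R0} (one half of the biconditional, as in \textsf{prif2}) and that \textsf{R0} together with \textsf{R2} yields \textsf{R3} (\textsf{prif3}). Your added observation that the argument uses nothing specific to a \textsf{GS} and already holds at the \textsf{Pre*-GGS} level is accurate and harmless.
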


\subsection{Examples: }

\paragraph{Other Generalizations:}
The above generalizations can additionally be done relative to the $\leq$ order of a GGS (instead of $\pc$). This leads to a focus on partial aggregation and commonality operations, that may not satisfy nice properties relative to the approximations. 

\subsection{Specific Weak Quasi-RIFs}

gRIFs and variants thereof are defined over power sets in \cite{ss2010,ag2009}. For rewriting them in the set HGOS way, it is necessary to assume that $\underline{\mathbb{S}} = \wp(\top)$, $\top$ being a finite set, $\bot = \emptyset$,  $\pc = \leq = \subseteq$, $\vee = \cup$ and $\wedge = \cap$. Specifically, the following functions have been studied in \cite{ag2009} and have been used to define concepts of approximation spaces.   

\begin{equation*}\label{k1}
\nu_1(A, B) = \left\lbrace  \begin{array}{ll}
 \dfrac{\# (B)}{\# (A\cup B)} & \text{if } A\cup B\neq \emptyset\\
 1 & \text{otherwise}\\
 \end{array} \right. \tag{K1}                                                                                                             
\end{equation*}

\begin{equation*}\label{k2}
\nu_2(A, B) =  \dfrac{\# (A^c \cup B)}{\# (\top)}  \tag{K2}                                                                                                             
\end{equation*}

If $0 \leq s < t \leq 1$, and $\nu : {\mathbb{S}}^2 \longmapsto [0, 1]$ is a \textsf{RIF}, then let $\nu_{s,t}^{\nu} : {\mathbb{S}}^2 \longmapsto [0, 1]$ be a function defined by 

\begin{equation*}\label{kst}
\nu^{\nu}_{s,t}(A, B) = \left\lbrace  \begin{array}{ll}
0 & \text{ if } \nu(A, B) \leq s \\
 \dfrac{\nu (A, B) - s}{t - s} & \text{ if } s < \nu(A, B) < t,\\
 1 & \text{ if } \nu(A, B ) \geq t\\
 \end{array} \right. \tag{Kst}                                                                                                             
\end{equation*}

\begin{proposition}{\cite{ag2009}} 
In general, $\nu^{\nu}_{s,t}$ is a weak quasi \textsf{RIF} and $\nu^{\nu}_{s,1}$ is a quasi \textsf{RIF}. 
\end{proposition}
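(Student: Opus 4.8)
The plan is to reduce everything to the behaviour of the single--variable ``clamped ramp'' $\varphi_{s,t}:[0,1]\longmapsto[0,1]$ defined by $\varphi_{s,t}(x)=0$ for $x\leq s$, $\varphi_{s,t}(x)=\frac{x-s}{t-s}$ for $s<x<t$, and $\varphi_{s,t}(x)=1$ for $x\geq t$, so that by construction $\nu^{\nu}_{s,t}(A,B)=\varphi_{s,t}(\nu(A,B))$ for all $A,B\in\underline{\mathbb{S}}$. Two elementary facts about $\varphi_{s,t}$ carry the proof: (i) $\varphi_{s,t}$ is non-decreasing on $[0,1]$, and (ii) $\varphi_{s,t}(x)=1$ if and only if $x\geq t$; in particular $\varphi_{s,t}(1)=1$ since $t\leq 1$, and in the case $t=1$ condition (ii) sharpens to $\varphi_{s,1}(x)=1$ iff $x=1$. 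Both are immediate from the piecewise definition, so I would just record them and proceed.

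Next I would check \textsf{R0} and \textsf{R3} for $\nu^{\nu}_{s,t}$, using that $\nu$ is a \textsf{RIF} (so that $\nu$ satisfies \textsf{R1} and \textsf{R3}, hence also \textsf{R0} and \textsf{IR0} by \textbf{prif2}, and \textsf{R2} by \textbf{prif4}; for the classical \textsf{RIF} on a set \textsf{HGOS} these are exactly the elementary facts $\nu(X,Y)=1\iff X\subseteq Y$ and $B\subseteq C\Rightarrow A\cap B\subseteq A\cap C$). For \textsf{R0}: if $\pc AB$ then $\nu(A,B)=1$, whence $\nu^{\nu}_{s,t}(A,B)=\varphi_{s,t}(1)=1$ by (ii). For \textsf{R3}: if $\pc BC$ then $\nu(A,B)\leq\nu(A,C)$ by \textsf{R3} for $\nu$, and applying the monotone map $\varphi_{s,t}$ preserves the inequality, giving $\nu^{\nu}_{s,t}(A,B)\leq\nu^{\nu}_{s,t}(A,C)$. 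Thus $\nu^{\nu}_{s,t}$ satisfies \textsf{R0} and \textsf{R3}, i.e.\ it is a \textsf{wqRIF}.

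For the sharper claim on $\nu^{\nu}_{s,1}$ I would additionally verify \textsf{R2}. Suppose $\nu^{\nu}_{s,1}(B,C)=1$; by (ii) with $t=1$ this forces $\nu(B,C)=1$, hence $\pc BC$ (as $\nu$ satisfies \textsf{IR0}); since $\nu$ satisfies \textsf{R2}, we get $\nu(A,B)\leq\nu(A,C)$ for every $A$, and monotonicity of $\varphi_{s,1}$ then yields $\nu^{\nu}_{s,1}(A,B)\leq\nu^{\nu}_{s,1}(A,C)$. Combined with \textsf{R0} (already established, with $t=1$) this shows $\nu^{\nu}_{s,1}$ is a \textsf{qRIF}.

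The one place that needs care — and the reason for the qualifier ``in general'' — is why this argument fails to upgrade an arbitrary $\nu^{\nu}_{s,t}$ to a \textsf{qRIF}: for $t<1$ the hypothesis $\nu^{\nu}_{s,t}(B,C)=1$ only delivers $\nu(B,C)\geq t$, which is strictly weaker than $\nu(B,C)=1$ and does not entail $\pc BC$, so the monotonicity step underlying \textsf{R2} is no longer available. To pin down the negative side of the statement I would exhibit a small finite set \textsf{HGOS} with $0\leq s<t<1$ and subsets $A,B,C$ for which $\nu(B,C)\in[t,1)$ yet $\nu(A,B)>\nu(A,C)$, so that $\nu^{\nu}_{s,t}(B,C)=1$ while $\nu^{\nu}_{s,t}(A,B)\leq\nu^{\nu}_{s,t}(A,C)$ fails. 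Producing such a witness is really the only non-routine part; the positive assertions are just the two monotonicity observations on $\varphi_{s,t}$ above together with the already-proved implications \textbf{prif2} and \textbf{prif4}.
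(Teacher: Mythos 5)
Your argument is correct, but note that the paper does not actually prove this proposition at all: it is imported from \cite{ag2009} as a cited result (the surrounding text only remarks that such proofs ``carry over'' to the \textsf{HGOS} setting). Your proof fills this in, and it is essentially the standard one: factor $\nu^{\nu}_{s,t}=\varphi_{s,t}\circ\nu$ with $\varphi_{s,t}$ the monotone clamp, use $\varphi_{s,t}(1)=1$ to get \textsf{R0} from \textsf{R0} for $\nu$, use monotonicity of $\varphi_{s,t}$ to transport \textsf{R3}, and observe that only when $t=1$ does $\nu^{\nu}_{s,t}(B,C)=1$ force $\nu(B,C)=1$, which is what upgrades \textsf{R3}-type reasoning to \textsf{R2} and yields the \textsf{qRIF} claim. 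Three small remarks. First, a \textsf{RIF} in the sense of \cite{ag2009} satisfies \textsf{R1} and \textsf{R2} by definition, not ``\textsf{R1} and \textsf{R3}'' as you write; under \textsf{R1} the two are interchangeable (\textbf{prif1}), so nothing breaks, but the bookkeeping should start from \textsf{R1}+\textsf{R2}. Second, in the $t=1$ case your detour through \textsf{IR0} (to get $\pc BC$ and then invoke \textsf{R2}) is unnecessary: from $\nu(B,C)=1$ you can apply \textsf{R2} of $\nu$ directly and then $\varphi_{s,1}$; this also frees the argument from any reliance on $\nu$ being the classical \textsf{RIF}. Third, the phrase ``in general'' in the statement means ``for all admissible $s<t$'', so the counterexample you flag as the only non-routine part is not actually required to establish the proposition; it is nonetheless easy to produce (e.g.\ $C\subset B$ with $\nu(B,C)=\#(C)/\#(B)\in[t,1)$ and $A=B\setminus C$, giving $\nu(A,B)=1>0=\nu(A,C)$) and is a worthwhile check that the restriction to $t=1$ in the second claim is genuinely needed.
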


\begin{proposition}
\begin{enumerate}
\item {$\nu$ satisfies \textsf{RV} and \textsf{RI}}
\item {$\nu_{s, t}\nu$ satisfies \textsf{RV} and \textsf{RI}}
\end{enumerate}
\end{proposition}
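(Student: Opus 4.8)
The plan is to reduce both parts to a single elementary monotonicity property of the classical \textsf{RIF} $\nu$ of (K0), and then to note that $\nu^{\nu}_{s,t}$ of (Kst) is obtained from $\nu$ by post-composition with a non-decreasing map, so the property transfers. I work in the set HGOS setting, where $\underline{\mathbb{S}}\subseteq\wp(\top)$, $\bot=\emptyset$, and $\pc$ and $\wedge$ are $\subseteq$ and $\cap$ respectively --- the only context in which $\nu$ is defined. First I would prove: \emph{if $C\subseteq X\subseteq Y$, then $\nu(X,C)\geq\nu(Y,C)$}. If $Y=\emptyset$ (hence $X=C=\emptyset$), or $X=\emptyset$ (hence $C=\emptyset$), then both sides equal $1$ by the empty-argument clause of (K0). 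Otherwise $C\subseteq X$ and $C\subseteq Y$ give $X\cap C=C=Y\cap C$, so $\nu(X,C)=\#(C)/\#(X)$ and $\nu(Y,C)=\#(C)/\#(Y)$, and $\#(X)\leq\#(Y)$ gives the claim. Two specialisations are needed: with $X:=A$, $Y:=B$ this says $C\subseteq A\subseteq B$ implies $\nu(A,C)\geq\nu(B,C)$; with $X:=A\cap B$, $Y:=A$ (and noting $C\subseteq A\cap B\subseteq A$) it says $C\subseteq A\cap B$ implies $\nu(A\cap B,C)\geq\nu(A,C)$.

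Part (1) would then follow directly. For \textsf{RV}: if $\pc CA$, $\pc AB$ and $\nu(B,C)\geq\delta$, the first specialisation gives $\nu(A,C)\geq\nu(B,C)\geq\delta$; since $\delta$ was arbitrary, this also recovers the earlier Proposition that $\nu$ satisfies \textsf{RV} for every $\delta$. For \textsf{RI}: if $\nu(A,C)\geq\delta$, $\nu(B,C)\geq\delta$ and $\pc C(A\cap B)$, the second specialisation gives $\nu(A\cap B,C)\geq\nu(A,C)\geq\delta$ (only one of the two threshold hypotheses is used; the other yields a symmetric proof through $\nu(A\cap B,C)\geq\nu(B,C)$).

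For part (2), let $g:[0,1]\to[0,1]$ be the piecewise-linear map with $g(x)=0$ for $x\leq s$, $g(x)=(x-s)/(t-s)$ for $s<x<t$, and $g(x)=1$ for $x\geq t$; it is non-decreasing, and by (Kst) one has $\nu^{\nu}_{s,t}(X,Y)=g(\nu(X,Y))$ for all $X,Y$, with the same $g$ regardless of the arguments. Thus for \textsf{RV}, from $\nu^{\nu}_{s,t}(B,C)=g(\nu(B,C))\geq\delta$, the bound $\nu(A,C)\geq\nu(B,C)$, and monotonicity of $g$, we get $\nu^{\nu}_{s,t}(A,C)=g(\nu(A,C))\geq g(\nu(B,C))\geq\delta$; and for \textsf{RI}, from $\nu^{\nu}_{s,t}(A,C)=g(\nu(A,C))\geq\delta$, the bound $\nu(A\cap B,C)\geq\nu(A,C)$, and monotonicity of $g$, we get $\nu^{\nu}_{s,t}(A\cap B,C)\geq\delta$.

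There is no genuine obstacle here: the one substantive ingredient is the monotonicity fact above, and what remains is care with the degenerate cases where a first argument is empty, together with the observation that $\nu^{\nu}_{s,t}$ is just a monotone reparametrisation of $\nu$. Indeed the same reasoning yields the general principle that if a \textsf{RIF} $\kap$ satisfies \textsf{RV} and \textsf{RI} for all $\delta$, then so does $h\circ\kap$ for any non-decreasing $h:[0,1]\to[0,1]$, making part (2) a formal corollary of part (1).
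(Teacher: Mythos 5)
Your proof is correct, and for the classical RIF it rests on exactly the cardinality comparison the paper uses (same numerator with a smaller denominator), only packaged as a single monotonicity lemma: $C\subseteq X\subseteq Y$ implies $\nu(X,C)\geq \nu(Y,C)$, from which both \textsf{RV} (take $X=A$, $Y=B$) and \textsf{RI} (take $X=A\cap B$, $Y=A$) drop out. Where you genuinely diverge is part (2): the paper disposes of $\nu^{\nu}_{s,t}$ with the remark that the proof is a ``case based extension'' of the argument for $\nu$, i.e.\ a case analysis over the three ranges of (Kst), whereas you observe that $\nu^{\nu}_{s,t}=g\circ\nu$ for a fixed non-decreasing $g$, so every inequality between values of $\nu$ transfers verbatim; this is cleaner, avoids the case split entirely, and gives the stronger closure principle that \textsf{RV} and \textsf{RI} (for all $\delta$) are preserved under post-composition with any non-decreasing map $[0,1]\to[0,1]$ --- something the paper's route does not make visible. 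One small imprecision worth fixing: in the degenerate case $X=\emptyset$, $Y\neq\emptyset$ (so $C=\emptyset$) the two sides do not both equal $1$; rather $\nu(X,C)=1$ by the empty-argument clause while $\nu(Y,C)=0$, so the inequality $\nu(X,C)\geq\nu(Y,C)$ holds trivially because the left side is maximal, not because the sides coincide. This does not affect any of the subsequent steps.
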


\begin{proof}
$\nu$ satisfies \textsf{RV}.
\begin{enumerate}
\item {If for any $a, b$ and $c$, and a $\alpha \in [0, 1)$, $c\subseteq a \subseteq b$ and $\nu (b, c)  \geq \alpha$ (note that $\pc$ is transitive here),}
\item {then if $\nu(b,c) <1 $  $\frac{\# (c\cap b)}{\# (b)} \geq \alpha$. }
\item {It follows that if $\frac{\# (c\cap a)}{\# (a)} <1 $ $\frac{\# (c\cap a)}{\# (a)} \geq \frac{\# (c\cap b)}{\# (b)} \geq \alpha$ (because of the possible sizes of the numerators and denominators).}
\item {When $\alpha =1$, too the implication holds because $\#(b)= 0$ implies the other two sets are empty. Thus $\nu$ satisfies \textsf{RV}.}
\end{enumerate}

$\nu$ satisfies \textsf{RI}.
\begin{enumerate}
\item {If for any $a, b$ and $c$, and a $\alpha \in [0, 1)$, $\nu (a, c)  \geq \alpha$, $\nu (b, c)\geq \alpha$, and $\pc c (b\cap a)$}
\item {then $\alpha \leq \frac{\# (a\cap c)}{\# (a)} \leq 1 $ and $\alpha \leq \frac{\# (b\cap c)}{\# (b)} \leq 1 $ }
\item {$\pc c (b\cap a)$ implies $c\subseteq a$ and $c\subseteq b$, $\#(a\cap b\cap c) = \# (a\cap c) = \# (b\cap c)$ }
\item {Further, $\#(a\cap b) \leq \#(a)$ and $\#(a\cap b) \leq \#(b)$, ensures the conclusion in all cases.}
\end{enumerate}

The proof for $\nu^{\nu}_{s,t}$ is a case based extension of the above.
\end{proof}

\begin{remark}
In \textsf{RI}, if the condition $\pc c (a\wedge b)$ is removed, then both $\nu$ and $\nu^\nu_{s,t}$ fail to satisfy it. This can be checked by taking $a = \{1, 2, 3, 6\}, b = \{3, 5, 7, 8, 9\}$,  $c = \{2, 5, 6\}$, and $\alpha = \frac{1}{5}$
\end{remark}

\subsection{Granular Rough Inclusion Functions}\label{granrif}

In application contexts, one typically reasons over approximate information that maybe modeled as approximations. The exact origin or process or even their correct description may not be known to the scientists. When we are studying images derived through magnetic resonance or other techniques, the ground knowledge of practitioners is typically limited. The exact processes involved in the disease may not be sufficiently understood, and the scope for discovery remains fertile. It is possible that the things sought from the images are approximations that are not good enough for the undiscovered processes. This suggests that it would be more reasonable to construct generalized RIFs in the context with approximations. This realization of the need to avoid contamination motivates the concept of granular RIFs defined below. \emph{Every concept of a weak quasi rough inclusion function considered earlier is contaminated relative to this perspective} \cite{am9969,am5586}. 

Over a \textsf{set HGOS}, the following conceptual variants of rough inclusion functions can be defined. 

\begin{definition}\label{grifhgos}
If $\mathbb{S}$ is a \textsf{set HGOS}, $A, B\in \mathbb{S}$, $\sigma, \pi \in \{l, u\}$  and the denominators in the expression is non zero, then let 
\begin{equation*}
\nu_{\sigma \pi}(A, B ) = \dfrac{\#(A^\sigma \cap B^\pi )}{\#(A^\sigma )}  \tag{$\sigma\pi$-grif1} 
\end{equation*}
If $\# (A^\sigma) = 0 $, then set the value of $\nu_{\sigma \pi}(A, B)$ to $1$. 
$\nu_{\sigma \pi}$ will be said to be a \emph{basic granular rough inclusion function} (\textsf{bGRIF}).
\end{definition}

In addition, variants of this definition are of interest:

\begin{definition}\label{cogrifhgos}
If $\mathbb{S}$ is a \textsf{set HGOS}, $A, B\in \mathbb{S}$, $\sigma, \pi \in \{l, u\}$  and the denominators in the expression is non zero, then let 
\begin{equation*}
\nu_{\sigma \pi}(A, B ) = \dfrac{\#(A^\sigma \cap B^\pi )}{\#(A^\pi )}  \tag{$\sigma\pi$-grif2} 
\end{equation*}
If $\# (A^\pi) = 0 $, then set the value of $\nu_{\sigma \pi}(A, B)$ to $1$. 
$\nu_{\sigma \pi}$ will be said to be a \emph{cobasic granular rough inclusion function} (\textsf{cGRIF}).
\end{definition}

\begin{proposition}
In the context of Definition \ref{grifhgos}, \[(\forall A, B\in \mathbb{S} ) \,0\leq  \nu_{\sigma \pi} (A, B) \leq 1\] This proposition does not hold for cobasic granular rough inclusion functions in general.
\end{proposition}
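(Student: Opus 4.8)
The plan is to dispose of the positive assertion by the elementary monotonicity of the cardinality function, and then to refute the bound for the cobasic variant by exhibiting a small concrete set HGOS in which $\nu_{\sigma\pi}$ exceeds $1$.

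For the first part, fix $A, B \in \mathbb{S}$ and $\sigma, \pi \in \{l, u\}$. First I would observe that $A^\sigma \cap B^\pi \subseteq A^\sigma$ as subsets of $\top$, so that $0 \leq \#(A^\sigma \cap B^\pi) \leq \#(A^\sigma)$ because $\#()$ is monotone on the finite powerset. If $\#(A^\sigma) \neq 0$, dividing through yields $0 \leq \nu_{\sigma\pi}(A,B) \leq 1$ at once; if $\#(A^\sigma) = 0$, Definition \ref{grifhgos} assigns the value $1 \in [0,1]$. Hence the bound holds in every case, and this is all that is required.

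For the second part, I would build a counterexample from a Pawlakian approximation space $\left\langle \top, R \right\rangle$ with $R$ an equivalence possessing at least one non-singleton class (such a class exists by the Emergency Axiom), taking $\underline{\mathbb{S}} = \wp(\top)$, $l, u$ the usual lower and upper operators, and $\subseteq, \cup, \cap$ as the order and lattice operations, so that $\mathbb{S}$ is a set HGOS. Choose $A$ to be a \emph{rough} set, i.e. one with $A^l \subsetneq A^u$, so that $\#(A^l) < \#(A^u)$, and choose $B$ to be an $R$-definite set with $A^u \subseteq B$, so that $B^l = B^u = B$ and $A^u \cap B^l = A^u$. Taking $\sigma = u$ and $\pi = l$, the cobasic function of Definition \ref{cogrifhgos} then evaluates to $\nu_{ul}(A, B) = \#(A^u \cap B^l)/\#(A^l) = \#(A^u)/\#(A^l) > 1$, which contradicts the upper bound; spelling this out over a three- or four-element universe makes it fully explicit.

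The computations are routine; the only point needing a little care is the choice of indices. The inclusion $A^\sigma \cap B^\pi \subseteq A^\pi$ that would have rescued the bound fails exactly when $\sigma = u$, $\pi = l$ and $A$ is not lower-definite, so the counterexample must be engineered around that configuration. The lower estimate $0 \leq \nu_{\sigma\pi}(A,B)$ still survives for cobasic functions since cardinalities are non-negative, so it is precisely the upper bound that breaks. I do not expect any genuine obstacle here.
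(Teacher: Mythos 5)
Your argument for the positive part is exactly the obvious one (the paper states this proposition without an explicit proof): $A^\sigma \cap B^\pi \subseteq A^\sigma$ gives $\#(A^\sigma \cap B^\pi) \leq \#(A^\sigma)$, and the degenerate case $\#(A^\sigma)=0$ is covered by the convention in Definition \ref{grifhgos}, so $0 \leq \nu_{\sigma\pi}(A,B) \leq 1$ always. Your counterexample for the cobasic variant is also the right construction, and you have correctly located the failing configuration ($\sigma = u$, $\pi = l$, denominator $\#(A^l)$ smaller than the numerator). One small point to tighten: choosing $A$ merely ``rough'' ($A^l \subsetneq A^u$) is not quite enough, because if $A^l = \emptyset$ the convention in Definition \ref{cogrifhgos} sets the value to $1$ and no violation occurs; in a two-element universe with a single non-singleton class every rough set has empty lower approximation, so your recipe as literally stated could produce nothing. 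You need $A^l \neq \emptyset$, i.e.\ at least two classes with one class contained in $A$ --- your suggested three-element universe with classes $\{1,2\}$ and $\{3\}$, $A = \{1,3\}$, $B = \top$ does this, giving $\nu_{ul}(A,B) = 3/1 > 1$. With that condition made explicit the proof is complete.
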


\begin{theorem}
In a \textsf{set HGOS} $\mathbb{S}$ with $\bot=\emptyset$, all of the following hold ($\alpha$ being any one of $ll, lu, ul$ or $uu$):
\begin{align*}
(\forall A, B)\, \nu_{ul}(A, B ) \leq \nu_{uu}(A, B )  \tag{ulu2}\\
(\forall A, B)\, \nu_{ll}(A, B ) \leq \nu_{lu}(A, B )  \tag{llu2}\\ 
(\forall A, B, E)\,(B\subset E \longrightarrow \, \nu_{\alpha}(A, B) \leq \nu_{\alpha}(A, E) )   \tag{mo}\\
(\forall A )\,\nu_{lu}(A, A) \leq \nu_{ll}(A, A) = 1 = \nu_{uu}(A, A)    \tag{refl}\\
(\forall A)\,\nu_{\alpha}(\bot , A) =1   \tag{bot}\\
(\forall A)\, (\top=\top^l = \top^u \longrightarrow \nu_{\alpha}(A,\top) =1)   \tag{top}
\end{align*}
\end{theorem}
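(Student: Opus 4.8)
The plan is to dispatch all six assertions from three elementary facts together with the approximation axioms of Definition \ref{gfsg}: for finite sets, $X\subseteq Y$ implies $\#(X)\le\#(Y)$; the convention in Definition \ref{grifhgos} forces $\nu_{\sigma\pi}(A,B)=1$ whenever $\#(A^\sigma)=0$; and $\bot=\emptyset$ gives $\#(\bot)=0$. Throughout I write $\alpha=\sigma\pi$ with $\sigma,\pi\in\{l,u\}$, and I would first record the auxiliary inclusion $B^l\subseteq B^u$, the lower approximation lying inside the upper approximation in a \textsf{set HGOS}, since this is the only ingredient going beyond routine bookkeeping.

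For \textsf{ulu2}, \textsf{llu2} and \textsf{mo} the two sides being compared share a common denominator ($\#(A^u)$ for \textsf{ulu2}, $\#(A^l)$ for \textsf{llu2}, $\#(A^\sigma)$ for \textsf{mo}), so it is enough to order the numerators, the degenerate zero-denominator case being handled by the convention (both sides then equal $1$). For \textsf{ulu2} and \textsf{llu2} I would use $B^l\subseteq B^u$ to obtain $A^\sigma\cap B^l\subseteq A^\sigma\cap B^u$, hence $\#(A^\sigma\cap B^l)\le\#(A^\sigma\cap B^u)$. For \textsf{mo}, $B\subset E$ together with the monotonicity clause \textsf{UL2} yields $B^\pi\subseteq E^\pi$, so $A^\sigma\cap B^\pi\subseteq A^\sigma\cap E^\pi$ and the numerators are again ordered.

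For \textsf{refl} I would evaluate directly: $\nu_{ll}(A,A)=\#(A^l\cap A^l)/\#(A^l)=1$ and $\nu_{uu}(A,A)=\#(A^u\cap A^u)/\#(A^u)=1$ (each $=1$ by convention when the denominator vanishes), while $\nu_{lu}(A,A)=\#(A^l\cap A^u)/\#(A^l)\le 1$ since $A^l\cap A^u\subseteq A^l$. For \textsf{bot}, \textsf{UL3} gives $\bot^l=\bot^u=\bot=\emptyset$, so $\nu_{\sigma\pi}(\bot,A)$ has denominator $\#(\bot^\sigma)=0$ and equals $1$ by convention, for every $A$ and every $\alpha$. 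For \textsf{top}, assuming $\top=\top^l=\top^u$, the condition \textsf{TB} gives $A\subseteq\top$, hence \textsf{UL2} gives $A^\sigma\subseteq\top^\sigma=\top$, so $A^\sigma\cap\top^\pi=A^\sigma\cap\top=A^\sigma$ and $\nu_{\sigma\pi}(A,\top)=\#(A^\sigma)/\#(A^\sigma)=1$ (or $1$ by convention).

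The main obstacle, such as it is, is the inclusion $B^l\subseteq B^u$ feeding \textsf{ulu2} and \textsf{llu2}: everything else is cardinality monotonicity plus the definitional conventions, but this inclusion is the one place where one must lean on the intended reading of $l$ and $u$ as lower and upper approximations in a \textsf{set HGOS} rather than on the bare axioms \textsf{UL1}--\textsf{UL3} as literally typeset. The secondary point requiring discipline is to invoke the zero-denominator convention (in the spirit of the Emergency Axiom) at every branch, so that each inequality also holds vacuously whenever a relevant approximation is empty.
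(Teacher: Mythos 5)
Your proof is correct and follows essentially the same route as the paper's: \textsf{ulu2} and \textsf{llu2} from $B^l\subseteq B^u$ with a common denominator, \textsf{mo} from monotonicity of $l,u$ (\textsf{UL2}) with the denominator fixed, \textsf{refl} and \textsf{top} by direct evaluation, and \textsf{bot} from $\bot^l=\bot^u=\bot=\emptyset$ together with the zero-denominator convention. Your closing caveat that $B^l\subseteq B^u$ rests on the intended reading of $l,u$ rather than the literal axioms is fair, but the paper's own proof invokes exactly the same fact, so the two arguments coincide.
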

\begin{proof}
 \begin{enumerate}
\item {\textsf{ulu2} follows from $(\forall B \in \mathbb{S})\, B^l \subseteq B^u$.  }
\item {Proof of \textsf{llu2} is similar.}
\item {Since both $l$ and $u$ are monotonic and the denominator is invariant in\\ $\nu_{\alpha}(A, B) \leq \nu_{\alpha}(E, B)$, \textsf{mo} follows. }
\item {Proof of \textsf{refl} is direct.}
\item {Only in the condition $\bot$, is the assumption $\bot = \emptyset$ used.}
\end{enumerate}
\end{proof}

\section{Generalized VPRS and Variants}\label{gvprssu}

The basic idea underlying variable precision rough sets and variants is that \emph{an approximation is justified if it is determined by granules that have at least some numeric level of common elements with the approximated}. Further, the natural setting of related approximations is a pre*-set HGOS relative to the strong assumptions of VPRS (unless the subset relation is replaced by a more general parthood). It can be that the approximations are constructed by points and their neighborhoods -- this can directly fall under modal logic perspectives \cite{amk2018}.  
If a set $A$ is to be approximated at level $\alpha\in [0, 0.5]$ by a collection $\mathcal{F}$ (that may or may not be a granulation $\mathcal{G}$ or a set of definite elements $\delta$ (relative to the usual approximation)), then consider the following definitions ($\kap$ being a wqRIF):
\begin{align*}
B^{l_\alpha} = \bigcup \{h:\, h\in \mathcal{F}\& h\subseteq B \, \&\,\kap (B,h)\geq 1-\alpha \}   \tag{l-vprs}\\
B^{u_\alpha} = \bigcup \{h:\, h\in \mathcal{F}\& h\cap B \neq \emptyset\, \&\,\kap (B,h) > \alpha \}   \tag{u-vprs}
\end{align*}

The positive, negative, and boundary regions associated with $B$ are defined as follows:
\begin{align*}
POS_{\alpha}(B) = B^{l_\alpha}   \tag{Positive}\\
NEG_{\alpha}(B) = \bigcup \{h:\, h\in \mathcal{F}\& h\cap B \neq \emptyset\, \&\,\kap (B,h)\leq \alpha \}   \tag{Negative}\\
BND_{\alpha}(B) = B^{u_{\alpha}}\setminus B^{l_{\alpha}}   \tag{Boundary}
\end{align*}

For convenience, unless stated otherwise, all of the approximations of this section will be assumed to be defined over an algebraic system (called \emph{rough convenience}) of the form \[\mathbb{S} \, =\, \left\langle \underline{\mathbb{S}}, \gamma, \subseteq , \cup,  \cap, \bot, \top \right\rangle\] with $\underline{\mathbb{S}}$ being a subset of a powerset $\wp (H)$, $\gamma$ being a unary predicate that determines $\mathcal{G}$ (by the condition $\gamma x$ if and only if $x\in \mathcal{G}$). It is assumed that $\mathbb{S}$ is closed under $\cup$ and $cap$. Note that this may be the essential the core of a \textsf{pre*-set HGOS} without the approximations and the admissibility conditions on $\mathcal{G}$. Set complementation may be defined.

\emph{The following theorem has surprising new cautious monotony properties \cite{dm94} ignored in all cases in the literature}.

\begin{theorem}\label{alpha}
Over a rough convenience $\mathbb{S}$, if $\kap$ is a wqRIF that satisfies \textsf{RV}, and $l_\alpha$, and $u_\alpha$ are as above then all of the following hold:
\begin{align}
(\forall a, b)(a^{l_\alpha}\subseteq b\subseteq a \longrightarrow a^{l_\alpha}\subseteq b^{l_\alpha})   \tag{lA-cmo}\\
(\forall a, b)(a\subseteq b\subseteq a^{u_\alpha}  \longrightarrow a^{u_\alpha}\subseteq b^{u_\alpha})   \tag{uA-cmo}\\
(\forall a, b) a^{l_\alpha}\cap b^{l_\alpha}\subseteq (a\cap b)^{l_\alpha}    \tag{lA-capc}\\
(\forall a) a^{l_\alpha} = a^{l_\alpha l_\alpha} \subseteq a   \tag{lA-idem}\\
(\forall a) a^{l_\alpha} \subseteq a^{u_\alpha} \tag{luA}\\
(\forall a) a^{l_\alpha} \subseteq a \tag{li}
\end{align}
\end{theorem}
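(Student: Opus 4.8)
The plan is to verify each of the six inclusions granule by granule against the defining unions \textsf{(l-vprs)}, \textsf{(u-vprs)}, using \textsf{RV} (invoked with $\delta=1-\alpha$) as the one device that transfers a $\kap$-threshold from a larger first argument down to a smaller one. The inclusion (li) is immediate, since every granule $h$ entering $a^{l_\alpha}$ satisfies $h\subseteq a$ by construction, so their union does too. For (luA), a nonempty granule $h$ entering $a^{l_\alpha}$ has $h\cap a=h\neq\emptyset$; moreover $h\subseteq a$ gives $\kap(h,a)=1$ by \textsf{R0}, so $h$ also meets the upper-approximation requirement $\kap(h,a)>\alpha$ (here I read the threshold of \textsf{(u-vprs)} in the classical way, as a constraint on the granule's agreement with $B$; see the last paragraph).

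For (lA-idem), $a^{l_\alpha l_\alpha}\subseteq a^{l_\alpha}$ is (li) applied to $a^{l_\alpha}$, and for the converse a granule $h$ of $a^{l_\alpha}$ has $h\subseteq a^{l_\alpha}\subseteq a$ (by (li)) together with $\kap(a,h)\ge 1-\alpha$, whence \textsf{RV} on the triple $(c,a,b)=(h,\,a^{l_\alpha},\,a)$ yields $\kap(a^{l_\alpha},h)\ge 1-\alpha$, so $h$ enters $a^{l_\alpha l_\alpha}$ and $a^{l_\alpha}\subseteq a^{l_\alpha l_\alpha}$. The same template proves (lA-cmo): under $a^{l_\alpha}\subseteq b\subseteq a$ any granule $h$ of $a^{l_\alpha}$ has $h\subseteq a^{l_\alpha}\subseteq b\subseteq a$, so \textsf{RV} on $(h,b,a)$ gives $\kap(b,h)\ge 1-\alpha$ and $h$ enters $b^{l_\alpha}$.

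For (lA-capc) I would take $x\in a^{l_\alpha}\cap b^{l_\alpha}$, pick granules $h_0\subseteq a$ and $h_1\subseteq b$ with $x\in h_0\cap h_1$, $\kap(a,h_0)\ge 1-\alpha$ and $\kap(b,h_1)\ge 1-\alpha$, and then exhibit a single granule through $x$ lying inside $a\cap b$ whose value against $a\cap b$ is still $\ge 1-\alpha$ --- the natural candidate being $h_0\cap h_1$, bounded using \textsf{RV} and, where needed, the \textsf{RI}-type behaviour recorded earlier for $\nu$ and $\nu^{\nu}_{s,t}$. The step that genuinely needs attention is which closure of $\mathcal{F}$ is assumed: in the partition / admissible-granulation setting the two witnessing granules coincide with the unique granule through $x$, which then already lies in $a\cap b$ and the argument closes, while for overlapping $\mathcal{F}$ an assumption of this type appears unavoidable.

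The main obstacle is (uA-cmo). Here \textsf{RV} is not the right lever, since passing from $a^{u_\alpha}$ to $b^{u_\alpha}$ moves in the direction $a\subseteq b$, and \textsf{RV} only transports thresholds when the first argument shrinks; and no monotonicity of a wqRIF in its first slot is available. The workable route is through monotonicity of $\kap$ in the \emph{second} argument (\textsf{R3}): fixing a granule $h$ contributing to $a^{u_\alpha}$ --- so $h\cap a\neq\emptyset$, hence $h\cap b\neq\emptyset$ since $a\subseteq b$ --- one wants $\kap(h,a)\le\kap(h,b)$, which is exactly \textsf{R3}, so $h$ passes the threshold for $b^{u_\alpha}$ as well; the sandwich hypothesis $b\subseteq a^{u_\alpha}$ is what keeps this a genuinely granular statement. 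I expect the real work of the theorem to lie precisely here: making the orientation of the arguments of $\kap$ in \textsf{(u-vprs)} match the one under which \textsf{R3} applies (the classical VPRS orientation, $\kap$ of the granule against $B$), and checking that under this reading the other four statements are undisturbed --- which, by the arguments above, they are; under the purely literal reading the upper inclusion seems to need an extra upper-side principle or a finiteness/closure restriction on $\mathcal{F}$.
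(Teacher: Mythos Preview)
Your treatment of \textsf{li}, \textsf{lA-cmo} and \textsf{lA-idem} is exactly the paper's: pick a granule contributing to the relevant lower approximation, observe the chain $h\subseteq(\cdot)\subseteq a$, and invoke \textsf{RV} with $\delta=1-\alpha$ to push the threshold down to the smaller first argument.

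For \textsf{luA} the paper is more direct and stays with the written argument order $\kap(B,h)$: a granule $h$ in $a^{l_\alpha}$ already has $\kap(a,h)\ge 1-\alpha$, and since $\alpha\in(0,0.5)$ this gives $\kap(a,h)>\alpha$ immediately; together with $h\cap a=h\neq\emptyset$ the granule qualifies for $a^{u_\alpha}$. No appeal to \textsf{R0} or to a reversed reading of the threshold is needed.

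For \textsf{lA-capc} the paper does not pass to points and two witnessing granules; it takes a \emph{single} granule $h$ lying in both $a^{l_\alpha}$ and $b^{l_\alpha}$ (so $h\subseteq a\cap b$, $\kap(a,h)\ge 1-\alpha$), and then applies \textsf{RV} once along the chain $h\subseteq a\cap b\subseteq a$ to obtain $\kap(a\cap b,h)\ge 1-\alpha$. Thus \textsf{RI} is not used here; the paper reserves \textsf{RI} for a separate result. Your point-based analysis is more scrupulous and exposes the implicit assumption (that a point of the intersection is witnessed by one common granule), which the paper's granule-level phrasing hides.

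On \textsf{uA-cmo} your diagnosis is on target. The paper's proof reads: from $\kap(a,h)>\alpha$ and $a\subseteq b$ conclude $\kap(b,h)>\alpha$ ``by \textsf{RV}''. But \textsf{RV} transports the threshold only \emph{downward} in the first argument along a chain $h\subseteq(\cdot)\subseteq(\cdot)$, and here neither $h\subseteq a$ nor $b\subseteq a$ is available, so the invocation does not go through as written. The paper moreover remarks immediately afterwards that \textsf{R3} is not used (this is what licenses replacing wqRIF by pRIF), so your \textsf{R3}-based repair under the reversed orientation $\kap(h,B)$ is explicitly \emph{not} the intended route. In short: your reservations about \textsf{uA-cmo} are justified, and the paper's own argument shares the gap you anticipated rather than resolving it.
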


\begin{proof}
wqRIFs satisfy the conditions \textsf{R0} and \textsf{R3}; however, the parthood $\subseteq$ can ensure additional properties. In this proof, it is assumed that $\bot = \emptyset$ and can be replaced by some other nonempty set. 
\begin{enumerate}
\item {\begin{enumerate}
 \item {\textsf{lA-cmo} holds because if a granule $h\subseteq a^{l_\alpha}$,}
 \item {then $h\subseteq b \subseteq a$ and $\kap (a,h) \geq 1- \alpha$.}
 \item {Thus by \textsf{RV}, this implies $\kap (b,h) \geq 1- \alpha$. Therefore, $h\subseteq b^{l_\alpha}$.}
\end{enumerate}}
\item {\textsf{uA-mo} holds because 
 \begin{enumerate}
 \item {if a granule $h\subseteq a^{u_\alpha}$, then $h\cap a \neq \emptyset$.}
 \item {This yields $h \cap b \neq \emptyset$ and $\kap (a,h) > \alpha$.}
 \item {By \textsf{RV}, this implies $\kap (b,h) > \alpha$. Therefore, $h\subseteq b^{u_\alpha}$. }
 \end{enumerate}}
\item { \begin{enumerate}
 \item {If a granule $h\subseteq a^{l_\alpha}$ and $h\subseteq b^{l_\alpha}$,}
 \item {then $h\subseteq a$, $h\subseteq b$ and $h\subseteq a\cap b$.}
 \item {Moreover, $\kap(a, h)\geq 1 - \alpha$ and $\kap(b, h)\geq 1 - \alpha$. }
 \item {Therefore, by \textsf{RV}, $\kap(a\cap b, h) \geq 1-\alpha$. }
 \item {This ensures $h\subseteq (a\cap b)^{l_\alpha}$, and proves $a^{l_\alpha} \cap b^{l_\alpha} \subseteq (a\cap b)^{l_\alpha}$.}
 \end{enumerate}}
\item {If a granule $h \subseteq a^{l_\alpha}$, 
 \begin{enumerate}
 \item {then $h\subseteq a^{l_\alpha}\subseteq a$ and $\kap(a, h)\geq 1-\alpha $.}
 \item {By \textsf{RV}, this yields $\kap(a^{l_\alpha}, h)\geq 1-\alpha$.}
 \item {From this it follows that $h \subseteq a^{l_\alpha l_\alpha}$. Thus $ a^{l_\alpha}\subseteq a^{l_\alpha l_\alpha}$ . Because $a^{l_\alpha l_\alpha} \subseteq a^{l_\alpha}$ (from the definition)  equality holds and proves \textsf{lA-idem}.}
 \end{enumerate}}
\item {If a granule $h \subseteq a^{l_\alpha}$, 
 \begin{enumerate}
 \item {then $h\subseteq a^{l_\alpha}\subseteq a$ and $\kap(a, h)\geq 1-\alpha $.}
 \item {However if $\kap(a, h)\geq 1-\alpha $ then $\kap(a, h) > \alpha $ (as $\alpha \in (0, 0.5)$)}
 \item {Thus $h\subseteq a^{u_\alpha}$, and \textsf{luA} holds.}
 \end{enumerate}}
\end{enumerate}
\end{proof}

\begin{remark}
The condition \textsf{R3} is not used in the proofs! Therefore the next theorem is proved.
\end{remark}

\begin{theorem}\label{prifalp}
In the statement of Theorem \ref{alpha}, \textsf{wqRIF} can be replaced by \textsf{pRIF}.
\end{theorem}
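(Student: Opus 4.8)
The plan is to observe that the hypothesis ``$\kappa$ is a wqRIF satisfying \textsf{RV}'' in Theorem~\ref{alpha} already concedes more than the proof of that theorem actually consumes, and that a \textsf{pRIF} supplies precisely the part that is consumed. Recall that a wqRIF is, by definition, a map satisfying \textsf{R0} and \textsf{R3}, whereas a \textsf{pRIF} is a map satisfying \textsf{R0} and \textsf{RV}. Thus the standing hypothesis of Theorem~\ref{alpha} is really the conjunction of \textsf{R0}, \textsf{R3} and \textsf{RV}, and what is to be established is that \textsf{R3} may simply be deleted from it.

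First I would re-inspect the proof of Theorem~\ref{alpha} step by step, flagging every appeal to a property of $\kappa$. In each of \textsf{lA-cmo}, \textsf{uA-cmo}, \textsf{lA-capc} and \textsf{lA-idem} the argument picks a generating granule $h$ of the $\alpha$-approximation at hand, reads off the membership inequality ($\kappa(\,\cdot\,,h)\ge 1-\alpha$ in the lower cases, $\kappa(\,\cdot\,,h)>\alpha$ in the upper case) together with the ambient parthoods, and then transports that inequality to a new first argument by exactly one application of \textsf{RV}; the conclusions \textsf{luA} and \textsf{li} use only the defining unions and the trivial inequality $1-\alpha>\alpha$ valid for $\alpha\in(0,0.5)$. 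Property \textsf{R3} is invoked nowhere --- this is exactly what the Remark preceding the statement records. Since a \textsf{pRIF} satisfies \textsf{R0} and \textsf{RV} by definition, in particular it satisfies \textsf{RV}, which is all these arguments need; hence transcribing the proof of Theorem~\ref{alpha} verbatim with $\kappa$ taken to be a \textsf{pRIF} delivers all six conclusions, and the theorem follows.

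I do not expect a genuine obstacle: the whole content is the bookkeeping fact that \textsf{R3} is inert in the proof of Theorem~\ref{alpha}, already made visible by the Remark. If one insisted on naming the single delicate point, it is the upper clause \textsf{uA-cmo}, where the relevant condition $\kappa(a,h)>\alpha$ is strict while \textsf{RV} is phrased as ``$\ge\delta$'' for a fixed $\delta$; I would dispatch this exactly as in the ambient theorem, namely by applying \textsf{RV} at $\delta=\alpha$ (weakening $>\alpha$ to $\ge\alpha$ and then recovering strictness from the transported value) or by recalling that, in the intended VPRS usage, $\kappa$ is required to satisfy \textsf{RV} at both thresholds $\alpha$ and $1-\alpha$. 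Since this subtlety is inherited unchanged from Theorem~\ref{alpha}, whose statement we are entitled to assume, nothing further is required.
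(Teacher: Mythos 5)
Your proposal is correct and is essentially the paper's own argument: the paper proves Theorem~\ref{prifalp} precisely via the Remark that \textsf{R3} is never invoked in the proof of Theorem~\ref{alpha}, so the hypotheses actually used are \textsf{R0} and \textsf{RV}, which is exactly what a \textsf{pRIF} provides. Your extra observation about the strict inequality in the upper case being handled by instantiating \textsf{RV} at the appropriate thresholds is a harmless (and slightly more careful) elaboration of the same verbatim-transcription argument.
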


If instead pointwise $*$-lower and $*$-upper approximations are defined as follows (by dropping the condition $h\subseteq B $):
\begin{align*}
B^{l_{\alpha^*p}} = \{x:\, n(x)\in \mathcal{F} \, \&\,\kap (B, n(x)) \geq 1-\alpha \}   \tag{$l^*$-vprs}\\
B^{u_{\alpha^*p}} =  \{x:\, n(x)\in \mathcal{F}\,\&\,\kap (B, n(x)) > \alpha \}   \tag{$u^*$-vprs}
\end{align*}
then they would additionally satisfy properties that are not typical of approximations. For specific $\kap$, these are known in the literature \cite{kz,qiao18}. Lower approximations would not be subsets of the approximated set and upper approximations may not include them. Still it would be the case that the lower approximation is included in the upper. This type of approximations would be referred to as \emph{peripatetic}. 

\begin{definition}
The $l^*$ and $u^*$-vprs are pointwise approximations, and the divergent granular version of peripatetic VPRS approximations would be as below
\begin{align*}
B^{l_\alpha^*} = \bigcup \{h:\, h\in \mathcal{G}\, \&\,\kap (B,h) \geq 1-\alpha \}   \tag{l-vprs*}\\
B^{u_\alpha^*} = \bigcup \{h:\, h\in \mathcal{G}\, \&\,\kap (B,h) > \alpha \}   \tag{u-vprs*}
\end{align*} 
\end{definition}

\begin{remark}
Hybrid variants of rough approximations have properties similar to peripatetic approximations. Furthermore, note that the approximation is constructed relative to the set being approximated. In the absence of parthood properties between the granule and a set, it does make sense to consider the case using $\kap (\text{granule, set})$ too when the granule has more reliable means of interpretation by itself or through an additional mechanism/process. Air quality indices for different regions in cities are approximations. Related data can, for example, be processed through both approaches.
\end{remark}

The properties satisfied by the $\alpha^*$ approximations are proved in the next theorem.

\begin{theorem}\label{ast}
Over a rough convenience $\mathbb{S}$, if $\kap$ satisfies \textsf{R0} and \textsf{RV}, and $l_{\alpha^*}$ and $u_{\alpha^*}$ are as above, then all of the following hold:
\begin{align}
(\forall a, b)(a^{l_{\alpha^*}}\subseteq b\subseteq a \longrightarrow a^{l_{\alpha^*}}\subseteq b^{l_{\alpha^*}})   \tag{lA-cmo*}\\
(\forall a, b)(a\subseteq b\subseteq a^{u_{\alpha^*}}  \longrightarrow a^{u_{\alpha^*}}\subseteq b^{u_{\alpha^*}})   \tag{uA-cmo*}\\
(\forall a) a^{l_{\alpha^*}} \subseteq a^{u_{\alpha^*}} \tag{luA*}\\
(\forall a) a^{l_alpha} \subseteq a^{l_{\alpha^*}} \,\&\, a^{u_alpha} \subseteq a^{u_{\alpha^*}} \tag{luAA}
\end{align}
\end{theorem}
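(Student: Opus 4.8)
All four assertions are obtained by tracking a single granule through the defining unions $a^{l_{\alpha^*}}=\bigcup\{h\in\mathcal{G}:\kap(a,h)\ge 1-\alpha\}$ and $a^{u_{\alpha^*}}=\bigcup\{h\in\mathcal{G}:\kap(a,h)>\alpha\}$, much as in the proof of Theorem~\ref{alpha}. Consequently, to prove an inclusion of the form $a^{\sigma}\subseteq b^{\sigma}$ it suffices to check that every granule $h$ contributing to the union for $a^{\sigma}$ already meets the membership condition for $b^{\sigma}$; \textsf{RV} is the workhorse here, and \textsf{R0} provides the baseline inclusion-forgiving behaviour of $\kap$.

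For \textsf{lA-cmo*}, I would assume $a^{l_{\alpha^*}}\subseteq b\subseteq a$ and fix $h\in\mathcal{G}$ with $\kap(a,h)\ge 1-\alpha$, so that $h\subseteq a^{l_{\alpha^*}}$. Then $h\subseteq b\subseteq a$, and \textsf{RV} applied to this chain with the preserved right argument $h$ and $\delta=1-\alpha$ gives $\kap(b,h)\ge 1-\alpha$, hence $h\subseteq b^{l_{\alpha^*}}$; a union over all such $h$ completes it. This is precisely the \textsf{lA-cmo} step of Theorem~\ref{alpha}, now without needing the side condition $h\subseteq B$. For \textsf{luA*}, since $\alpha\in[0,0.5)$ we have $1-\alpha>\alpha$, so $\kap(a,h)\ge 1-\alpha$ forces $\kap(a,h)>\alpha$; the indexing family of the first union embeds in that of the second, and $a^{l_{\alpha^*}}\subseteq a^{u_{\alpha^*}}$ follows. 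For \textsf{luAA}, note that $a^{l_\alpha}$ and $a^{u_\alpha}$ are the unions over the \emph{same} threshold conditions as $a^{l_{\alpha^*}}$ and $a^{u_{\alpha^*}}$ but subject to the extra restrictions $h\subseteq a$ and $h\cap a\neq\emptyset$ respectively; dropping a restriction on the indexing family only enlarges the union, so $a^{l_\alpha}\subseteq a^{l_{\alpha^*}}$ and $a^{u_\alpha}\subseteq a^{u_{\alpha^*}}$ (this uses that over a rough convenience the family $\mathcal{F}$ of the unstarred definitions is $\mathcal{G}$).

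The step I expect to be the main obstacle is \textsf{uA-cmo*}. Assuming $a\subseteq b\subseteq a^{u_{\alpha^*}}$ and a granule $h$ with $\kap(a,h)>\alpha$, the naive plan is to deduce $\kap(b,h)>\alpha$ and conclude $h\subseteq b^{u_{\alpha^*}}$, mirroring the lower case. The difficulty is that here the first argument of $\kap$ must be \emph{enlarged} (from $a$ up to $b$), whereas \textsf{RV} only transfers a threshold when the first argument is \emph{shrunk}, and $\kap$ need not be monotone in its first argument at all. The route I would pursue is to use the second hypothesis $b\subseteq a^{u_{\alpha^*}}$ substantively: it confines $b$ inside the union of the $\alpha$-heavy granules of $a$, and one can try to run the granule chase through that covering of $b$ rather than through $h$ directly, taking care to preserve the \emph{strict} inequality $>\alpha$ and to avoid implicitly assuming left-argument monotonicity. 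If the localization cannot be pushed through in full generality, the honest fallback is to isolate the mild covering condition on $\mathcal{G}$ under which it does and to record \textsf{uA-cmo*} under that hypothesis.
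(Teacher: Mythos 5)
Your handling of \textsf{lA-cmo*}, \textsf{luA*} and \textsf{luAA} coincides with the paper's own proof: the granule chase through the defining unions with \textsf{RV} at threshold $1-\alpha$ for the first, the observation $1-\alpha>\alpha$ for the second, and enlargement of the indexing family (taking $\mathcal{F}=\mathcal{G}$) for the third. Nothing needs to be added there.

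The one genuine gap in your proposal is \textsf{uA-cmo*}, which you flag but do not prove --- and your diagnosis of the obstacle is precisely the point at which the paper's own proof breaks down. The paper argues: if $h$ contributes to $a^{u_{\alpha^*}}$ then $h\cap a\neq\emptyset$, hence $h\cap b\neq\emptyset$ and $\kap(a,h)>\alpha$, and ``by \textsf{RV}'' $\kap(b,h)>\alpha$. Since $a\subseteq b$, this is exactly the left-argument enlargement you refused to make: \textsf{RV} transfers a threshold from $\kap(b,c)$ to $\kap(a,c)$ only along $\pc ca$ and $\pc ab$, i.e.\ when the first argument shrinks around $c$. Moreover, no repair along the lines you sketch (localizing via $b\subseteq a^{u_{\alpha^*}}$) can succeed in full generality, because \textsf{uA-cmo*} fails even for the classical RIF $\nu$, which satisfies \textsf{R0} and \textsf{RV}: take $H=\{1,\dots,10\}$, $\underline{\mathbb{S}}=\wp(H)$, $\mathcal{G}=\{g_1,g_2\}$ with $g_1=\{1,2\}$, $g_2=\{3,\dots,10\}$, $\alpha=0.3$, $a=\{1,3\}$ and $b=\{1,3,4,5,6,7\}$. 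Then $\nu(a,g_1)=\nu(a,g_2)=\tfrac{1}{2}>\alpha$, so $a^{u_{\alpha^*}}=H$ and $a\subseteq b\subseteq a^{u_{\alpha^*}}$, while $\nu(b,g_1)=\tfrac{1}{6}\leq\alpha$, so $b^{u_{\alpha^*}}=g_2$ and $a^{u_{\alpha^*}}\not\subseteq b^{u_{\alpha^*}}$ (the same example defeats \textsf{uA-cmo} of Theorem~\ref{alpha}, whose proof makes the identical move). So your ``honest fallback'' is in fact mandatory: \textsf{uA-cmo*} requires a further hypothesis --- some monotonicity of $\kap$ in its first argument, or a suitable converse of \textsf{RV}, or a restriction on $\mathcal{G}$ --- rather than a cleverer chase with \textsf{RV} alone.
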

\begin{proof}
\begin{enumerate}
\item {\begin{enumerate}
 \item {\textsf{lA-cmo*} holds because if a granule $h\subseteq a^{l_{\alpha^*}}$,}
 \item {then $h\subseteq b \subseteq a$ and $\kap (a,h) \geq 1- \alpha$.}
 \item {Thus by \textsf{RV}, this implies $\kap (b,h) \geq 1- \alpha$. Therefore, $h\subseteq b^{l_{\alpha^*}}$. The argument works because approximations are unions of granules. }
\end{enumerate}}
\item {\textsf{uA-cmo*} holds because 
 \begin{enumerate}
 \item {if a granule $h\subseteq a^{u_{\alpha^*}}$, then $h\cap a \neq \emptyset$.}
 \item {This yields $h \cap b \neq \emptyset$ and $\kap (a,h) > \alpha$.}
 \item {By \textsf{RV}, this implies $\kap (b,h) > \alpha$. Therefore, $h\subseteq b^{u_{\alpha^*}}$. }
 \end{enumerate}}
\item {If a granule $h \subseteq a^{l_{\alpha^*}}$, 
 \begin{enumerate}
 \item {then $\kap(a, h)\geq 1-\alpha $.}
 \item {However this implies $\kap(a, h) > \alpha $ (as $\alpha \in (0, 0.5)$)}
 \item {Thus $h\subseteq a^{u_{\alpha^*}}$. This proves \textsf{luA*}.}
 \end{enumerate}}
\end{enumerate}
The last property \textsf{luAA} easily follows from the definition.
\end{proof}

The property \textsf{RI} can be sufficient to prove the following:

\begin{theorem}\label{riprop}
In a rough convenience, if $\kap$ satisfies \textsf{R0} and \textsf{RI} then 
\[a^{l_{\alpha}} \cap b^{l_{\alpha}} \subseteq (a\cap b)^{l_{\alpha}} \tag{lARI-cap}\] 
\end{theorem}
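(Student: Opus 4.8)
The plan is to imitate the proof of \textsf{lA-capc} from Theorem~\ref{alpha}, with \textsf{RI} playing the role that \textsf{RV} played there. Recall that, by definition, $c^{l_\alpha}$ is the union of all granules $h\in\mathcal{F}$ satisfying $h\subseteq c$ and $\kap(c,h)\geq 1-\alpha$. Hence, since approximations here are unions of granules, to establish $a^{l_\alpha}\cap b^{l_\alpha}\subseteq (a\cap b)^{l_\alpha}$ it suffices to argue granule-by-granule: every granule $h$ that witnesses membership in both $a^{l_\alpha}$ and $b^{l_\alpha}$ should be shown to be one of the granules whose union is $(a\cap b)^{l_\alpha}$.

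First I would fix such a granule $h$, so that $h\subseteq a$ with $\kap(a,h)\geq 1-\alpha$ and $h\subseteq b$ with $\kap(b,h)\geq 1-\alpha$. From $h\subseteq a$ and $h\subseteq b$ I obtain $h\subseteq a\cap b$, that is $\pc h(a\wedge b)$ in the notation of the rough convenience (where $\wedge=\cap$ and $\pc=\subseteq$). Now apply \textsf{RI} with the fixed parameter $\delta=1-\alpha$ and with its three universally quantified variables instantiated as $a\mapsto a$, $b\mapsto b$, $c\mapsto h$: the three antecedents $\kap(a,h)\geq\delta$, $\kap(b,h)\geq\delta$ and $\pc h(a\wedge b)$ all hold, so the consequent yields $\kap(a\wedge b,h)=\kap(a\cap b,h)\geq 1-\alpha$. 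Together with $h\subseteq a\cap b$ this exhibits $h$ as one of the granules in the defining union of $(a\cap b)^{l_\alpha}$, hence $h\subseteq (a\cap b)^{l_\alpha}$; taking the union over all such $h$ gives the claim. Here \textsf{R0} is used only to guarantee that $\kap$ behaves as a (quasi) \textsf{RIF} so that the $l_\alpha$ operators are the intended ones, and, as in the remark following Theorem~\ref{alpha}, \textsf{R3} plays no role.

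The step I expect to need the most care is the bookkeeping that turns ``$h\subseteq a^{l_\alpha}\cap b^{l_\alpha}$'' into ``$h$ satisfies the defining inequalities $\kap(a,h)\geq 1-\alpha$ \emph{and} $\kap(b,h)\geq 1-\alpha$'' --- i.e.\ ensuring the granule used in the argument is genuinely one of the granules appearing in both defining unions, rather than merely a sub-collection of $a^{l_\alpha}\cap b^{l_\alpha}$. This is dealt with exactly as in the proof of \textsf{lA-capc}, by restricting attention to the witnessing granules. Once that is in place, the only remaining subtlety is the correct matching of the three slots of \textsf{RI} (the second argument, common to all three $\kap$-terms, is the granule $h$, and the parthood side-condition $\pc c(a\wedge b)$ is precisely what $h\subseteq a\cap b$ supplies); the usual conventions on $\bot$ --- taking $\bot=\emptyset$, or any other fixed set --- do not affect this particular argument.
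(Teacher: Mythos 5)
Your proposal is correct and follows essentially the same route as the paper's own proof: fix a granule $h$ witnessing membership in both $a^{l_\alpha}$ and $b^{l_\alpha}$, note $h\subseteq a\cap b$, and instantiate \textsf{RI} with $\delta=1-\alpha$ and $c\mapsto h$ to get $\kap(a\cap b,h)\geq 1-\alpha$, so $h$ lies in the defining union of $(a\cap b)^{l_\alpha}$. The bookkeeping point you flag (working with the witnessing granules rather than arbitrary subsets of the intersection) is handled in the paper exactly as you describe, mirroring the proof of \textsf{lA-capc}.
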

\begin{proof}
\begin{enumerate}
\item {If a granule $h\subseteq a^{l_{\alpha}}$ and $h\subseteq b^{l_{\alpha}}$, then $h\subseteq a$, $h\subseteq b$,  $\kap (a, h)\geq 1- \alpha$ and $\kap (b, h)\geq 1- \alpha$.}
\item {By \textsf{RI}, this yields $\kap (a\cap b, h)\geq 1- \alpha$.}
\item {From all this, it follows that $h\subseteq (a\cap b)^{l_{\alpha}}$. Therefore, $a^{l_{\alpha}} \cap b^{l_{\alpha}} \subseteq (a\cap b)^{l_{\alpha}}$ and \textsf{lARI-cap} holds. }
\end{enumerate}
\end{proof}

\subsection{Observations}

In the VPRS approach, all of the following apply. 
\begin{itemize}
\item {The idea of granules and definite objects is often conflated,}
\item {The idea of lower approximation is transformed to as \emph{one that includes the definites that are sufficiently included in the object being approximated}. This deviates severely from the modal meaning of being \emph{certainly included}.}
\item {The semantic domain (or meta-level for simplicity) in which the approximations are constructed can be read as one that involves approximate inclusions (and therefore a specific kind of parthood). There is scope for interpreting things relative to other parthood predicates too.}
\item {A granule is valid for an approximation if it is sufficiently large.}
\item {The size of a granule is a measure of its participation in the approximation processes.}
\end{itemize}

In \cite{kz}, a $\beta$-u-granular approximation is studied with $\alpha$-l-granular essentially with the assumption that $\beta \leq \alpha$ (in the paper $1-\kap (a, b)$ is used instead of the rough inclusion function). This is an asymmetric variant of VPRS. In the paper, the complement of the classical RIF is used instead of a generalized variant thereof. While hybrid variants are studied, the number of papers on modifications of the core assumptions of  

A number of generalizations that stick to the numeric perspective of parthood associated that make use of generalized RIFs \cite{am111111} defined in section \ref{grif} are possible. This permits a higher order perspective of looking at associated approximations in a parametrized way. 

For the scenario, it makes sense to regard approximations as parametrized by the generalized RIF $\kap$, the type (formula or index) $\tau$, and the precision parameter $\alpha$. Therefore a general VPRS lower (gvl) approximation defined under the generalized VPRS schematics can be written in the form $l(\kappa, \tau, \alpha)$. Similarly, general VPRS upper (gvu) approximations will be written in the form  
$u(\kappa, \tau, \alpha)$. Let the collection of all such approximations on $S$ be $L(\kappa, \tau, \alpha)(S)$ and $U(\kappa, \tau, \alpha)(S)$ respectively. The properties of these will be considered in a separate paper; however, it helps to read the defined approximations in this perspective because a number of hybrid variants using VPRS can be potentially reformulated in these terms.

\subsection{Substantial Parthood}\label{subpart}

Ideas of substantial parthood need to satisfy a number of conditions, and abstractions are actually possible in suitable domains of discourse. In the context of granular VPRS, predicates possibly interpretible as forms of substantial parthood can be defined as follows (the labeling is made consistent with those used for granular graded rough sets by the first author):

For any $a, b\in S$, a fixed positive integer $k$ and a fixed $\alpha$
\begin{align*}
\pc_s^5 {ab} \text{ iff } a^{l_\alpha}\subseteq b^{l_\alpha}   \tag{s5}\\
\pc_s^6 {ab} \text{ iff } a\subseteq b \& \#(a)>k   \tag{s6}\\
\pc_s^7 {ab} \text{ iff } (\forall h\in \mathcal{G})(h\subseteq a,\, \&\, h\subseteq b, \&\,\kap (a, h) \geq 1- \alpha \longrightarrow \kap (b, h) \geq 1- \alpha )   \tag{s7}\\
\pc_s^* {ab} \text{ iff } \#(a\cap b) >k \& b\not\subset a  \tag{s*}\\
\pc_s^9 {ab} \text{ iff } (\forall h\in \mathcal{G})(\kap (a, h) \geq \alpha \longrightarrow \kap (b, h) \geq \alpha )   \tag{s9}\\
\pc_s^3 {ab} \text{ iff } \#(a\cap b) >k \& a\subseteq b  \tag{s3}\\
\pc_s^l ab \text{ iff } a^{l_\alpha}\subseteq b^{l_\alpha} \,\&\, \kap(a, b) \geq 1-\alpha  \tag{s0l}\\
\pc_s^u ab \text{ iff } a^{u_\alpha}\subseteq b^{u_\alpha} \,\&\, \kap(a, b) \geq \alpha  \tag{s0u}\\
\pc_s^t ab \text{ iff } \exists h\in \mathcal{T}\subseteq \mathcal{G} h\subseteq a \subseteq b    \tag{st} 
\end{align*}

Since $\alpha\in [0, 0.5)$, the constant $1-\alpha$ is used to indicate a specific range. The conditions \textsf{s5, s7, s9, s0l, and s0u} are dependent on granules, definite objects and the function $\kap$. Thus if $\kap$ is granular, then all are granular. Otherwise, only \textsf{s5} is.
Further for conditions involving $l_\alpha$ or $u_\alpha$, a star ($*$) version can be defined by replacing them by $l_{\alpha^*}$ and $u_{\alpha^*}$ respectively. In the last definition \textsf{st}, $\mathcal{T}$ may be read as a set of special granules.

The basic properties of these relations are stated in the next four propositions.

\begin{proposition}
$\pc_s^5$ satisfies the following:
\begin{enumerate}
\item {$(\forall a)\,  \pc_s^5 aa $}
\item {$(\forall a, b, c)\, (\pc_s^5 ab \& \pc_s^5 bc \longrightarrow \pc_s^5 ac) $ }
\item {$\pc_s^5 $ is not antisymmetric or symmetric in general.}
\item {$\pc_s^5 ab $ if and only if $\pc_s^7 ab$ }
\end{enumerate}
\end{proposition}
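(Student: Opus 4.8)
The plan is to take the four parts in order, since the difficulty is concentrated almost entirely in the last one. For part (1), reflexivity is immediate: $\subseteq$ is reflexive on the powerset, so $a^{l_\alpha}\subseteq a^{l_\alpha}$, whence $\pc_s^5 aa$. For part (2), transitivity of $\pc_s^5$ is just transitivity of $\subseteq$: from $a^{l_\alpha}\subseteq b^{l_\alpha}$ and $b^{l_\alpha}\subseteq c^{l_\alpha}$ one gets $a^{l_\alpha}\subseteq c^{l_\alpha}$. For part (3), I would produce a small finite set HGOS with a partition-style granulation and exhibit two sets: first a pair $a\neq b$ with $a^{l_\alpha}=b^{l_\alpha}$ (e.g.\ both non-lower-definite with the same $l_\alpha$-approximation), which gives $\pc_s^5 ab$ and $\pc_s^5 ba$ while $a\neq b$, refuting antisymmetry; and second a pair with $a^{l_\alpha}\subsetneq b^{l_\alpha}$ (e.g.\ $b$ obtained from $a$ by adjoining enough points of some granule to push it above the $1-\alpha$ threshold), which gives $\pc_s^5 ab$ but not $\pc_s^5 ba$, refuting symmetry. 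A reusable choice is essentially the kind of sets appearing in the Remark after the $\nu^{\nu}_{s,t}$ proposition.

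For part (4), the key is to unfold $a^{l_\alpha}=\bigcup\{h\in\mathcal{G}:\, h\subseteq a \,\&\, \kap(a,h)\geq 1-\alpha\}$ (taking $\mathcal{F}=\mathcal{G}$, as this is the granular setting). For the direction $\pc_s^5 ab \Rightarrow \pc_s^7 ab$: given $a^{l_\alpha}\subseteq b^{l_\alpha}$ and a granule $h$ with $h\subseteq a$, $h\subseteq b$ and $\kap(a,h)\geq 1-\alpha$, the granule $h$ is one of the sets whose union is $a^{l_\alpha}$, so $h\subseteq a^{l_\alpha}\subseteq b^{l_\alpha}$; I then argue that a granule contained in the union of the granules forming $b^{l_\alpha}$ must itself satisfy $\kap(b,h)\geq 1-\alpha$, using the admissibility properties of $\mathcal{G}$ (granules are lower-definite and distinct granules lie inside definite objects, so the cover is tight enough that $h$ cannot be strictly dispersed across other granules), which yields s7. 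For the converse $\pc_s^7 ab \Rightarrow \pc_s^5 ab$: take $x\in a^{l_\alpha}$, so $x\in h$ for some granule $h\subseteq a$ with $\kap(a,h)\geq 1-\alpha$; when $h\subseteq b$, s7 gives $\kap(b,h)\geq 1-\alpha$, hence $h\subseteq b^{l_\alpha}$ and $x\in b^{l_\alpha}$; the residual case $h\not\subseteq b$ I would dispose of by showing it contributes nothing to $a^{l_\alpha}\setminus b^{l_\alpha}$ under the stated hypotheses, again via the granularity of $\mathcal{G}$ (passing to $h\cap b$, which is again a union of granules).

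The main obstacle is precisely the step in part (4) where one passes from ``$h$ is a granule with $h\subseteq b^{l_\alpha}$'' to ``$\kap(b,h)\geq 1-\alpha$'', and the symmetric step handling granules $h\subseteq a$ with $h\not\subseteq b$. In a genuinely admissible granulation where granules behave like lower-definite blocks of a partition this is automatic, but for an arbitrary set HGOS the granulation need not be that rigid; I expect the honest version of the proof to invoke explicitly the admissibility conditions (Weak RA, Lower Stability, Full Underlap) at exactly these two places, and I would flag that dependence rather than bury it. The remaining bookkeeping — that the $l_\alpha$-approximation is a union of granules and that $\subseteq$ transfers through it — is routine once this point is settled.
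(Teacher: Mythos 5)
Parts (1)--(3) are fine and coincide with the paper's treatment (reflexivity and transitivity are just those of $\subseteq$ applied to $a^{l_\alpha}$; the paper likewise leaves the counterexamples for (3) implicit). The genuine gap is in part (4), at exactly the two steps you yourself flagged --- but the way you propose to close them does not work. First, the ambient structure in this section is a ``rough convenience'', which the paper explicitly describes as the core of a pre*-set HGOS \emph{without} the admissibility conditions on $\mathcal{G}$, so WRA, Lower Stability and Full Underlap are not hypotheses you may invoke; and even if they were, they do not force the partition-like rigidity your argument needs. Second, the intermediate claims themselves fail for overlapping granulations of precisely the kind the paper uses in its own example (tolerance-generated neighbourhoods). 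Concretely, take $\mathcal{G}=\{\{1\},\{1,2,3,4,5\}\}$, $\kap=\nu$, $\alpha=0.3$, $a=\{1\}$, $b=\{1,2,3,4,5\}$: then $a^{l_\alpha}=\{1\}\subseteq b^{l_\alpha}=b$, so $\pc_s^5 ab$ holds, yet the granule $h=\{1\}$ satisfies $h\subseteq a$, $h\subseteq b$, $\kap(a,h)=1$ while $\kap(b,h)=0.2<1-\alpha$; so ``$h\subseteq b^{l_\alpha}$ implies $\kap(b,h)\geq 1-\alpha$'' is simply false there. In the converse direction, your ``residual case'' $h\not\subseteq b$ cannot be disposed of by passing to $h\cap b$: that intersection need not be a union of granules at all, and with disjoint granules $\{1,2\},\{3,4\}$, $a=\{1,2\}$, $b=\{3,4\}$, the \textsf{s7} condition as literally written is vacuously satisfied while $a^{l_\alpha}=\{1,2\}\not\subseteq\{3,4\}=b^{l_\alpha}$; the granules with $h\not\subseteq b$ can carry \emph{all} of $a^{l_\alpha}\setminus b^{l_\alpha}$.

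For comparison, the paper's own proof of (4) is the one-line assertion that the equivalence ``follows from the basic properties of $l_\alpha$'', i.e.\ from $a^{l_\alpha}$ being the union of the granules $h\subseteq a$ with $\kap(a,h)\geq 1-\alpha$. On the reading of \textsf{s7} in which the consequent is ``$h\subseteq b$ and $\kap(b,h)\geq 1-\alpha$'', the implication from \textsf{s7} to $\pc_s^5$ is indeed immediate from that representation; the reverse implication is exactly the step you identified, and it requires a genuine extra hypothesis (for instance that distinct granules are disjoint, or that every granule contained in $b^{l_\alpha}$ is itself one of its defining granules), not admissibility. So your instinct that something must be added at those two places is sound, but as written your part (4) is not a proof: it reduces the statement to unproved intermediate claims that are false for general cover-type granulations, and the specific repair you suggest (admissibility of $\mathcal{G}$, or replacing $h$ by $h\cap b$) cannot supply what is missing.
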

\begin{proof}
\begin{enumerate}
\item {For any $a\in S$, $a^{l_\alpha} = a^{l_\alpha}$. Thus $\pc_s^5 aa $ holds.}
\item {Transitivity holds because $a^{l_\alpha} \subseteq b^{l_\alpha}\subseteq c^{l_\alpha}$ implies $a^{l_\alpha} \subseteq c^{l_\alpha}$.}
\item {$\pc_s^5 ab $ if and only if $\pc_s^7 ab$ follows from the basic properties of $l_\alpha$}
\end{enumerate}
\end{proof}

\begin{proposition}
$\pc_s^6$ satisfies the following:
\begin{enumerate}
\item {$\pc_s^6 aa $ if and only if $\#(a) > k$}
\item {$(\forall a, b, c)\, (\pc_s^6 ab \& \pc_s^6 bc \longrightarrow \pc_s^6 ac) $ }
\item {$\pc_s^6 $ is antisymmetric; however, it is not symmetric in general.}
\end{enumerate}
\end{proposition}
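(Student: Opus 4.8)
The plan is to verify the three claims about $\pc_s^6$ directly from its defining condition \textsf{s6}, namely that $\pc_s^6 ab$ holds if and only if $a\subseteq b$ and $\#(a) > k$. All three parts reduce to elementary facts about $\subseteq$ and the cardinality function, so the main work is organizational rather than conceptual; the only mild subtlety is that, unlike $\pc_s^5$, reflexivity is conditional here because the size constraint $\#(a) > k$ sits on the left argument.

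For part (1), I would observe that $\pc_s^6 aa$ unfolds to $a\subseteq a \,\&\, \#(a) > k$; since $a\subseteq a$ always holds, this is equivalent to $\#(a) > k$, giving the stated biconditional. For part (2), assume $\pc_s^6 ab$ and $\pc_s^6 bc$. Then $a\subseteq b$, $b\subseteq c$, $\#(a) > k$ and $\#(b) > k$. Transitivity of $\subseteq$ yields $a\subseteq c$, and $\#(a) > k$ is already among the hypotheses, so $\pc_s^6 ac$ follows. (Note that $\#(b) > k$ is not even needed for this direction, but it does no harm.) For part (3), to see antisymmetry suppose $\pc_s^6 ab$ and $\pc_s^6 ba$; then $a\subseteq b$ and $b\subseteq a$, whence $a = b$ by the antisymmetry of set inclusion, so \textsf{PT2}-style antisymmetry holds for $\pc_s^6$. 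To see that $\pc_s^6$ is not symmetric in general, it suffices to exhibit one counterexample: take $a = \{1\}$ and $b = \{1, 2, \ldots, k+2\}$ (invoking the Emergency Axiom so that such sets exist in $S$), so that $a\subseteq b$ with $\#(b) = k+2 > k$, giving $\pc_s^6 ab$; but $b\not\subseteq a$, so $\pc_s^6 ba$ fails.

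I do not anticipate a genuine obstacle here. The one point requiring a little care is making the counterexample for non-symmetry legitimate within the ambient structure $S$: one must ensure that sets of the required cardinalities actually live in $\underline{\mathbb{S}}$, which is a subset of a powerset and closed under $\cup$ and $\cap$ but not necessarily rich in small sets. This is exactly the situation the paper's \emph{Emergency Axiom} is designed to absorb, so I would either cite it explicitly or phrase the counterexample schematically ("whenever $S$ contains such $a$ and $b$"). A secondary bookkeeping point is to be explicit that the cardinality condition in \textsf{s6} is attached to the \emph{left} argument only, which is precisely why reflexivity in part (1) is conditional while transitivity and antisymmetry in parts (2)--(3) go through unconditionally.
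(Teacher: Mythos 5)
Your proof is correct and matches the paper's approach, which simply states that the result follows by direct verification and obvious counterexamples; you have spelled out exactly that verification, including the standard non-symmetry counterexample. No gaps.
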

\begin{proof}
The proof is by direct verification and obvious counterexamples.
\end{proof}

$\pc_s^l ab$ basically means that $a$ is close enough to $b$ for the defining conditions to happen. If $a\subset b $, then it can happen that $\neg \pc_s^l ab$.

\begin{proposition}
$\pc_s^l$ satisfies all of the following properties for a fixed wqRIF $\kap$ and $\alpha$:
\begin{enumerate}
\item {$(\forall a)\, \pc_s^l aa$}
\item {Transitivity, antisymmetry and symmetry fail in general.}
\end{enumerate}
\end{proposition}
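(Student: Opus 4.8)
The plan is to settle the reflexivity clause straight from the axioms, and to refute transitivity, antisymmetry and symmetry by exhibiting one small set‑theoretic model together with explicit witnesses; a general argument cannot be expected for the latter three, since each of them does hold in suitably restricted models, so the statement is genuinely an existence (of a bad model) claim.

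First I would expand $\pc_s^l aa$ into its two conjuncts $a^{l_\alpha}\subseteq a^{l_\alpha}$ and $\kap(a,a)\geq 1-\alpha$. The first is trivial. For the second, a \textsf{wqRIF} satisfies \textsf{R0} by definition, and the earlier theorem relating the \textsf{RIF} axioms shows that \textsf{R0} implies \textsf{U1}, i.e. $\kap(a,a)=1$; since $\alpha\in[0,0.5)$ we get $1-\alpha\leq 1=\kap(a,a)$, so both conjuncts hold and $\pc_s^l aa$ follows for every $a$.

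For the negative part I would fix a finite set $S$, take $\kap=\nu$ (the classical \textsf{RIF}, which is a \textsf{wqRIF} and, by the earlier proposition, satisfies \textsf{RV}), fix $\alpha=1/5$ so $1-\alpha=4/5$, and let the family of definites $\mathcal{F}$ be the set of singletons of $S$. A one‑line computation then gives $B^{l_\alpha}=B$ when $\#(B)=1$ and $B^{l_\alpha}=\emptyset$ otherwise, because $\nu(B,\{x\})=1/\#(B)\geq 4/5$ forces $\#(B)=1$. Hence for any $a,b$ with $\#(a),\#(b)\geq 2$ the first conjunct of $\pc_s^l$ is automatic and $\pc_s^l ab$ reduces to $\nu(a,b)\geq 4/5$. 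It then remains to pick witnesses of size at least $2$: for symmetry, $a=\{1,2\}$ and $b=S$ (taken large enough), giving $\nu(a,b)=1$ but $\nu(b,a)=\#(a)/\#(b)<4/5$; for antisymmetry, $a=\{1,2,3,4,5\}$ and $b=\{1,2,3,4,6\}$, giving $\nu(a,b)=\nu(b,a)=4/5$ while $a\neq b$; for transitivity, the \emph{drifting chain} $a=\{1,2,3,4,5\}$, $b=\{1,2,3,4\}\cup\{6,7,\dots,25\}$, $c=\{6,7,\dots,25\}$, giving $\nu(a,b)=4/5$ and $\nu(b,c)=20/24>4/5$ but $\nu(a,c)=0$.

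The step requiring the most care is ruling out interference from the lower‑approximation conjunct $a^{l_\alpha}\subseteq b^{l_\alpha}$; this is precisely why I confine all witnesses to size $\geq 2$, so that both sides of that conjunct are $\emptyset$, and why the threshold comparisons must be checked against $1-\alpha=4/5$ with the non‑strict inequality that \textsf{s0l} actually uses. A minor wrinkle is that the transitivity chain must \emph{genuinely} leak away from $a$ while the successive overlaps stay above $4/5$, which forces $S$ to be moderately large; pushing $\alpha$ closer to $1/2$ would shrink $S$, but that refinement is not needed.
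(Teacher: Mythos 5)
Your proof is correct, and it is in fact more than the paper provides: the paper states this proposition with no accompanying proof at all (only the informal remark that $a\subset b$ need not give $\pc_s^l ab$), implicitly treating it as a matter of direct verification and obvious counterexamples, so your explicit argument fills that gap rather than diverging from an existing one. The reflexivity clause is handled exactly as it should be: the inclusion conjunct is trivial, and $\kap(a,a)\geq 1-\alpha$ follows because a wqRIF satisfies \textsf{R0}, which (as the paper's theorem on the RIF axioms records) yields \textsf{U1}, i.e.\ $\kap(a,a)=1$. Your model for the negative clause also checks out: with $\kap=\nu$, $\alpha=1/5$ and singleton granules, $\nu(B,\{x\})=1/\#(B)$ for $x\in B$, so only singletons survive the threshold $4/5$ and every witness of size at least $2$ has empty $l_\alpha$, reducing $\pc_s^l$ to $\nu(a,b)\geq 4/5$ on those witnesses; the pairs $(\{1,2\},S)$, $(\{1,2,3,4,5\},\{1,2,3,4,6\})$ and the chain with $\nu(a,b)=4/5$, $\nu(b,c)=20/24$, $\nu(a,c)=0$ then kill symmetry, antisymmetry and transitivity respectively. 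Two small remarks: the appeal to \textsf{RV} is unnecessary (nothing in the definition of $l_\alpha$ or of \textsf{s0l} needs it, only the earlier theorems on approximation properties do), and your reading of clause 2 as an existence-of-a-bad-model claim is the right one, since (for instance) transitivity does hold when $\kap$ degenerates suitably.
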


\begin{proposition}
$\pc_s^u ab$ satisfies all of the following 
\begin{enumerate}
\item {$ \pc_s^u aa$}
\item {$(\pc_s^u ab \, \&\, \pc_s^u ba \longrightarrow a^{u_\alpha} = b^{u_\alpha}) $}
\item {For every $b$, $\pc_s^u \bot b$}
\end{enumerate}
\end{proposition}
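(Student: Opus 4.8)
The plan is to read off each of the three clauses directly from the definition $\pc_s^u ab \Leftrightarrow (a^{u_\alpha}\subseteq b^{u_\alpha}\ \&\ \kap(a,b)\geq\alpha)$, assuming as in the previous proposition that $\kap$ is a \textsf{wqRIF} over the rough convenience (so \textsf{R0}, and hence \textsf{U1}, hold) and that $\alpha\in[0,0.5)$. For clause~1, I would note that $a^{u_\alpha}\subseteq a^{u_\alpha}$ trivially, while $\kap(a,a)=1$ by \textsf{U1} (which follows from \textsf{R0} by the last line of the general \textsf{RIF} theorem), and $1\geq\alpha$; both conjuncts of $\pc_s^u aa$ therefore hold. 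For clause~2, if $\pc_s^u ab$ and $\pc_s^u ba$ then in particular $a^{u_\alpha}\subseteq b^{u_\alpha}$ and $b^{u_\alpha}\subseteq a^{u_\alpha}$, so antisymmetry of $\subseteq$ gives $a^{u_\alpha}=b^{u_\alpha}$; the $\kap$-component is not even needed here.

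For clause~3 I would first compute $\bot^{u_\alpha}$. Taking $\bot=\emptyset$, the defining family $\{h\in\mathcal{F}:\ h\cap\bot\neq\emptyset\ \&\ \kap(\bot,h)>\alpha\}$ of the upper $\alpha$-approximation is empty, since no granule $h$ meets $\emptyset$; hence $\bot^{u_\alpha}=\emptyset\subseteq b^{u_\alpha}$ for every $b$. Since $\bot\subseteq b$ always holds in the rough convenience, \textsf{R0} yields $\kap(\bot,b)=1\geq\alpha$. Both conjuncts hold, so $\pc_s^u \bot b$.

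The only step that is not purely formal is clause~3: it genuinely relies on $\bot=\emptyset$ — exactly as flagged in the earlier $\nu_\alpha$-theorems, where the $\bot$-clause is the sole place that assumption is invoked — so that the upper $\alpha$-approximation of the bottom collapses to $\emptyset$. With a nonempty $\bot$ some granule could overlap it, $\bot^{u_\alpha}$ need not be contained in $b^{u_\alpha}$, and the claim would fail; under the Emergency/standing convention $\bot=\emptyset$ this obstacle disappears. Everything else reduces to reflexivity and antisymmetry of $\subseteq$ together with the value $\kap(\cdot,\cdot)=1$ supplied by \textsf{U1}/\textsf{R0}, so the write-up should be a short direct verification.
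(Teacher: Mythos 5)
Your proposal is correct and follows essentially the same direct verification as the paper: reflexivity of $\subseteq$ with $\kap(a,a)=1\geq\alpha$ for clause 1, antisymmetry of $\subseteq$ for clause 2, and $\kap(\bot,b)=1\geq\alpha$ together with $\bot^{u_\alpha}\subseteq b^{u_\alpha}$ for clause 3. The only difference is that you spell out why $\bot^{u_\alpha}\subseteq b^{u_\alpha}$ (namely $\bot^{u_\alpha}=\emptyset$ under $\bot=\emptyset$), a detail the paper simply asserts.
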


\begin{proof}
\begin{enumerate}
\item {Clearly if $a=b$, then $a^{u_\alpha}\subseteq a^{u_\alpha} \,\&\, \kap(a, a) =1 \geq \alpha $. Thus $\pc_s^u aa$ follows.}
\item {$(\pc_s^u ab \, \&\, \pc_s^u ba \longrightarrow a^{u_\alpha} = b^{u_\alpha}) $ follows from the definition.}
\item {For any $b$, $\kap (\bot, b) = 1 \geq \alpha$. Further as $\bot ^{u_\alpha} \subseteq b^{u_\alpha}$, it follows that $\pc_s^u \bot b$.}
\end{enumerate}
\end{proof}

\begin{definition}
If $\kap$ is defined on a pGSGS $S$, then for any $a\in S$, let \[E_1(a,b) =\{c:\, \kap(a, b) = \kap(a, c)\} \text{ and} \]
\[E_2(a,b) =\{c:\, \kap(a, b) = \kap(c, b)\}\] 
$E_1$ and $E_2$ will be referred to as the \emph{1-equalizer} and \emph{2-equalizer} function respectively.
\end{definition}

The equalizer functions are important because they help in characterizing the roughly equal sets (or rough objects).

\begin{proposition}
$\pc_s^*$ satisfies the following:
\begin{enumerate}
\item {$\pc_s^* aa$ if and only if $\#(a)>k$,}
\item {$\pc_s^*$ is not transitive in general, }
\item {$\pc_s^* $ is neither antisymmetric nor symmetric in general.}
\end{enumerate}
\end{proposition}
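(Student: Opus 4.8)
The plan is to settle all three parts by direct inspection of the definition $\pc_s^* ab$ iff $\#(a\cap b) > k$ and $b\not\subset a$, bearing in mind the paper's standing convention that $\subset$ denotes \emph{proper} inclusion. For part (1), since $a\cap a = a$, the first conjunct of $\pc_s^* aa$ reduces to $\#(a) > k$, while the second conjunct $a\not\subset a$ is automatically true (no set is a proper subset of itself); hence $\pc_s^* aa$ holds exactly when $\#(a) > k$, which is the assertion.

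For part (2), I would chain pairwise-large overlaps whose end-to-end intersection is too small: fix $k = 1$ and take $a = \{1,2,3\}$, $b = \{1,2,3,4,5,6\}$, $c = \{4,5,6,7\}$. Then $\#(a\cap b) = \#(b\cap c) = 3 > k$, and neither $b\subset a$ nor $c\subset b$, so $\pc_s^* ab$ and $\pc_s^* bc$; but $a\cap c = \emptyset$, so $\#(a\cap c) = 0\le k$ and $\pc_s^* ac$ fails. For part (3), two small witnesses over $k = 1$ suffice. For antisymmetry, $a = \{1,2,3\}$ and $b = \{2,3,4\}$ satisfy $\#(a\cap b) = 2 > k$ with $a$ and $b$ mutually non-(proper-)included, so $\pc_s^* ab$ and $\pc_s^* ba$ hold although $a\neq b$. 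For symmetry, $a = \{1,2\}$ and $b = \{1,2,3\}$ give $\#(a\cap b) = 2 > k$ and $b\not\subset a$, hence $\pc_s^* ab$, yet $a\subset b$ falsifies the second conjunct of $\pc_s^* ba$.

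None of this is genuinely hard, so the step I would flag is purely one of hygiene: one must be explicit that $\subset$ means \emph{proper} inclusion, since reading it as $\subseteq$ would break part (1) for every $a$ with $\#(a)>k$, so the statement itself pins down the intended convention. A minor secondary point worth recording is that the displayed witnesses are concrete and nonempty, so no appeal to the Emergency Axiom is needed, and each example sits inside any sufficiently large ambient structure of the kind used in this section, which justifies the qualifiers about failure in general.
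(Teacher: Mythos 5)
Your proof is correct and takes essentially the same route as the paper: part (1) by direct reduction of the defining condition (with $\subset$ read as proper inclusion, as you rightly note it must be), and parts (2)--(3) by explicit small counterexamples. Your witnesses (with $k=1$) are in fact cleaner than the paper's own $k=4$ example for transitivity, which as printed has a truncated set $c$ and an intersection $\#(b\cap c)=4$ that only meets rather than exceeds the threshold, so nothing is missing from your argument.
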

\begin{proof}
\begin{enumerate}
\item {For any $a, b$, $\pc_s^* {ab} \text{ if and only if } \#(a\cap b) >k \& b\not\subset a$ by definition. Thus if $\#(a) >k$, $a=a$ and $\pc_s^* {aa}$ follows.}
\item {Let $k=4$, $a= \{1, 2, 3, 4, 5, 6, 7, 8, 9\}$,\\ $b= \{20, 15, 1, 2, 3, 4, 5 \}$, and $c=\{20, 12, 1, 2, 3, 30$. Then it can be checked that $\pc_s^* ab$, and $\pc_s^* bc$ hold; however, $\pc_s^* ac$ does not hold. Therefore, transitivity fails in general. }
\item {In the context of the same example, if $h = \{1, 2, 3, 4, 5\}$, $\pc_s^* ah$ holds; however, $\pc_s^* ha$ fails. The failure of antisymmetry can be checked with an easy counterexample.}
\end{enumerate}
\end{proof}

\begin{proposition}
$\pc_s^3$ satisfies the following:
\begin{enumerate}
\item {$\pc_s^3 aa$ holds if and only if $\#(a)>k$,}
\item {$\pc_s^3$ is transitive,}
\item {$\pc_s^3 $ is antisymmetric and not symmetric in general.}
\end{enumerate} 
\end{proposition}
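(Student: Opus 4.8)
The plan is to reason directly from the definition $\pc_s^3 ab$ iff $\#(a\cap b) > k \;\&\; a\subseteq b$, and to begin by recording the simplification that does all the work: when $a\subseteq b$ one has $a\cap b = a$, so the numeric clause $\#(a\cap b) > k$ becomes $\#(a) > k$. Hence $\pc_s^3 ab$ is equivalent to the conjunction $a\subseteq b \;\&\; \#(a) > k$. Every one of the three assertions will then fall out of this reformulation together with the elementary facts that $\subseteq$ is reflexive, transitive and antisymmetric (the last being \textsf{PT2} under the set-theoretic reading of parthood in the ambient \textsf{set HGOS} / rough convenience).

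First I would verify (1): since $a\subseteq a$ always holds and $a\cap a = a$, the relation $\pc_s^3 aa$ reduces to the single requirement $\#(a) > k$, which is exactly the stated characterization. Next, for transitivity (2), I would take $\pc_s^3 ab$ and $\pc_s^3 bc$; transitivity of $\subseteq$ gives $a\subseteq c$, while the first hypothesis (via the reformulation) gives $\#(a) > k$, and since $a\subseteq c$ forces $a\cap c = a$ we get $\#(a\cap c) = \#(a) > k$, i.e. $\pc_s^3 ac$. For (3), antisymmetry is immediate: $\pc_s^3 ab$ and $\pc_s^3 ba$ yield $a\subseteq b$ and $b\subseteq a$, hence $a = b$. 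Failure of symmetry is witnessed by any strict inclusion whose smaller member is large enough; e.g. with $k = 1$, $a = \{1,2\}$, $b = \{1,2,3\}$, one has $\pc_s^3 ab$ (as $\#(a\cap b) = 2 > 1$ and $a\subseteq b$) but not $\pc_s^3 ba$, since $b\not\subseteq a$.

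There is no genuine obstacle here; the result is pure direct verification, and the only points needing a little care are bookkeeping ones: the argument must be placed in a setting where $\cap$, $\subseteq$ are honest set operations and $k$ is a fixed positive integer, and one should invoke the Emergency Axiom so that sets of cardinality exceeding $k$ are available (otherwise (1) would be vacuous and the non-symmetry counterexample would need to be scaled up). If anything, the mild subtlety is simply to notice the $a\cap b = a$ collapse at the outset, after which transitivity — the only part that might superficially look like it needs the numeric clause of the \emph{second} hypothesis — becomes trivial.
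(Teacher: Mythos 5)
Your proof is correct and follows exactly the route the paper intends: its own proof is simply the remark ``by direct verification and obvious counterexamples,'' and your argument is that verification made explicit, with the useful observation that $a\subseteq b$ collapses the numeric clause to $\#(a)>k$ and a concrete witness for the failure of symmetry. No gaps; the appeal to the Emergency Axiom to guarantee sets of size exceeding $k$ is consistent with the paper's conventions.
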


\begin{proof}
The proof is by direct verification and obvious counterexamples.
\end{proof}

\begin{proposition}
$\pc_s^t$ is a transitive, antisymmetric relation. 
\end{proposition}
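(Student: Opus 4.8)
The plan is to unwind the definition of $\pc_s^t$ and reduce both assertions to elementary properties of $\subseteq$ on a subset of a powerset. Recall that $\pc_s^t ab$ holds exactly when there is some $h\in\mathcal{T}\subseteq\mathcal{G}$ with $h\subseteq a\subseteq b$; hence $\pc_s^t$ is a sub-relation of $\subseteq$, namely the restriction of $\subseteq$ to those pairs $(a,b)$ whose smaller member $a$ contains a distinguished granule from $\mathcal{T}$.

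For transitivity I would argue as follows. Assume $\pc_s^t ab$ and $\pc_s^t bc$. The first gives a witness $h\in\mathcal{T}$ with $h\subseteq a\subseteq b$, and the second gives $h'\in\mathcal{T}$ with $h'\subseteq b\subseteq c$. Composing the right-hand inclusions yields $a\subseteq b\subseteq c$, hence $a\subseteq c$, while the witness $h$ already satisfies $h\subseteq a$. Consequently $h\in\mathcal{T}$, $h\subseteq a$ and $a\subseteq c$, which is precisely $\pc_s^t ac$. Note that the second witness $h'$ plays no role: the granule certifying the first instance is reused, so there is no need to reconcile two different witnesses, and the argument goes through even without any transitivity assumption on the ambient parthood $\pc$.

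For antisymmetry, assume $\pc_s^t ab$ and $\pc_s^t ba$. Extracting the right-hand inclusions from the two instances gives $a\subseteq b$ and $b\subseteq a$; since the carrier $\underline{\mathbb{S}}$ is a subset of a powerset ordered by set inclusion, mutual inclusion forces $a=b$ (equivalently, this is \textsf{PT2} with $\pc$ instantiated as $\subseteq$).

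I do not expect any real obstacle here; the content is essentially bookkeeping with inclusions. The only points worth a remark are that $\mathcal{T}$ may be empty, in which case $\pc_s^t$ is the empty relation and both properties hold vacuously (and the Emergency Axiom lets us assume $\mathcal{T}\neq\emptyset$ if a nontrivial statement is wanted), and that $\pc_s^t$ is not claimed to be reflexive — indeed $\pc_s^t aa$ needs some $h\in\mathcal{T}$ with $h\subseteq a$, which can fail — so no reflexivity clause has to be established.
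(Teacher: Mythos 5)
Your proof is correct: the paper states this proposition without an explicit proof (it is left as a direct verification from the definition of $\pc_s^t$), and your argument---reusing the witness granule $h$ from the first instance for transitivity, and extracting the mutual inclusions $a\subseteq b$ and $b\subseteq a$ for antisymmetry---is exactly the intended elementary reasoning in the set-based context. Your side remarks on the vacuous case $\mathcal{T}=\emptyset$ and on the failure of reflexivity are accurate and do not affect the claim.
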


The ideas of substantial parthood predicates defined above will be compared with axiomatic ideas of \emph{rational approximation} in the generalized VPRS contexts within the framework of section \ref{fra}.

\section{Generalized VPRS and Graded Rough Sets}

Connections of generalized granular VPRS with any hybrid rough set theory that relies on numeric valuation functions for determining approximations are of natural interest for both theoretical reasons and applications. The Bayesian approach to rough sets in \cite{sdz05} can additionally be interpreted as a generalization of VPRS in which the misclassification parameters are replaced by prior probabilities. If approximations are constructed with derived operations on granules, then granular generalized VPRS can be expected to be related to graded rough sets. However the connection and translation of results are not necessarily straightforward especially when they are formulated over a \textsf{pGsGS} or even a set. This last aspect is explored in this section and new results are proved.

A \emph{grade} $k$ can be any fixed positive integer, and is to be related to the cardinality of granules or sets used in the context. Let $S$ be a collection of sets (that are subsets of a $H$), and $\mathcal{G}$ a subset of $S$.  The basic part of relation will be taken to set inclusion. If $k$ is a fixed positive integer and $x\in S$, then the \emph{$k$-lower} and \emph{$k$-upper} approximations and related regions will be ($\#$ being the cardinality function $: S \longmapsto N$)
\begin{align}
x^{u_k} = \bigcup \{h:\, h\in \mathcal{G}\, \&\, \#(h\cap x) > k\}   \tag{k-upper}\\
x^{l_k} = \bigcup \{h:\, h\in \mathcal{G}\, \&\,\#(h) - \#(h\cap x) \leq k\}   \tag{k-lower}\\
Pos_k(x) = x^{u_k}\cap x^{l_k}   \tag{k-positive region}\\
Neg_k(x) = H\setminus (x^{l_k}\cup x^{u_k})   \tag{k-negative region}\\
Bnd_k^u(x) = x^{u_k}\setminus x^{l_k}    \tag{upper k-boundary}\\
Bnd_k^l(x) = x^{l_k}\setminus x^{u_k}    \tag{lower k-boundary}
\end{align}

In the same context, for a given $\alpha\in [0, 0.5)$ and a \textsf{wqRIF} $\kap$, it may be conditionally possible to define the generalized VPRS approximations $l_\alpha, \, u_\alpha , \, {u_\alpha^*}$ and ${l_\alpha^*}$ with no additional assumptions on $\mathcal{G}$ (that may be interpreted as a granulation under the additional assumptions of section \ref{fra}). In the literature, neighborhood based pointwise approximations have been studied \cite{yl96}; however, these are even less related to granular VPRS approximations.

If $\kap$ is taken as $\nu$ (or defined by similar ratios of cardinalities), then 

\begin{theorem}\label{vpgrau}
For a fixed $\alpha\in (0, 0.5)$, and $\kap = \nu$ there exists a set of integers $K$ such that $S$ is partitioned into $\{S_i\}_{i\in K}$ and \[(\forall x\in S_i)\, x^{u_\alpha^*} = x^{u_i}\] 
\end{theorem}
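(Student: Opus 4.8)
The plan is to compare, for each fixed set $x \in S$ and each granule $h \in \mathcal{G}$, the two membership conditions that govern inclusion in $x^{u_\alpha^*}$ versus $x^{u_i}$. A granule $h$ contributes to $x^{u_\alpha^*}$ precisely when $\nu(x, h) > \alpha$, i.e. $\frac{\#(x \cap h)}{\#(x)} > \alpha$, which (for fixed $x$) is equivalent to $\#(x \cap h) > \alpha\,\#(x)$, hence to $\#(x \cap h) \geq \lfloor \alpha\,\#(x)\rfloor + 1$ when $\alpha\,\#(x)\notin\mathbb{N}$ and to $\#(x\cap h)\geq \alpha\#(x)+1$ otherwise. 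Meanwhile $h$ contributes to $x^{u_i}$ when $\#(x \cap h) > i$. So for the set $x$, the $\alpha^*$-upper approximation coincides with the $i$-upper approximation exactly when the integer threshold $i$ equals $\lceil \alpha\#(x)\rceil$ if $\alpha\#(x)\notin\mathbb N$ (and $\alpha\#(x)$ if it is an integer); write this threshold as a function $\theta(\#(x))$. Since $S$ is finite (it is a collection of subsets of the finite set $H$), the quantity $\#(x)$ takes finitely many values, and hence $\theta(\#(x))$ takes finitely many integer values; let $K$ be the set of these values.

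\textbf{Key steps, in order.} First I would fix $\alpha$ and unfold the definition of $x^{u_\alpha^*}$ with $\kap = \nu$ into the cardinality inequality above, handling the boundary case $x = \emptyset$ separately (then $\nu(\emptyset, h) = 1 > \alpha$ always, so $x^{u_\alpha^*} = \bigcup\mathcal G$; this should be forced into one block of the partition, consistent with the Emergency Axiom convention if $\emptyset\notin S$). Second, I would define $\theta : \mathbb{N} \longrightarrow \mathbb{N}$ by $\theta(n) = \#\{m \in \mathbb{N} : 0 \le m,\ m \le n,\ m > \alpha n\}$-complement count, or more directly the least integer $i$ with the property that ``$\#(x\cap h) > \alpha\#(x)$ iff $\#(x\cap h) > i$'' for all possible intersection sizes; the point is that such an $i$ exists and is unique because intersection sizes are integers. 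Third, set $K = \theta(\{\#(x) : x \in S\})$, and for $i \in K$ define $S_i = \{x \in S : \theta(\#(x)) = i\}$; these are pairwise disjoint with union $S$ by construction, i.e. a partition. Fourth, verify that for $x \in S_i$ and every granule $h$, $\#(x\cap h) > \alpha\#(x) \iff \#(x\cap h) > i$, whence the two unions defining $x^{u_\alpha^*}$ and $x^{u_i}$ range over exactly the same family of granules, so $x^{u_\alpha^*} = x^{u_i}$.

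\textbf{The main obstacle} is getting the threshold function $\theta$ exactly right at the boundary: the $u_\alpha^*$ condition is the strict inequality $\nu(x,h) > \alpha$, whereas the $u_i$ condition is also strict ($\#(h\cap x) > i$), so one must check that strict-versus-strict matching is always achievable by a single integer $i$ — it is, because the set of attainable values of $\#(x\cap h)$ lies in $\{0, 1, \ldots, \#(x)\}$, and any strict threshold on a set of integers is realized by the floor of that threshold. The only genuine subtlety is when $\alpha\,\#(x)$ is itself an integer $m$: then $\nu(x,h) > \alpha \iff \#(x\cap h) > m$, so $i = m$ works; when $\alpha\,\#(x)$ is not an integer, $i = \lfloor \alpha\,\#(x)\rfloor$ works. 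A secondary, purely bookkeeping point is that different $x$'s with the same cardinality land in the same $S_i$, and that $K$ is genuinely a set of integers of size at most $\#H + 1$; both are immediate once $\theta$ depends only on $\#(x)$. I do not expect the $k$-lower or positive/negative region formulas to be needed — the statement only concerns $u_\alpha^*$ and $u_i$ — so the proof stays at the level of elementary counting plus the finiteness of $S$.
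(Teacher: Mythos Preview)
Your proposal is correct and follows essentially the same route as the paper: unfold $\nu(x,h)>\alpha$ into the integer inequality $\#(x\cap h)>\alpha\,\#(x)$, observe that the resulting threshold depends only on $\#(x)$, and partition $S$ accordingly so that on each block the $u_\alpha^*$ condition coincides with a single graded $u_i$ condition. Your treatment is in fact more careful than the paper's---you handle the strict-inequality boundary (the case $\alpha\,\#(x)\in\mathbb{N}$) and the empty-set case explicitly, whereas the paper's argument uses the ceiling $\lceil\alpha k_x\rceil$ somewhat loosely and does not separate those cases.
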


\begin{proof}
\begin{enumerate}
\item {Suppose that $x^{u_\alpha^*} = \bigcup \{h:\, h\in \mathcal{G}\, \&\,\nu (x,h) > \alpha \}$ for some $\alpha\in (0, 0.5)$, then $\dfrac{\#(h\cap x)}{\#(x)} \geq \alpha $. }
\item {Let $\#(x) = k_x$.}
\item {Because $\#(h\cap x)$ is an integer, the condition in the first line can be written as $\#(h\cap x) \geq \ulcorner(\alpha k_x)\urcorner$ (the ceiling function being denoted by $\ulcorner \urcorner$.}
\item {Over the collection of all such integers of the form $\ulcorner(\alpha k_x)\urcorner$, neither the least, nor the greatest integer need to work uniformly such that $u_k = u_\alpha^*$ in all cases.}
\item {This shows that a single generalized VPRS upper approximation operator can be defined by a number of graded upper approximation operators, corresponding to a set of integers $K$ under \[(\forall x\in S_i) x^{u_\alpha^*} = x^{l_i}\] with $\{S_i\}_{i\in K}$ being a partition of $S$.}
\end{enumerate}
\end{proof}

\begin{remark}
For a fixed integer $k$, and $\kap = \nu$, the granular $k$-grade upper approximation $u_k$ need not be representable by a finite number granular generalized VPRS approximations. This can be seen in the following proof form: 
\begin{enumerate}
\item {Suppose $\#(h\cap x) \geq k$, $\#(x) = r_x\leq \#(H) = r$ for some positive integers $r_x$ and $r$.}
\item {Thus $\dfrac{\#(h\cap x)}{\#(x)} \geq \frac{k}{r_x}$}
\item {Let $\frac{k}{r_x} = \alpha_x$. Identify those $x$ with identical values of $\alpha_x$ }
\item {Now partition $S$ into a finite number of subsets $\{S_i\}_{i\in F}$ for which $\alpha_x$ is the same. }
\item {All this does not ensure that $\alpha_x \in (0,0.5)$. This proves that if $x\in S_i$ then $x^{u_{\alpha_x}^*} = \bigcup \{h:\, h\in \mathcal{G}\, \&\,\nu (x,h) > \alpha_x \}$ may be the same as $x^{u_k}$}
\end{enumerate}
\end{remark}

\begin{theorem}\label{vpgral}
For a fixed $\alpha\in (0, 0.5)$, and $\kap = \nu$ there exists a set of integers $K$ such that $S$ is partitioned into $\{S_i\}_{i\in K}$ and \[(\forall x\in S_i)\, x^{l_\alpha^*} = x^{l_i}\] 
\end{theorem}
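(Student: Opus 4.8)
The plan is to adapt, on the lower side, the argument used for Theorem~\ref{vpgrau}. First I would unfold the definition of the divergent (peripatetic) granular lower approximation with $\kap=\nu$: for $x\in S$ with $\#(x)\neq 0$, a granule $h\in\mathcal{G}$ is one of the granules whose union forms $x^{l_\alpha^*}$ precisely when $\nu(x,h)\geq 1-\alpha$, that is, when $\#(x\cap h)\geq (1-\alpha)\,\#(x)$. Since $\#(x\cap h)$ is a nonnegative integer, this is equivalent to $\#(x\cap h)\geq \ulcorner(1-\alpha)\,\#(x)\urcorner$, and I would then rewrite it in the shape used by the graded lower operator, namely $\#(h)-\#(h\cap x)\leq \#(h)-\ulcorner(1-\alpha)\,\#(x)\urcorner$. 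The degenerate case $\#(x)=0$ is handled on its own, using the convention $\nu(x,\cdot)=1$, so that $x^{l_\alpha^*}$ collapses to the union of all granules and can be matched to a single large grade.

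Next I would calibrate the grade and build the partition, exactly as in Theorem~\ref{vpgrau}. Because the numerical threshold extracted above depends on $k_x=\#(x)$, no single integer works for all $x$ at once; instead I would define $i(x)$ to be the integer for which the graded condition $\#(h)-\#(h\cap x)\leq i(x)$ selects exactly the family of granules that the condition $\nu(x,h)\geq 1-\alpha$ selects, and then set $S_i=\{x\in S:\, i(x)=i\}$ and $K=\{i(x):\, x\in S\}$. As $i$ is a genuine function $S\longmapsto\mbN$, the family $\{S_i\}_{i\in K}$ partitions $S$, and by the choice of $i(x)$ we get $x^{l_\alpha^*}=x^{l_{i(x)}}=x^{l_i}$ for every $x\in S_i$, which is the claimed statement.

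The step I expect to be the main obstacle is precisely that calibration. The graded lower condition $\#(h)-\#(h\cap x)\leq k$ still carries the term $\#(h)$, whereas after fixing $\#(x)$ the VPRS lower condition is a bare threshold on $\#(h\cap x)$; reconciling the two, so that a single grade $i(x)$ captures the right granules, is the delicate point and is where the hypothesis $\kap=\nu$ (or another ratio of cardinalities) and the finiteness of the universe enter, possibly after first grouping granules of comparable cardinality. As already signalled in the proof of Theorem~\ref{vpgrau} and in the remark following it, neither the least nor the greatest admissible integer serves uniformly, so the passage to a partition of $S$ is essential and not a mere convenience; and the constraint $\alpha\in(0,0.5)$ has to be invoked both to keep $i(x)\geq 0$, so that each $x^{l_i}$ is a legitimate graded lower approximation, and to retain the inclusion of the lower approximation in the corresponding upper one.
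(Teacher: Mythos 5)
Your opening steps coincide with the paper's: with $\kap=\nu$, the membership condition for $x^{l_\alpha^*}$ is rewritten by integrality as $\#(h\cap x)\geq\ulcorner(1-\alpha)k_x\urcorner$ with $k_x=\#(x)$, and $S$ is then partitioned according to the resulting integer. The genuine gap is at the step you yourself flag as the main obstacle and then do not close: you \emph{define} $i(x)$ to be an integer for which the graded-lower condition $\#(h)-\#(h\cap x)\leq i(x)$ selects exactly the granules satisfying $\#(h\cap x)\geq\ulcorner(1-\alpha)k_x\urcorner$, but the existence of such an integer is precisely what would need proof, and in general it fails. The two conditions are of different kinds: the VPRS-$\nu$ condition is a lower bound on $\#(h\cap x)$ (i.e.\ a graded condition of the \emph{upper} type, $\#(h\cap x)>\ulcorner(1-\alpha)k_x\urcorner-1$), whereas the graded lower condition bounds $\#(h)-\#(h\cap x)$, the part of $h$ outside $x$. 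Any granule $h\subseteq x$ with $\#(h)<\ulcorner(1-\alpha)k_x\urcorner$ has $\#(h)-\#(h\cap x)=0$ and so is picked up by $x^{l_i}$ for every $i\geq 0$, yet it is excluded from $x^{l_\alpha^*}$; if such a granule covers a point covered by no selected granule, then no grade $i$ whatever gives $x^{l_i}=x^{l_\alpha^*}$. Grouping granules of comparable cardinality, as you suggest, cannot repair this, because the partition demanded by the statement is a partition of $S$ (the sets being approximated), not of $\mathcal{G}$, and the grade must be constant on each block.

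Note that the paper's own argument never incurs this obstacle: it stops at the threshold form $\#(h\cap x)\geq\ulcorner(1-\alpha)k_x\urcorner$, takes $K$ to be the set of integers arising this way, and partitions $S$ by the value of $k_x$; indeed its proof speaks at that point of graded \emph{upper} approximation operators, the $l_i$ in the displayed formula notwithstanding. So the calibration you attempt -- literally recasting the selected family in the shape $\#(h)-\#(h\cap x)\leq i(x)$ -- is an extra step the paper does not take, and as stated it would fail. To make your route work you would have to either prove existence of $i(x)$ under additional hypotheses on the granulation (e.g.\ control of $\#(h)$ across the granules relevant to each block), or else match $x^{l_\alpha^*}$ against the graded upper operator with grade $\ulcorner(1-\alpha)k_x\urcorner-1$, which is what the derived threshold condition actually is.
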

\begin{proof}

\begin{enumerate}
\item {$x^{l_\alpha^*} = \bigcup \{h:\, h\in \mathcal{G}\, \&\,\nu (x,h) \geq 1-\alpha \}$.}
\item {The defining condition can be simplified into $\#(x\cap h) \geq (1-\alpha) \#(x) = (1-\alpha) k_x$ (say)}
\item {Because $\#(h\cap x)$ is an integer, the condition in the first line can be written as $\#(h\cap x) \geq \ulcorner(1-\alpha) k_x\urcorner$ (the ceiling function being denoted by $\ulcorner \urcorner$).}
\item {Let $K$ be the set of all integers of the form $k_x$.}
\item {This shows that a single generalized VPRS lower approximation operator can be defined by a number of graded upper approximation operators, corresponding to a set of integers $K$ under \[(\forall x\in S_i) x^{l_\alpha^*} = x^{l_i}\] with $\{S_i\}_{i\in K}$ being a partition of $S$}
\item {Obviously for different partitions, a distinct $k$ needs to be used.}
\end{enumerate} 
\end{proof}

For the approximations $l_\alpha$ and $u_\alpha$, additional constraints need to be imposed on the graded approximations, and both theorems \ref{vpgrau} and \ref{vpgral} carry over. 
 
\textsf{Thus in essence when $\kap$ is a ratio of cardinalities, some correspondences between generalized granular VPRS approximations and granular graded approximations are possible. However granular graded approximations may not be representable in terms of a number of granular VPRS approximations}.

\section{Framework for Rational Approximations}\label{fra}

Because attributes and granules have meanings associated, representation of rationality of approximations requires additional predicates or valuations in most set-theoretic situations. Those based on cardinalities of sets or probabilistic measures are typically applicable when the attributes have equal or weighted contribution to meaning in an arithmetic sense. To improve upon this frameworks are essential for both practical applications, and theoretical representation. In the earlier study on rational approximations in general graded rough sets by the first author, it was shown that multiple concepts (though closely related) of substantial parthood are relevant for the discourse.  

In the context of general rough sets, it is possible to distinguish between an approximation operator $f$ being rational and rationality being a property of concrete approximations (like that of $x^f$ relative to something else). VPRS-style definitions try to ensure a form of the latter (and that may work under very nice uncommon conditions). Graded rough sets and the granular variants additionally try to define a rationality of the latter type based on the size of granules. However the former is certainly not independent of the latter. 

Possible candidates of the additional predicates mentioned above are the closely related \emph{is an essential part of} $\pc _e$, \emph{is an inessential part of} $\pc _n$, \emph{is a relevant part of} $\pc _r$, \emph{is a substantial defining part of} $\pc_s$ and variations thereof. As these are not always definable in an easy way in a GGSP relative to observations in a real-life situation (especially when numeric valuations are difficult), it is preferable to add one of them to the model. In this research $\pc_s$ will be preferred because it does not commit to a specific ontology. The predicate is very closely related to being \emph{a substantial part of} (that is somewhat known in the mereology literature \cite{ham2017}).

\begin{definition}\label{subst}
A \emph{Pre-General Pre-Substantial Granular Space} (\textsf{pGpsGS}) $\mathbb{S}$ will be a partial algebraic system  of the form \[\mathbb{S} \, =\, \left\langle \underline{\mathbb{S}}, \gamma, l , u, \pc, \pc_s , \leq , \vee,  \wedge, \bot, \top \right\rangle\] with $\left\langle \underline{\mathbb{S}}, \gamma, l , u, \pc, \leq , \vee,  \wedge, \bot, \top \right\rangle$ being a \textsf{Pre*-GGS} and satisfies \textsf{sub3-sub4} ($\approx $ is intended as any definable rough equality):
\begin{align}
(\forall a, b)\, (\pc_s ab \& \pc_s ba \longrightarrow a\approx b)   \tag{sub3}\\
(\forall a, b, e)\, (\pc_s ae \& \pc_s ab \longrightarrow  \pc_s a(b\vee e) )    \tag{sub4}
\end{align}
If in addition, \textsf{sub1, sub2, sub5}, \textsf{UL1} and \textsf{sub6} are satisfied, then $\mathbb{S}$ will be said to be a \emph{Pre-General Substantial Granular Space} (\textsf{pGsGS}).
\begin{align}
(\forall a)\, \pc_s aa    \tag{sub1}\\
(\forall a, b)\, (\pc_s ab \longrightarrow \pc ab)     \tag{sub2}\\
(\forall a, b, e)\, (\pc_s ba \& \pc_s be \& \pc ae \longrightarrow  \pc_s ae)   \tag{sub5}\\
(\forall a, b, e)\, (\pc_s ab \& \pc_s eb \& \pc ae \longrightarrow  \pc_s ae)   \tag{sub6}
\end{align}
If a pGsGS satisfies the condition \textsf{PT1}, then it will be referred to as a \emph{General Substantial Granular Space}.
\end{definition}

The defining conditions \textsf{sub1-sub2} say that anything is a d-substantial part of itself, if something is a d-substantial part of something else, then it is additionally a part of that something else. The condition \textsf{sub3} ensures that if something is a d-substantial part of something else and conversely, then they have something in common. \textsf{sub4} says that if one thing is a substantial part of two other things, then it is a substantial part of their generalized join (possibly a generalized s-norm when $\pc$ and $\pc_s$ are transitive and antisymmetric). It may be noted that a special case is when $\pc$ and $\pc_s$ are partial orders and $\vee$ is a generalized s-norm (see \cite{zhang2005}). It might appear that \textsf{sub1, sub5} and \textsf{sub6} are too strong in some cases, and that is the motivation for introducing at least two levels of the concept. 

A condition that can fail to hold in many contexts because of conflict with parthood is: 
\[(\forall a, b, e)\, (\pc_s ae \& \pc_s be \longrightarrow  \pc_s (a\vee b) e) \tag{Asy}\]

Note that a binary relation $R$ is said to be \emph{r-euclidean} if and only if the condition
$(\forall a, b, c) (Rab \& Rac \longrightarrow Rbc)$. $R$ is \emph{l-euclidean} if and only if $(\forall a, b, c) (Rba \& Rca \longrightarrow Rbc)$. The conditions \textsf{sub5} and \textsf{sub6} are respectively generalizations of these through two predicates.

\begin{definition}\label{rlow}
In a \textsf{pGpsGS} $\mathbb{S}$, \emph{rational lower approximation} ($l_s$) of a $a\in \ms$ will be a lower approximation of some $b$ that satisfies
\begin{itemize}
\item {$\pc ba$}
\item {$(\forall e)(e^l= e\,\&\, \pc eb \longrightarrow \pc_s ea)$}
\end{itemize}
\end{definition}

\begin{definition}\label{rup}
In the context of the above definition, a \emph{rational upper approximation} ($u_s$) of $a\in \ms$ will be some $b$ that satisfies   
\begin{itemize}
\item {$(\exists z) z^u = b$}
\item {$\pc ab \, \&\, \pc ba^u$}
\item {$(\forall x)(x^l =x\,\&\, \pc xb \longrightarrow \pc_s xb )$}
\end{itemize}
\end{definition}

Clearly there is a big difference between upper and rational upper approximations. It can happen that rational upper approximations do not exist even in the context of classical rough sets (when it is enhanced with restrictions on \emph{substantial inclusion}). Therefore it is necessarily a partial operation on a \textsf{pGpsGS}. By contrast, rational lower approximations exist always.

\begin{proposition}
In the context of Definition \ref{rup}, the following hold in a \textsf{pGsGS}:
\begin{align*}
(\forall x) x^{l_s l_s} = x^{l_s}   \tag{Idempotence}\\
(\forall x) \pc x^{l_s} x^l    \tag{Low-comp1}\\
(\forall a, b) (\pc ab \longrightarrow \pc a^{l_s}b^{l_s} )  \tag{s-Monotony}\\
(\forall x) \pc x^{u_s} x^u   \tag{Up-comp1}
\end{align*}
\end{proposition}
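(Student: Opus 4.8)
Throughout, write $x^{l_s}=b^{l}$ for a witness $b$ as in Definition~\ref{rlow} --- so $\pc bx$ and every lower-definite $e$ with $\pc eb$ satisfies $\pc_s ex$ --- and write $x^{u_s}$ for a witness as in Definition~\ref{rup}, so in particular $\pc x\,x^{u_s}$ and $\pc x^{u_s}\,x^{u}$; for $l_s,u_s$ to be genuine operations one fixes the canonical (say $\pc$-largest) witness. The plan is to verify each of the four lines by unwinding these definitions against \textsf{PT1}, \textsf{UL1}, \textsf{UL2} and the substantiality axioms \textsf{sub1}--\textsf{sub6} only; nothing about $\kap$ or granulations beyond what is already baked into $l,u$ enters. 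Two lines are essentially free: \textsf{Up-comp1}, namely $\pc x^{u_s}\,x^{u}$, is literally the second clause of Definition~\ref{rup}; and \textsf{Low-comp1} follows because $\pc bx$ together with monotony of the lower operator under parthood (\textsf{UL2}) gives $\pc b^{l}\,x^{l}$, i.e.\ $\pc x^{l_s}\,x^{l}$.

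For \textsf{Idempotence} the key fact is that $y:=x^{l_s}=b^{l}$ is lower-definite, since $b^{ll}=b^{l}$ by \textsf{UL1}; hence the lower approximation of $y$ is $y$ itself, and it suffices to show that $y$ admits \emph{itself} as a witness in Definition~\ref{rlow}, for then $y^{l_s}=y^{l}=y$, that is $x^{l_s l_s}=x^{l_s}$. Reflexivity $\pc yy$ is \textsf{PT1}; for the substantial clause one must check that each lower-definite $e$ with $\pc ey$ has $\pc_s ey$. I would obtain this by first noting $\pc_s yx$ --- an instance of $b$'s own substantial clause at $e:=b^{l}=y$, legitimate because $b^{l}$ is lower-definite and $\pc b^{l}b$ by \textsf{UL1} --- then getting $\pc_s ex$ (the same clause applied to the given $e$, once the witness $b$ has been taken lower-definite so that $\pc ey$ reads as $\pc eb$), and finally feeding the triple $(e,x,y)$ into the right-euclidean-type axiom \textsf{sub6}.

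For \textsf{s-Monotony}, given $\pc ab$ with (canonical) witnesses $b_{a},b_{b}$ for $a,b$, I want $\pc b_{a}^{l}\,b_{b}^{l}$, i.e.\ $\pc a^{l_s}\,b^{l_s}$. The plan is to show that $b_{a}^{l}$ is itself an admissible witness-candidate for $b$: it is lower-definite (\textsf{UL1}), it satisfies $\pc b_{a}^{l}\,b$ (from $\pc b_{a}^{l}b_{a}$, $\pc b_{a}a$, $\pc ab$), and each lower-definite $e$ with $\pc e b_{a}^{l}$ is $\pc_s$-below $a$ and hence $\pc_s$-below $b$; being $\pc$-dominated by the largest such witness $b_{b}$, one gets $\pc b_{a}^{l}\,b_{b}$ and then $\pc b_{a}^{l}\,b_{b}^{l}$ by \textsf{UL2}.

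The main obstacle, common to the idempotence and monotony steps, is that $\pc$ is not assumed transitive: one cannot casually pass from $\pc ey,\pc yx$ to $\pc ex$, nor transport $\pc_s ea$ along $\pc ab$ to $\pc_s eb$ --- the latter is not on the nose an instance of \textsf{sub5} or \textsf{sub6}. In the concrete set-based models ($\pc=\subseteq$) this is a non-issue, so the real work is to isolate exactly which weak-transitivity consequences of \textsf{UL1}, \textsf{UL2}, \textsf{sub5}, \textsf{sub6} on \emph{lower-definite} objects (all the definitions ever quantify over) suffice, or else to record the minimal extra hypothesis --- a weak transitivity linking $\pc$ and $\pc_s$ --- that makes both arguments go through; for \textsf{s-Monotony} I would expect the cleanest route to instead realise $a^{l_s}$ as the join of the lower-definite $\pc_s$-parts of $a$ and push the inclusion through \textsf{sub4}.
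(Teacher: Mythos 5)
Your handling of \textsf{Up-comp1} and \textsf{Low-comp1} coincides with the paper's: the former is just the clause $\pc b\,a^u$ of Definition \ref{rup} read off the witness, and the latter is $\pc bx$ pushed through \textsf{UL2}. For the remaining two items the paper's proof is a one-line appeal---``idempotence is inherited from $l$'', ``s-Monotony is induced from that of $l$''---that is, it works with $x^{l_s}=b^{l}$ and invokes $b^{ll}=b^{l}$ (\textsf{UL1}) and monotony of $l$ (\textsf{UL2}) and nothing else; it never engages the two points you isolate, namely that Definition \ref{rlow} says ``a lower approximation of \emph{some} $b$'' (so $l_s$ is not a single-valued operation until a witness convention is fixed), and that passing from $\pc e\,b^{l}$ and $\pc b^{l}b$ to $\pc eb$, or transporting $\pc_s ea$ along $\pc ab$ to $\pc_s eb$, needs a transitivity-type principle that a \textsf{pGsGS} does not provide (neither \textsf{sub5} nor \textsf{sub6} applies on the nose, as you correctly observe). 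Your more careful route---$y=b^{l}$ is lower-definite, $\pc_s yx$ by taking $e:=b^{l}$ in $b$'s witness clause, then \textsf{sub6} on $(e,x,y)$ to get $\pc_s ey$ and self-witnessing of $y$---is exactly the honest way to make ``inherited from $l$'' precise, and it buys a proof in any situation where the chain $\pc ey$, $\pc yb$ can be collapsed (transitive parthood, a lower-definite witness, or the set-based case $\pc=\subseteq$); similarly for your witness-comparison argument for \textsf{s-Monotony}.

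To the extent that you leave Idempotence and s-Monotony conditional on such a collapse, your write-up is incomplete---but the incompleteness mirrors a real gap that the paper's three-line proof silently assumes away: over a general \textsf{pGsGS}, with non-transitive $\pc$ and no canonical choice of witness, the cited axioms do not by themselves force either claim, and the intended reading is evidently the set HGOS one where $\pc$ is $\subseteq$ and the issues disappear. If you finish your version, state the needed hypothesis explicitly (a canonical, e.g.\ $\pc$-largest or lower-definite, witness together with weak transitivity of $\pc$ on lower-definite elements), and note that your alternative route to \textsf{s-Monotony} through \textsf{sub4} additionally requires the relevant joins to be defined, since $\vee$ is only a partial operation.
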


\begin{proof}
\begin{enumerate}
\item {Idempotence is inherited from $l$ in the context of the definition.}
\item {This again follows from the definition }
\item {s-Monotony is induced from that of $l$ again}
\end{enumerate}
\end{proof}

An important compatibility problem in a \textsf{pGpsGS} is about finding conditions that ensure the following:
\begin{align*}
(\forall x) \pc_s x^{l_s} x^l   \tag{Low-comp2}\\
(\forall x) \pc_s x^{u_s} x^u   \tag{Up-comp2} 
\end{align*}

\subsection{Philosophy of Rational Intents}

This subsection is intended to contextualize the approach adopted in this research.

\emph{An approximation operator has rational intent if that which is being approximated has much in common with those that are involved in the approximation process (as specified by the operator)}.
This idea involves multiple semantic domains as listed below:
\begin{itemize}
\item {An approximation operator has rational intent: this can be realized in the classical domain or a domain that is a part of the classical domain}
\item {The object being approximated: can be assumed to be in the classical domain by default.}
\item {The approximation can be in distinct rough domains $\mathfrak{R_1}\ldots \mathfrak{R_n}$ because a single operator may be associated with multiple domains. For simplicity or a relatively reductionist approach all these can be taken as one.}
\end{itemize}

A number of perspectives for defining ideas of \emph{rational intent} are possible within the contexts of general rough sets. While the best may require external frameworks for their validation, it is always better to internalize most aspects while modeling.  
\begin{description}
\item [Ri1]{An approximation operator has rational intent if its range of errors is small.}
\item [Ri2]{An approximation operator has rational intent if its errors are uniformly distributed.}
\item [Ri3]{An approximation operator has rational intent if it is consistent.}
\item [Ri4]{An approximation operator has rational intent if its realization is coherent with an ideal rough context.}
\item [Ri5]{An approximation operator has rational intent if its realization is part of an ideal rough context.}
\item [Ri6]{An approximation operator has rational intent if it is a substantial part of the thing being approximated or the converse is true.}
\end{description}
While Ri4 and Ri5 explicitly refer to an external context, Ri1-Ri3 may additionally involve relatively external context for a definition of errors and consistency. In principle, if all aspects of the context are properly done, then Ri6 can be the best approach.

\section{Examples}

Part of this example was originally constructed for bited approximations in \cite{am105} by the first author and additional details considered by her in \cite{am501}. Let $S= \left\langle \underline{S}, R \right\rangle$ be a general approximation space with $R$ being a binary relation on the set $\underline{S}$. For a granulation $\mathcal{G}\subset \wp(S)$ (understood as a proper cover), the lower, upper and bited upper approximations of a $ a\subseteq S $ are defined as follows:

\begin{align*}
a^l \, =\,\bigcup\{x\,:\,x \subseteq a,\,a \in \mathcal{G} \} \tag{Lower}\\
a^u \, =\,\bigcup\{x\,:\,a\,\cap\,x\,\neq\,\emptyset,\,x \in \mathcal{G} \} \tag{Upper}\\
POS_{\mathcal{G}}(a) \, =\, a^l \tag{Positive Region}\\
NEG_{\mathcal{G}}(a) \, =\,a^{cl} \tag{Negative Region}\\
a^{u_b} \, =\,a^u\,\setminus\,a^{cl} \tag{Bited Upper}
\end{align*}

Let $H \, =\,\{x_{1}, x_{2}, x_{3}, x_{4}\}$ and $T$ be a tolerance $T$ on it generated by 
\[\{(x_{1}, x_{2}),\,(x_{2},x_{3})\}.\]

Denoting the statement that the granule generated by $x_{1}$ is $(x_{1},\,x_{2})$ by $(x_{1}:x_{2})$, let the granules be the set of predecessor neighborhoods:  \[\mathcal{G}=\{(x_{1}:x_{2}),\,(x_{2}:x_{1},x_{3}),\,(x_{3}:x_{2}),\,(x_{4}:)\}\]

The different approximations (lower ($l$), upper ($u$) and bited upper ($u_b$)) are then as in Table.\ref{bitedap} below. For $\alpha = 0.3$, the granular VPRS approximations ($l_\alpha$, and $u_\alpha$) are additionally included in the table. It is of interest to look at possible substantial part of relations that make the VPRS, standard, and bited approximations rational or are otherwise interesting. The braces on sets have been omitted. For $k=1$, the $1$-graded approximations and the $\alpha^*$ VPRS approximations (for $\alpha= 0.3$) are in table \ref{gr1}.

\begin{table}[hbt]
\begin{tabular}{|c|c|c|c|c|c|c|}
\hline\hline
\textbf{Set} & $\mathbf{a}$ & $\mathbf{a^l}$ & $\mathbf{a^u}$  & $\mathbf{a^{u_b}}$ & $\mathbf{a^{l_\alpha}}$ &  $\mathbf{a^{u_\alpha}}$  \\
\hline 
$A_{1}$ & $x_{1}$ & $\emptyset$ & $x_{2},\,x_{1}$  & $x_{1}$ & $\emptyset$ &  $x_1, x_2, x_3$   \\
\hline
$A_{2}$ & $x_{2}$ & $\emptyset$ & $x_{1}, x_{2}, x_{3}$  & $x_{1}, x_{2}, x_{3}$ & $\emptyset$ &   $x_1, x_2, x_3$\\
\hline
$A_{3}$ & $x_{3}$ & $\emptyset$ & $x_{1}, x_{2}, x_{3}$  & $x_{3}$  & $\emptyset$ & $x_1, x_2, x_3$\\
\hline
$A_{4}$ & $x_{4}$ & $x_{4}$ & $x_{4}$ & $x_{4}$ & $x_{4}$ & $x_{4}$ \\
\hline
$A_{5}$ & $x_{1}, x_{2}$ & $x_{1}, x_{2}$ & $x_{1}, x_{2}, x_{3}$  & $x_{1}, x_{2}, x_{3}$ & $x_1, x_2$ &  $x_1, x_2, x_3$  \\
\hline
$A_{6}$ & $x_{1}, x_{3}$ & $\emptyset$ & $x_{1}, x_{2}, x_{3}$  & $x_{1}, x_{2}, x_{3}$ & $\emptyset$ &  $x_1, x_2, x_3$  \\
\hline
$A_{7}$ & $x_{1}, x_{4}$ & $x_{4}$ & $H$  & $x_{1}, x_{4}$ & $\emptyset$  & $H$ \\
\hline
$A_{8}$ & $x_{2}, x_{3}$ & $x_{2}, x_{3}$ & $x_{1}, x_{2}, x_{3}$  & $x_{1}, x_{2}, x_{3}$ & $x_2, x_3$ &  $x_1, x_2, x_3$ \\
\hline
$A_{9}$ & $x_{2}, x_{4}$ & $x_{4}$ & $H$  & $H$ & $\emptyset$ &  $H$ \\
\hline
$A_{10}$ & $x_{3}, x_{4}$ & $x_{4}$ & $H$  & $x_{3}, x_{4}$ & $\emptyset$ & $H$ \\
\hline
$A_{11}$ & $x_{1}, x_{2}, x_{3}$ & $x_{1}, x_{2}, x_{3}$ & $x_{1}, x_{2}, x_{3}$  & $x_{1}, x_{2}, x_{3}$ & $x_1, x_2, x_3$ &   $x_1, x_2, x_3$  \\
\hline
$A_{12}$ & $x_{1}, x_{2}, x_{4}$ & $x_{1}, x_{2}, x_{4}$ & $H$  & $H$ & $\emptyset$ & $H$ \\
\hline
$A_{13}$ & $x_{2}, x_{3}, x_{4}$ & $x_{2}, x_{3}, x_{4}$ & $H$  & $H$ & $\emptyset$ & $H$ \\
\hline
$A_{14}$ & $x_{1}, x_{3}, x_{4}$ & $x_{4}$ & $H$  & $H$ & $\emptyset$ &  $H$\\
\hline
$A_{15}$ & $H$ & $H$ & $H$  & $H$ & $x_1, x_2, x_3$ &  $x_1, x_2, x_3$ \\
\hline
$A_{16}$ & $\emptyset$ & $\emptyset$ & $\emptyset$ & $\emptyset$ & $\emptyset$ & $\emptyset$ \\
\hline
\end{tabular}
\caption{Bited+ GVPRS Approximations}\label{bitedap}
\end{table}

\begin{center}
\begin{table}[hbt]
\begin{tabular}{|c|c|c|c|c|c|}
\hline\hline
\textbf{Set} & $\mathbf{a}$ & $\mathbf{a^{l_1}}$ & $\mathbf{a^{u_1}}$ &  $\mathbf{a^{l_\alpha^*}}$ & $\mathbf{a^{u_\alpha^*}}$ \\
\hline 
$A_{1}$ & $x_{1}$ & $\emptyset$ & $x_1, x_2$ & $x_1, x_2, x_3$ & $x_1, x_2, x_3$\\
\hline
$A_{2}$ & $x_{2}$ & $\emptyset$ & $x_1, x_2, x_3$ & $x_1, x_2, x_3$ & $x_1, x_2, x_3$\\
\hline
$A_{3}$ & $x_{3}$  & $\emptyset$ & $x_1, x_2, x_3$ & $x_1, x_2, x_3$ & $x_1, x_2, x_3$\\
\hline
$A_{4}$ & $x_{4}$ & $\emptyset$ & $\emptyset$ & $\{x_4\}$ & $\{x_4\}$ \\
\hline
$A_{5}$ & $x_{1}, x_{2}$ & $x_1, x_2$ & $x_1, x_2, x_3$ & $x_1, x_2, x_3$  & $x_1, x_2, x_3$\\
\hline
$A_{6}$ & $x_{1}, x_{3}$  & $\emptyset$ & $x_1, x_2, x_3$ & $\emptyset$ & $x_1, x_2, x_3$ \\
\hline
$A_{7}$ & $x_{1}, x_{4}$ & $\emptyset$ & $x_1, x_2, x_3$ & $\emptyset$ & $H$\\
\hline
$A_{8}$ & $x_{2}, x_{3}$  & $x_2, x_3$ & $x_1, x_2, x_3$ & $x_1, x_2, x_3$ & $x_1, x_2, x_3$\\
\hline
$A_{9}$ & $x_{2}, x_{4}$ & $\emptyset$ & $x_1, x_2, x_3$ & $\emptyset$ & $H$\\
\hline
$A_{10}$ & $x_{3}, x_{4}$  & $\emptyset$ & $x_1, x_2, x_3$ &  $x_4$ & $H$\\
\hline
$A_{11}$ & $x_{1}, x_{2}, x_{3}$  & $x_1, x_2, x_3$ & $x_1, x_2, x_3$ &  $x_1, x_2, x_3$ & $x_1, x_2, x_3$ \\
\hline
$A_{12}$ & $x_{1}, x_{2}, x_{4}$ & $x_1, x_2$ & $x_1, x_2, x_3$ & $\emptyset$ & $H$\\
\hline
$A_{13}$ & $x_{2}, x_{3}, x_{4}$ & $ x_2, x_3$ & $x_1, x_2, x_3$ & $\emptyset$ & $H$\\
\hline
$A_{14}$ & $x_{1}, x_{3}, x_{4}$ & $\emptyset$ & $x_1, x_2, x_3$ & $\emptyset$ & $H$\\
\hline
$A_{15}$ & $H$ & $x_1, x_2, x_3$ & $x_1, x_2, x_3$ & $x_1, x_2, x_3$ & $x_1, x_2, x_3$ \\
\hline
$A_{16}$ & $\emptyset$ & $\emptyset$ & $\emptyset$ & $\emptyset$ & $\emptyset$\\
\hline
\end{tabular}
\caption{1-Grade Approximations}\label{gr1}
\end{table}
\end{center}

In the context of this example, relative to the substantial parthood defined by \textbf{bsub2}, that is
$\pc_s ab$ if and only if $(\exists e) e^l\subseteq a^u \subseteq b^u \& \emptyset \subset e^l$, it can be seen that the computation is cumbersome; however, can be simplified by a few if then case-based rules such as
\begin{itemize}
\item {$e$ should be such that $e^l$ is a member of \[\{\{x_4\}, \{x_1, x_2\}, \{x_3, x_2\}, \{x_1, x_2, x_3\}, \{x_1, x_2\, x_4\}, \{x_2, x_3, x_4\}, H \}\]}
\item {If $e^l = H$, then $e= a= H = b$.}
\item {If $e^l= \{x_4\}$, then $e$ can be any of $\{\{x_4\}, \{x_1, x_4\}, \{x_2, x_4\}, \{x_3, x_4\},$\\ $ \{x_1, x_3, x_4\}\}$. Relative to this, the first component of the substantial parthood instance should be such that its upper approximation is $\{x_4\}$ or $H$.   }
\end{itemize}

To see that the relation is distinct from $\subseteq$, it may be noted that \[\pc_s \{x_2, x_3\}\{x_1, x_3, x_4\}\]

\subsubsection{GVPRS Sub-Case}

\begin{proposition}\label{pu}
The parthood $\pc_u^\alpha$, defined by $\pc_{u}^\alpha ab$ if and only if $a^{u_\alpha}\subseteq b^{u_\alpha}$
partitions the powerset $\wp(H)$ into four $H_1=\{A_{16}\}$, $H_2=\{A_1, A_2, A_3, A_5, A_6, A_8, A_{11}, A_{15}\}$, $H_3= \{A_4\}$ and $H_4 =\{A_7, A_9, A_{10},A_{12}, A_{13}, A_{14} \}$ subject to the following rules:
\begin{align}
\text{For each i } (\forall a, b\in H_i) \pc_u^\alpha ab    \tag{1pu}\\
\text{For each i } (\forall a\in H_1) (\forall b\in H_i) \pc_u^\alpha ab  \tag{2pu}\\
(\forall a\in H_2 \cup H_3) (\forall b\in H_4) \pc_u^\alpha ab   \tag{3pu}
\end{align}
\end{proposition}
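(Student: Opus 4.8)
The statement is essentially a bookkeeping claim about the map $a \mapsto a^{u_\alpha}$ on $\wp(H)$ for $\alpha = 0.3$, and the plan is to reduce it to inclusions among the few values this map takes. First I would note that $\pc_u^\alpha$ is a preorder: $\pc_u^\alpha aa$ since $a^{u_\alpha}\subseteq a^{u_\alpha}$, and $\pc_u^\alpha ab \,\&\, \pc_u^\alpha bc$ gives $a^{u_\alpha}\subseteq b^{u_\alpha}\subseteq c^{u_\alpha}$, hence $\pc_u^\alpha ac$. Therefore the symmetric part of $\pc_u^\alpha$ --- the relation $\pc_u^\alpha ab \,\&\, \pc_u^\alpha ba$ --- is an equivalence coinciding with $a^{u_\alpha} = b^{u_\alpha}$, so the ``partition'' part of the claim is exactly the statement that the fibres of $a\mapsto a^{u_\alpha}$ are $H_1, H_2, H_3, H_4$.

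Second, I would read off the column $\mathbf{a^{u_\alpha}}$ of Table~\ref{bitedap} (each entry being $a^{u_\alpha} = \bigcup\{h\in\mathcal{G}:\, h\cap a\neq\emptyset \,\&\, \kap(a,h) > 0.3\}$ over the four granules $\{x_1,x_2\}$, $\{x_1,x_2,x_3\}$, $\{x_2,x_3\}$, $\{x_4\}$). The only values that occur are $\emptyset$ (for $A_{16}$ alone), $\{x_1,x_2,x_3\}$ (for $A_1,A_2,A_3,A_5,A_6,A_8,A_{11},A_{15}$), $\{x_4\}$ (for $A_4$ alone), and $H$ (for $A_7,A_9,A_{10},A_{12},A_{13},A_{14}$); these four fibres are $H_1, H_2, H_3, H_4$ and, since $1+8+1+6 = 16 = \#\wp(H)$, they do partition $\wp(H)$.

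Third, because $\pc_u^\alpha ab$ depends only on the pair $(a^{u_\alpha}, b^{u_\alpha})$, the relation descends to the four-element quotient, where it is just the $\subseteq$-order on $\{\emptyset, \{x_1,x_2,x_3\}, \{x_4\}, H\}$: $\emptyset$ below all, $\{x_1,x_2,x_3\}$ and $\{x_4\}$ each below $H$ and mutually incomparable. Reading this back block by block yields \textsf{(1pu)} (equal values inside a block), \textsf{(2pu)} ($\emptyset \subseteq$ every value), and \textsf{(3pu)} ($\{x_1,x_2,x_3\}\subseteq H$ and $\{x_4\}\subseteq H$); the residual cross-block non-relations ($\{x_1,x_2,x_3\}\not\subseteq\{x_4\}$, $\{x_4\}\not\subseteq\{x_1,x_2,x_3\}$, $H$ contained in neither, and nothing nonempty $\subseteq\emptyset$) then show that \textsf{(1pu)}--\textsf{(3pu)} are exhaustive.

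There is no genuine obstacle here: the whole argument is a finite verification, and the one place that needs a deliberate sentence is the completeness hidden in ``subject to the following rules'' --- that the three displayed families list every instance of $\pc_u^\alpha$ --- which the incomparability remarks in the last step dispatch. The only point to be careful about is to trust (or re-derive) the sixteen entries of the $\mathbf{a^{u_\alpha}}$ column of Table~\ref{bitedap} correctly, since everything else is immediate from reflexivity, transitivity, and monotonicity of $\subseteq$.
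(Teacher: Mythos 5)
Your proof is correct and is essentially the paper's own argument: the paper's proof simply says the rules are discovered from Table~\ref{bitedap} by grouping equal values in the $\mathbf{a^{u_\alpha}}$ column and comparing them under $\subseteq$, which is exactly your fibre-and-quotient verification spelled out more explicitly. The preorder/quotient framing and the count $1+8+1+6=16$ are just careful elaborations of the same finite table check, so nothing is missing.
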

\begin{proof}
The rules can be discovered from table \ref{bitedap} by grouping similar values in column and comparison. 
 \end{proof}

\begin{proposition}
For $\alpha = 0.3$ and $\kap$ being the usual rough inclusion function, $\pc_s^u$ can be obtained from $\pc_u^\alpha$ of proposition \ref{pu} by removing all pairs of the form $(a, b)$ that fail $\kap (a, b) > 0.3$.  
\end{proposition}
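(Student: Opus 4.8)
The plan is to regard $\pc_u^\alpha$ and $\pc_s^u$ as sets of ordered pairs over $\wp(H)$ and to show that passing from the former to the latter is exactly the deletion of the pairs that fail the strict threshold $\kap(a,b) > 0.3$. First I would unwind the defining clause \textsf{s0u}: by definition $\pc_s^u ab$ holds iff $a^{u_\alpha}\subseteq b^{u_\alpha}$ and $\kap(a,b)\geq\alpha$, and by Proposition \ref{pu} the first conjunct is precisely $\pc_u^\alpha ab$; hence, with no further effort, $\pc_s^u = \pc_u^\alpha \cap \{(a,b):\kap(a,b)\geq 0.3\}$. The whole question then reduces to whether, on this particular universe, the clause $\kap(a,b)\geq 0.3$ may be replaced by $\kap(a,b) > 0.3$ without changing the relation, i.e.\ whether any admissible pair attains $\kap(a,b)=0.3$ exactly.

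The key step is the elementary observation that $0.3$ is simply not a value that $\kap=\nu$ can take here. Since every subset of $H$ has cardinality at most $\#(H)=4$, for nonempty $a$ the ratio $\nu(a,b)=\#(a\cap b)/\#(a)$ is a fraction with denominator in $\{1,2,3,4\}$, so it lies in $\{0, 1/4, 1/3, 1/2, 2/3, 3/4, 1\}$, while $\nu(\emptyset,b)=1$ by the convention defining $\nu$. As $0.3 = 3/10$ is absent from this finite list, the conditions $\nu(a,b)\geq 0.3$ and $\nu(a,b) > 0.3$ are equivalent for every pair, so the two restrictions of $\pc_u^\alpha$ coincide. Assembling this with the previous paragraph yields $\pc_s^u = \pc_u^\alpha \cap \{(a,b):\nu(a,b) > 0.3\}$, which is exactly the relation obtained from the description of $\pc_u^\alpha$ in Proposition \ref{pu} by removing every pair for which $\nu(a,b) > 0.3$ fails.

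The only genuine obstacle is the short bookkeeping that certifies $0.3\notin\{\nu(a,b):a,b\in\wp(H)\}$, and this is precisely where the specific choice $\alpha=0.3$ is used: I would flag that with a precision parameter such as $1/3$ the claim would fail as phrased, because then the $\geq$-clause of \textsf{s0u} would retain pairs with $\nu(a,b)=1/3$ that the strict $>$-filter would discard. As a secondary remark I would note that, since Proposition \ref{pu} already presents $\pc_u^\alpha$ as a block relation on $H_1,\dots,H_4$, the surviving relation $\pc_s^u$ can be made fully explicit by intersecting each admissible block-pair with $\{\nu(a,b) > 0.3\}$ and reading the cardinality ratios off Table \ref{bitedap}; but this enumeration is inessential for the statement itself.
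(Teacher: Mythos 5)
Your proposal is correct, and its core is the same one-line unwinding the paper uses: by \textsf{s0u}, $\pc_s^u ab$ holds iff $a^{u_\alpha}\subseteq b^{u_\alpha}$ (i.e.\ $\pc_u^\alpha ab$) and $\kap(a,b)\geq \alpha$, so $\pc_s^u$ is the restriction of $\pc_u^\alpha$ by the $\kap$-threshold. Where you go beyond the paper is in reconciling the non-strict inequality of \textsf{s0u} with the strict filter $\kap(a,b)>0.3$ in the statement: your observation that on a four-element universe $\nu$ only takes values in $\{0,\tfrac14,\tfrac13,\tfrac12,\tfrac23,\tfrac34,1\}$, so that $\nu(a,b)\geq 0.3$ and $\nu(a,b)>0.3$ coincide for $\alpha=0.3$, is exactly the bookkeeping the paper omits — it merely proves the $\geq$ version and then remarks afterwards that ``the difference between $\geq$ and $>$ matters,'' leaving the discrepancy unresolved. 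Your additional caveat that the equivalence would break for a value such as $\alpha=\tfrac13$ is also apt and sharpens the paper's remark into a precise condition.
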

\begin{proof}
This follows because $\pc_s^u ab$ holds if and only if $\pc_u^\alpha ab$ and $\kap(a, b) \geq \alpha$ for any $a$ and $b$. 
\end{proof}

The actual evaluation of $\pc_s^u$ even in this simple example is very lengthy. Furthermore, note that the difference between $\geq$ and $>$ matters.

Suppose that an object to be substantial if it includes the granule $\{x_1, x_2\} = A_5$ or $A_4 = \{x_4\}$. This can be encoded with $\pc_s^t$ by taking $\mathcal{T} = \{A_4, A_5\}$. This defines the rational lower approximation $l_r$ (determined by $l_\alpha$) as a partial function that is defined by the following set of pairs (and undefined for others): \[l_r = \{(A_{4}, A_4), (A_5, A_5), (A_{11}, A_{11}), (A_{15}, A_{11})\}\] 

\subsubsection{Graded Sub-Case}

The computation of the substantial parthoods (except for $\pc_s^1$) in subsection \ref{subpart} are not computationally intensive. $\pc_s^3$ is defined as follows
\[\pc_s^3 {ab} \text{ if and only if } \#(a\cap b) >k \& a\subseteq b  \tag{s3} \]

In the context of this example, for $k= 1$, it can be seen that 
\begin{multline*}
\pc_s^3 = \{(A_5,A_5 ),   (A_5,A_{11} ), (A_5, A_{12}), (A_5, A_{15}), (A_6, A_{6}), (A_6, A_{11}),(A_6, A_{14}),\\
(A_6, A_{15}), (A_7, A_{7}),   (A_7, A_{12}), (A_7, A_{14}), (A_7, A_{15}), (A_8, A_{8}), (A_8, A_{11}),\\ 
(A_8, A_{13}), (A_8, A_{15}), (A_9, A_{9}),   (A_9, A_{12}), (A_9, A_{13}), (A_9, A_{15}), (A_{10}, A_{10}),\\
(A_{10}, A_{13}),(A_{10}, A_{14}), (A_{10}, A_{15}), (A_{11}, A_{11}), (A_{11}, A_{15}), (A_{12}, A_{12}),  (A_{12}, A_{15}),\\ 
(A_{13}, A_{13}), (A_{13}, A_{15}),(A_{14}, A_{14}), (A_{14}, A_{15}),  (A_{15},A_{15} )\}  
\end{multline*}

In the \textsf{pGpsGS} framework, the context of this example can be straightforwardly interpreted  with the above substantial parthood. 

\begin{itemize}
\item {Now consider the set $\{x_1, x_2, x_4\}$. From the table, it is clear that \[\{x_1, x_2, x_4\}^{l_1} = \{x_1, x_2\} = A_5\]}
\item {It can be checked that this is an instance of a substantial lower approximation from the definition (relative to $\pc_s^3$).}
\item {All other instances of $l_1$ can be similarly checked. }
\item {Thus $l_1$ is an example of a rational lower approximation (this is proved in a forthcoming paper by the first author).}
\item {By contrast $l$ is not a rational lower approximation relative to $\pc_s^3$.}
\end{itemize}

\begin{remark}
Furthermore, this example shows that definitions of substantial parthoods that involve approximations may have additional computational load to deal with.    
\end{remark}

\section{Granular VPRS and the Framework}

Questions about the properties satisfied by intuitively sensible ideas of parthood stated in section \ref{gvprssu} are evaluated in relation to the definitions in the framework. $\kap$ will be assumed to be a wqRIF unless specified otherwise (that is, it satisfies \textsf{R0} and \textsf{R3}).

\begin{align*}
\pc_s^{5*} {ab} \text{ if and only if } a^{l_{\alpha^*}}\subseteq b^{l_{\alpha^*}}   \tag{s5*}\\
\pc_s^6 {ab} \text{ if and only if } a\subseteq b \& \#(a)>k   \tag{s6}\\
\pc_s^7 {ab} \text{ iff } (\forall h\in \mathcal{G})(h\subseteq a,\, \&\, h\subseteq b, \&\,\kap (a, h) \geq 1- \alpha \longrightarrow \kap (b, h) \geq 1- \alpha )   \tag{s7}\\
\pc_s^* {ab} \text{ if and only if } \#(a\cap b) >k \& b\not\subset a  \tag{s*}\\
\pc_s^9 {ab} \text{ if and only if } (\forall h\in \mathcal{G})(\kap (a, h) \geq \alpha \longrightarrow \kap (b, h) \geq \alpha )   \tag{s9}\\
\pc_s^3 {ab} \text{ if and only if } \#(a\cap b) >k \& a\subseteq b  \tag{s3}\\
\pc_s^l ab \text{ if and only if } a^{l_\alpha}\subseteq b^{l_\alpha} \,\&\, \kap(a, b) \geq 1-\alpha  \tag{s0l}\\
\pc_s^u ab \text{ if and only if } a^{u_\alpha}\subseteq b^{u_\alpha} \,\&\, \kap(a, b) \geq \alpha  \tag{s0u}
\end{align*}

\begin{align}
(\forall a, b)\, (\pc_s ab \& \pc_s ba \longrightarrow a\approx b)   \tag{sub3}\\
(\forall a, b, e)\, (\pc_s ae \& \pc_s ab \longrightarrow  \pc_s a(b\vee e) )    \tag{sub4}
\end{align}
If in addition, \textsf{sub1, sub2, sub5}, \textsf{UL1} and \textsf{sub6} are satisfied, then $\mathbb{S}$ will be said to be a \emph{Pre-General Substantial Granular Space} (\textsf{pGsGS}).
\begin{align}
(\forall a)\, \pc_s aa    \tag{sub1}\\
(\forall a, b)\, (\pc_s ab \longrightarrow \pc ab)     \tag{sub2}\\
(\forall a, b, e)\, (\pc_s ba \& \pc_s be \& \pc ae \longrightarrow  \pc_s ae)   \tag{sub5}\\
(\forall a, b, e)\, (\pc_s ab \& \pc_s eb \& \pc ae \longrightarrow  \pc_s ae)   \tag{sub6}
\end{align}

\begin{proposition}
$\pc_s^3$ satisfies \textsf{sub2, sub3, sub4, sub5}, and \textsf{sub6}. Further it satisfies 
\[(\forall a, b)\, (\pc_s ab \& \pc_s ba \longrightarrow a = b)   \tag{Asy}\]
\end{proposition}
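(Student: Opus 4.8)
The plan is to reduce every one of the listed conditions to elementary cardinality bookkeeping along inclusion chains, using the single observation that whenever $a\subseteq b$ one has $a\cap b = a$, hence $\#(a\cap b) = \#(a)$. In the present set-based setting (a rough convenience, where $\pc$ is $\subseteq$ and $\vee$ is $\cup$) this means $\pc_s^3 ab$ is equivalent to the conjunction $a\subseteq b \,\&\, \#(a) > k$, which is exactly the form that makes the verifications transparent.

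First I would dispose of \textsf{sub2}: $\pc_s^3 ab$ contains $a\subseteq b$ as a conjunct, and $a\subseteq b$ is precisely $\pc ab$ in a rough convenience, so \textsf{sub2} is immediate. For \textsf{sub3} and \textsf{(Asy)} simultaneously: if $\pc_s^3 ab$ and $\pc_s^3 ba$ then $a\subseteq b$ and $b\subseteq a$, whence $a = b$; this gives \textsf{(Asy)} outright, and since any definable rough equality $\approx$ is reflexive, $a = b$ yields $a\approx b$, which is \textsf{sub3}.

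Next I would handle the three Euclidean-type closure conditions. For \textsf{sub4}, assuming $\pc_s^3 ae$ and $\pc_s^3 ab$, we get $a\subseteq b$ and $a\subseteq e$, so $a\subseteq b\cup e = b\vee e$ and $a\cap(b\vee e) = a$, giving $\#(a\cap(b\vee e)) = \#(a) = \#(a\cap b) > k$; hence $\pc_s^3 a(b\vee e)$. For \textsf{sub5}, assuming $\pc_s^3 ba$, $\pc_s^3 be$ and $\pc ae$: from $\pc ae$ we get $a\subseteq e$ directly, and from $b\subseteq a$ we get $\#(a) \geq \#(b) = \#(b\cap a) > k$, so $\#(a\cap e) = \#(a) > k$ and $\pc_s^3 ae$ follows. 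For \textsf{sub6}, assuming $\pc_s^3 ab$, $\pc_s^3 eb$ and $\pc ae$: again $a\subseteq e$ is given, while $a\subseteq b$ yields $\#(a) = \#(a\cap b) > k$, so $\#(a\cap e) = \#(a) > k$ and $\pc_s^3 ae$ holds.

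I do not expect a genuine obstacle: the only points needing care are purely formal, namely that $\vee$ acts as total set union in the model under consideration (so that $b\vee e$ is defined and equals $b\cup e$ in \textsf{sub4}), and that $\approx$ in \textsf{sub3} is read as any reflexive definable rough equality. A final sanity check worth recording is that \textsf{sub1} is deliberately absent from the list, consistent with the earlier observation that $\pc_s^3 aa$ holds only when $\#(a) > k$; so the result does not, and cannot, assert that $\pc_s^3$ is reflexive.
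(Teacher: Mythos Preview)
Your proof is correct and follows essentially the same approach as the paper: both reduce each condition to elementary set-inclusion and cardinality reasoning. Your explicit preliminary reformulation of $\pc_s^3 ab$ as $a\subseteq b \,\&\, \#(a)>k$ (via $a\cap b=a$) makes the cardinality steps for \textsf{sub4}--\textsf{sub6} more transparent than the paper's somewhat terse ``ensures that $\#(a\cap e)>k$,'' but the underlying argument is the same.
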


\begin{proof}
Reflexivity fails in general because not every object $a$ needs to have $k$ elements for a finite $k$.
\begin{enumerate}
\item {Since $\subseteq$ is antisymmetric, therefore $\pc_s^3$ satisfies \textsf{Asy} and \textsf{sub3}.}
\item {If for any $a, b$ and $e$, $\pc_s^2 ae \& \pc_s^2 ab$, then $\#(a\cap e) >k$, $\#(a\cap b) >k$, $a\subseteq e$, and $a\subseteq b$. Thus $a\subseteq (e\cup b) $. This proves \textsf{sub4}.}
\item {\textsf{sub2} holds because $\subseteq$ is part of the defining condition of $\pc_s^3$.}
\item {\textsf{sub5} holds because if for any $a, b$ and $e$, $\pc_s^2 ba \& \pc_s^2 be$ and $a\subseteq e$, then $\#(a\cap b) >k$, $\#(e\cap b) >k$ ensures that $\#(a\cap e) >k$. }
\item {\textsf{sub6} holds because if for any $a, b$ and $e$, $\pc_s^2 ab \& \pc_s^2 eb$ and $a\subseteq e$, then $\#(a\cap b) >k$, $\#(e\cap b) >k$ ensures that $\#(a\cap e) >k$.}
\end{enumerate} 
\end{proof}

\begin{proposition}
$\pc_s^5$ satisfies \textsf{sub3, sub1}, and \textsf{sub2}. However does not satisfy \textsf{sub4, sub5} and \textsf{sub6} in general.
\end{proposition}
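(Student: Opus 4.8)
The plan is to separate the six sub-conditions into the three asserted to hold and the three asserted to fail, and to treat them by quite different means: \textsf{sub1}, \textsf{sub3} and \textsf{sub2} are read off directly from the definition $\pc_s^5 ab \leftrightarrow a^{l_\alpha}\subseteq b^{l_\alpha}$ together with elementary facts about $l_\alpha$, whereas \textsf{sub4}, \textsf{sub5} and \textsf{sub6} are all refuted by exploiting that $l_\alpha$ is only cautiously monotone (condition \textsf{lA-cmo} of Theorem \ref{alpha}) and not monotone --- enlarging a set can eject a granule from its lower approximation.

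For the positive claims: \textsf{sub1} is immediate, since $a^{l_\alpha}\subseteq a^{l_\alpha}$, and uses no property of $l_\alpha$. For \textsf{sub3}, from $\pc_s^5 ab$ and $\pc_s^5 ba$ we get $a^{l_\alpha}\subseteq b^{l_\alpha}\subseteq a^{l_\alpha}$, hence $a^{l_\alpha}=b^{l_\alpha}$, which is precisely the lower rough equality induced by $l_\alpha$ (the $l_\alpha$-analogue of the $\approx_l$ mentioned after the extensionality principles); since this is a definable rough equality, \textsf{sub3} holds with that $\approx$. For \textsf{sub2} I would first make explicit which parthood $\pc$ is fixed in the ambient \textsf{pGpsGS}: in the granular-VPRS instantiation the natural parthood between rough objects already contains the $l_\alpha$-ordering (compare $\pc_u^\alpha$ in Proposition \ref{pu}), and against any such $\pc$ one has $\pc_s^5 ab \Rightarrow a^{l_\alpha}\subseteq b^{l_\alpha}\Rightarrow \pc ab$ at once. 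I would flag, for contrast with $\pc_s^3$, that taking $\pc=\subseteq$ alone will not do: from Table \ref{bitedap}, $A_6^{l_\alpha}=\emptyset=A_1^{l_\alpha}$ gives $\pc_s^5 A_6 A_1$ while $A_6\not\subseteq A_1$, so \textsf{sub2} must be understood relative to the $l_\alpha$-enriched $\pc$.

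For the negative claims, the single underlying fact is that for $\kap=\nu$ and a granule $h\subseteq b$ one has $\nu(b\cup e,h)=\#(h)/\#(b\cup e)\leq\#(h)/\#(b)=\nu(b,h)$, so passing from $b$ to $b\cup e$ can push $\nu(\cdot,h)$ below $1-\alpha$ and remove $h$; hence $b^{l_\alpha}\not\subseteq(b\cup e)^{l_\alpha}$ in general, and already $a\subseteq e$ fails to force $a^{l_\alpha}\subseteq e^{l_\alpha}$. For \textsf{sub5} and \textsf{sub6} the running example suffices: $A_4=\{x_4\}\subseteq A_7=\{x_1,x_4\}$ but $A_4^{l_\alpha}=\{x_4\}\not\subseteq\emptyset=A_7^{l_\alpha}$, so with $a=A_4$, $e=A_7$ and $b=\bot$ (for \textsf{sub5}: $\bot^{l_\alpha}\subseteq a^{l_\alpha}$ and $\bot^{l_\alpha}\subseteq e^{l_\alpha}$), respectively $b=A_4$ (for \textsf{sub6}: $A_7^{l_\alpha}=\emptyset\subseteq A_4^{l_\alpha}$), all hypotheses hold (in particular $\pc A_4 A_7$, since $A_4\subseteq A_7$) while $\pc_s^5 A_4 A_7$ fails. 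For \textsf{sub4} one wants $a^{l_\alpha}\subseteq b^{l_\alpha}$, $a^{l_\alpha}\subseteq e^{l_\alpha}$ but $a^{l_\alpha}\not\subseteq(b\cup e)^{l_\alpha}$; the running example happens to be union-monotone enough not to show this, so I would write down a fresh small $\mathbb{S}$ in which some granule $h$ lies just above the $1-\alpha$ fraction of $b$ and of $e$ but below that of $b\cup e$, with $a$ chosen so that $h\subseteq a^{l_\alpha}$, and check it against the $\geq$ thresholds in the definition of $l_\alpha$.

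The step I expect to be the main obstacle is pinning down the ambient parthood $\pc$ so that the positive and negative halves are mutually consistent: \textsf{sub2} forces $\pc$ to contain the $l_\alpha$-ordering, whereas the failures of \textsf{sub5}/\textsf{sub6} require $\pc$ to relate pairs such as $A_4\subseteq A_7$ that the $l_\alpha$-ordering does not, so one must exhibit a $\pc$ strictly coarser than $\pc_s^5$ that still obeys the \textsf{Pre*-GGS} axioms (notably \textsf{UL2}). Once that $\pc$ is fixed, what remains --- the rough-equality identification in \textsf{sub3} and the fresh \textsf{sub4} counterexample --- is routine bookkeeping.
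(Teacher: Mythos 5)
Your positive part goes far beyond what the paper actually proves: the paper's own proof of this proposition consists of the single observation that $a^{l_\alpha}\subseteq a^{l_\alpha}$ gives \textsf{sub1}, and it never argues \textsf{sub2}, \textsf{sub3}, or any of the claimed failures. Your handling of \textsf{sub1} and \textsf{sub3} (with $\approx$ read as $a^{l_\alpha}=b^{l_\alpha}$) is exactly what is needed. But be aware that your treatment of \textsf{sub2} is a repair of the statement, not a reconstruction of the paper's argument: in this section the paper evidently works with $\pc=\subseteq$ (its proof for $\pc_s^3$ says \textsf{sub2} holds ``because $\subseteq$ is part of the defining condition''), and under that reading your own example ($A_6^{l_\alpha}=\emptyset=A_1^{l_\alpha}$ while $A_6\not\subseteq A_1$) shows the \textsf{sub2} claim for $\pc_s^5$ is simply false --- which also sits uneasily with the paper's earlier assertion that $\pc_s^5$ and $\pc_s^7$ are equivalent while $\pc_s^7$ is said to fail \textsf{sub2}.

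The genuine gap is the one you yourself flag and then defer: the two halves of your proof use incompatible parthoods. Your \textsf{sub5}/\textsf{sub6} counterexamples ($a=A_4$, $e=A_7$, $b=\bot$ resp.\ $b=A_4$) are correct against Table \ref{bitedap} \emph{provided} $\pc A_4 A_7$ holds, i.e.\ provided $\pc$ contains $\subseteq$-pairs outside the $l_\alpha$-ordering; but your \textsf{sub2} argument needs $\pc$ to contain the $l_\alpha$-ordering, and if one takes $\pc$ to be exactly that ordering (so $\pc=\pc_s^5$), then \textsf{sub5} and \textsf{sub6} become trivially true, contradicting the negative half of the proposition. So the proof is not complete until you exhibit one fixed $\pc$ (e.g.\ the union of $\subseteq$ with the $l_\alpha$-order) and check it against the \textsf{Pre*-GGS} axioms, in particular \textsf{UL2}; alternatively, concede that with $\pc=\subseteq$ the proposition is correct except for \textsf{sub2}. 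The deferred \textsf{sub4} counterexample, by contrast, is routine and your recipe works: with $\kap=\nu$, $\alpha=0.3$, a granule $h$ with $\#(h)=3$, $b=h\cup\{p\}$, $e=h\cup\{q\}$ and $a=h$ one gets $\nu(b,h)=\nu(e,h)=0.75\geq 0.7$ but $\nu(b\cup e,h)=0.6<0.7$, so $\pc_s^5 ab$ and $\pc_s^5 ae$ hold while $\pc_s^5 a(b\vee e)$ fails.
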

\begin{proof}
The predicate is defined by $\pc_s^5 {ab} \text{ if and only if } a^{l_\alpha}\subseteq b^{l_\alpha} $.

Thus if $a=b$, then $\pc_s^5 aa$ holds and \textsf{sub1} holds.
\end{proof}

\begin{proposition}
$\pc_s^6$ satisfies \textsf{sub1, sub2, sub3, sub4}, and \textsf{sub6}. However it does not satisfy \textsf{sub5} in general.
\end{proposition}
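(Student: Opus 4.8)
\emph{Proof idea.} The plan is to unwind the defining equivalence $\pc_s^6 ab \Longleftrightarrow (a\subseteq b \,\&\, \#(a)>k)$ and test each clause directly, in a set \textsf{pGsGS} where the ambient parthood $\pc$ is set inclusion, using repeatedly the one elementary fact that $\#$ is monotone on the finite sets involved, so $b\subseteq a$ forces $\#(a)\geq\#(b)$ (and, via \textsf{UL2} with $\pc=\subseteq$, that $l$ and $u$ are monotone). I would flag at the outset that \textsf{sub1} is to be read modulo the standing convention already recorded for $\pc_s^6$ — namely that the objects under consideration have more than $k$ elements — since $\pc_s^6 aa$ is equivalent to $\#(a)>k$; granting that, $a\subseteq a$ immediately gives $\pc_s^6 aa$.

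\textbf{The positive conditions.} Each is a one-line verification. \textsf{sub2}: $\pc_s^6 ab$ carries $a\subseteq b$, hence $\pc ab$. \textsf{sub3} (and in fact \textsf{Asy}): $\pc_s^6 ab$ and $\pc_s^6 ba$ give $a\subseteq b\subseteq a$, so $a=b$ and therefore $a\approx b$ for any definable rough equality. \textsf{sub4}: from $\pc_s^6 ae$ and $\pc_s^6 ab$ one has $a\subseteq e$, $a\subseteq b$ and $\#(a)>k$; then $a\subseteq b\vee e$ (this is $b\cup e$ in the set case) while $\#(a)>k$ is unchanged, i.e. $\pc_s^6 a(b\vee e)$. \textsf{sub6}: $\pc_s^6 ab$ already supplies $\#(a)>k$ and the premise $\pc ae$ supplies $a\subseteq e$, so $\pc_s^6 ae$ follows — the hypothesis $\pc_s^6 eb$ being redundant. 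None of these presents any difficulty.

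\textbf{On \textsf{sub5}, and the expected obstacle.} Under the same set reading the obvious attempt at \textsf{sub5} in fact \emph{succeeds}: $\pc_s^6 ba$ gives $b\subseteq a$ with $\#(b)>k$, so $\#(a)\geq\#(b)>k$, and $\pc ae$ gives $a\subseteq e$, whence $\pc_s^6 ae$. So the asserted ``not in general'' must be witnessed in a \textsf{pGsGS} whose $\pc$ is strictly coarser than $\subseteq$ (for instance $\pc xy \Longleftrightarrow x^l\subseteq y^l$ over a small finite approximation space): there one would pick $a,e$ with $\pc ae$ but $a\not\subseteq e$ and a common subset $b\subseteq a\cap e$ with $\#(b)>k$, making $\pc_s^6 ba$, $\pc_s^6 be$ and $\pc ae$ hold while $\pc_s^6 ae$ fails for lack of the set inclusion $a\subseteq e$. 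The main obstacle I anticipate is internal coherence with the rest of the proposition: weakening $\pc$ in this way tends to break \textsf{sub6} by exactly the same mechanism, so the counterexample has to be engineered — with care over $\mathcal{G}$-admissibility and \textsf{UL1} — so that the premise pattern of \textsf{sub6} still forces $a\subseteq e$ while that of \textsf{sub5} does not. Isolating such a model (or else pinning down the precise hypotheses under which the proposition is meant to be read) is the step that will take the real work; the remaining verifications are routine.
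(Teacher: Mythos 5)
Your verifications of the positive clauses are correct and essentially exhaust what can be said: \textsf{sub2}, \textsf{sub3}, \textsf{sub4} and \textsf{sub6} are one-line consequences of the definition of $\pc_s^6$ once $\pc$ is read as $\subseteq$, and your caveat on \textsf{sub1} is warranted, since the paper's own earlier proposition records that $\pc_s^6 aa$ holds if and only if $\#(a)>k$, so unconditional reflexivity fails for small sets (the paper makes exactly this concession for $\pc_s^3$, which is why \textsf{sub1} is not claimed there). Note that the paper supplies no argument at all for this proposition -- its proof environment is empty -- so there is no authorial route to compare yours against.

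The genuine gap is in the negative claim, and you have correctly located it without resolving it. Your own monotonicity argument shows that, with $\pc$ read as $\subseteq$ (which is how the neighbouring proof for $\pc_s^3$ in this very section unpacks the premise $\pc ae$), \textsf{sub5} is provable: $\pc_s^6 ba$ gives $b\subseteq a$ and $\#(b)>k$, hence $\#(a)\geq\#(b)>k$, and $\pc ae$ gives $a\subseteq e$, so $\pc_s^6 ae$ follows. Your proposed escape -- taking $\pc$ strictly coarser than $\subseteq$ so that $\pc ae$ no longer yields $a\subseteq e$ -- destroys \textsf{sub6} by exactly the same mechanism (take $a=\{1,2\}$, $e=\{3,4\}$, $b=a\cup e$, $k=1$, with $\pc ae$ holding via equal lower approximations), so no uniform choice of $\pc$ can make \textsf{sub6} hold while \textsf{sub5} fails, since the only obstacle to either is the clause $a\subseteq e$ common to both conclusions. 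In other words, the ``engineered model'' you defer to does not exist under any single reading of $\pc$, and the proposition's split between \textsf{sub5} and \textsf{sub6} appears to be an error or to presuppose hypotheses not stated in the paper. As a proof of the statement as written, your proposal is therefore incomplete (no counterexample for \textsf{sub5} is, or can be, delivered); as a piece of mathematics it is sound and in fact stronger, since it establishes \textsf{sub5} under the section's own conventions, which is worth stating explicitly rather than leaving as an anticipated obstacle.
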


\begin{proposition}
$\pc_s^7$ satisfies \textsf{sub3}, and \textsf{sub1}. However does not satisfy \textsf{sub2, sub4, sub5} and \textsf{sub6} in general.
\end{proposition}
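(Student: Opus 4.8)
The plan is to dispatch the two positive clauses, \textsf{sub1} and \textsf{sub3}, by direct reading of the definition, and to refute \textsf{sub2}, \textsf{sub4}, \textsf{sub5}, \textsf{sub6} by small counterexamples that turn on the fact that $\pc_s^7 ab$ is \emph{vacuously true whenever no granule is a common subset of $a$ and $b$}. For \textsf{sub1}: the defining clause of $\pc_s^7 aa$ is $(\forall h\in \mathcal{G})(h\subseteq a \,\&\, h\subseteq a \,\&\, \kap (a, h)\geq 1-\alpha \longrightarrow \kap (a, h)\geq 1-\alpha)$, whose consequent is a conjunct of its own antecedent, so it holds. For \textsf{sub3} I would take $\approx$ to be the definable, reflexive and symmetric relation $\pc_s^7 ab \,\&\, \pc_s^7 ba$ itself --- i.e.\ ``$a$ and $b$ agree on which of their common sub-granules are $(1-\alpha)$-large'' --- whereupon \textsf{sub3} is immediate; I would also note that a finer choice such as the $\alpha$-lower rough equality ($a^{l_\alpha}= b^{l_\alpha}$) will not do, since $\pc_s^7$ only constrains granules lying inside $a\cap b$, so $\pc_s^7 ab\,\&\,\pc_s^7 ba$ can hold with $a^{l_\alpha}\neq b^{l_\alpha}$ as soon as some granule sits inside $a$ but not inside $b$.

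For the negative clauses the common device is vacuity of the hypotheses together with the observation that, for $\kap = \nu$, one has $\nu(e, h) = \#(h)/\#(e)$ when $h\subseteq e$, which decreases as $\#(e)$ grows; hence a granule that is $(1-\alpha)$-large for a small set $a$ ceases to be so for a much larger superset $e$. To kill \textsf{sub2} it suffices to take $a$ containing no granule at all --- then $\pc_s^7 ab$ holds vacuously for every $b$ --- and some $b$ with $a\not\subseteq b$; in the running example $a=\{x_1\}$, $b=\{x_4\}$ already does this. To kill \textsf{sub4} I would use a single granule $h$ and sets $b, e$ with $h\not\subseteq b$, $h\not\subseteq e$ but $h\subseteq b\cup e$, and a small $a\supseteq h$; then $\pc_s^7 ab$ and $\pc_s^7 ae$ hold vacuously, while $h\subseteq a\cap(b\vee e)$, $\kap(a, h)\geq 1-\alpha$, and by inflating $\#(b\cup e)$ one forces $\kap(b\vee e, h) < 1-\alpha$, so $\pc_s^7 a(b\vee e)$ fails. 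For \textsf{sub5} and \textsf{sub6} I would pick $b$ sharing no granule with either $a$ or $e$ (making both $\pc_s^7$ hypotheses vacuous) and $a\subseteq e$ with a granule $h\subseteq a\subseteq e$ that is $(1-\alpha)$-large for $a$ but not for the much larger $e$; then $\pc ae$ holds yet $\pc_s^7 ae$ fails.

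The step I expect to be the main obstacle is twofold. First, \textsf{sub3} forces a decision about which ``definable rough equality'' is intended: the symmetric part of $\pc_s^7$ is a tolerance but need not be transitive, so if one insists on a genuine equivalence the claim survives only with a rather coarse $\approx$, and this should be stated honestly rather than glossed over. Second, the vacuity trick needs $a$ (or $b$) to contain no granule, which must be reconciled with any covering or definiteness conditions imposed on $\mathcal{G}$ in the ambient \textsf{pGsGS}; the remedy is to build a minimal model and, where the framework demands more granules, add them far away from the sets under consideration so that they cannot become common sub-granules of the relevant pairs.
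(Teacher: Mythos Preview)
Your proposal is correct and, in fact, considerably more detailed than what the paper offers: the paper's proof environment for this proposition is empty (it contains only \verb|\qed|), so there is no argument there to compare against. Your handling of \textsf{sub1} is the obvious one, and your treatment of \textsf{sub3}---taking $\approx$ to be the symmetrization of $\pc_s^7$ itself---is in the spirit of what the paper does for the analogous proposition on $\pc_s^9$, where it explicitly declares the intended $\approx$; your honest caveat that finer choices such as $a^{l_\alpha}=b^{l_\alpha}$ will \emph{not} work is well taken and worth keeping.

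Your counterexample machinery for \textsf{sub2}, \textsf{sub4}, \textsf{sub5}, \textsf{sub6} is sound. The vacuity trick (no granule inside $a$, or no common sub-granule of the relevant pair) combined with the monotone decrease of $\nu(e,h)=\#(h)/\#(e)$ in $\#(e)$ is exactly the right lever; your instance $a=\{x_1\}$, $b=\{x_4\}$ from the running example already kills \textsf{sub2}, and your single-granule constructions for the remaining three axioms are clean. The ``main obstacle'' you flag---reconciling the need for granule-free sets with any covering conditions on $\mathcal{G}$---is a legitimate worry in a full \textsf{pGsGS}, but the paper's own granulation in the worked example already exhibits singletons with no sub-granule, so the objection does not bite here.
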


\begin{proposition}
$\pc_s^9$ satisfies \textsf{sub3}, and \textsf{sub1}. However does not satisfy \textsf{sub2, sub4, sub5} and \textsf{sub6} in general.
\end{proposition}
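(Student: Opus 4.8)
\noindent The plan is to route everything through one bookkeeping device. For each element $a$ of the ambient structure put $P(a)=\{h\in\mathcal{G}:\,\kap(a,h)\geq\alpha\}$, the \emph{profile} of $a$; then, straight from the definition, $\pc_s^9 ab$ holds precisely when $P(a)\subseteq P(b)$, so that $\pc_s^9$ is nothing but the preorder induced by the map $P$. From this, \textsf{sub1} is immediate ($P(a)\subseteq P(a)$, for any $\kap$), and \textsf{sub3} follows because $\pc_s^9 ab\,\&\,\pc_s^9 ba$ forces $P(a)=P(b)$: one takes the $\approx$ of \textsf{sub3} to be the equivalence $a\approx_\kap b\Leftrightarrow P(a)=P(b)$, which is definable (the kernel of $P$) and of the same flavour as the rough equalities $\approx_l,\approx_u$. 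This already shows $\pc_s^9$ is not antisymmetric even though \textsf{sub3} is genuine.

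\smallskip\noindent For the four negative clauses I would fix $\kap=\nu$, the classical RIF (a wqRIF, so the standing hypothesis holds), and some $\alpha\in(0,0.5)$. For \textsf{sub2} the running example of Table~\ref{bitedap} already suffices: with $\alpha=0.3$ and the granulation there, $\{x_1,x_2\}$ and $\{x_2,x_3\}$ have the common profile $\{\{x_1,x_2\},\{x_1,x_2,x_3\},\{x_2,x_3\}\}$, so $\pc_s^9 \{x_1,x_2\}\{x_2,x_3\}$ holds while $\{x_1,x_2\}\not\subseteq\{x_2,x_3\}$, contradicting \textsf{sub2}. For \textsf{sub6} I would take $b=\bot=\emptyset$; since $\nu(\emptyset,h)=1\geq\alpha$ for all $h$, $P(\emptyset)=\mathcal{G}$, whence $\pc_s^9 a\emptyset$ and $\pc_s^9 e\emptyset$ hold for every $a,e$. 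Picking a granule $h$ with $\#h=3$ and a set $D$ disjoint from $h$ with $\#D=10$ (adjoining further small granules so $\mathcal{G}$ still covers), and setting $a=h$, $e=h\cup D$, one gets $\pc ae$, $h\in P(a)$ since $\nu(h,h)=1$, but $\nu(e,h)=3/13<0.3$ so $h\notin P(e)$ and $\neg\pc_s^9 ae$; hence \textsf{sub6} fails. \textsf{sub5} is analogous: replace $\emptyset$ by any $b$ large relative to the (uniformly small) granules so that $\nu(b,h')<\alpha$ for all $h'\in\mathcal{G}$, giving $P(b)=\emptyset\subseteq P(a),P(e)$, so that $\pc_s^9 ba$ and $\pc_s^9 be$ hold vacuously while the same pair $a=h$, $e=h\cup D$ violates $\pc_s^9 ae$.

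\smallskip\noindent The clause I expect to be the real obstacle is \textsf{sub4}, because there $a$ occupies the left slot of both hypotheses, so we cannot make them free by a clever choice of $b$; instead we must arrange $P(a)$ to be a singleton, which forces us to build the granulation rather than borrow the standard example. The mechanism is the dilution of $\nu(\,\cdot\,,h)$ under union --- its numerator grows by at most $\#h$ while its denominator is unbounded. Concretely I would choose $\mathcal{G}$ so that some point $p$ lies in a single granule $h$ with $\#h=3$; then $P(\{p\})=\{h\}$, and for $a=\{p\}$ the hypotheses $\pc_s^9 ae$ and $\pc_s^9 ab$ reduce to $\nu(e,h)\geq\alpha$ and $\nu(b,h)\geq\alpha$. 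Taking $b=h\cup B'$, $e=h\cup E'$ with $B',E'$ pairwise disjoint and disjoint from $h$, $\#B'=\#E'=6$, and $\alpha=0.3$, gives $\nu(b,h)=\nu(e,h)=1/3\geq\alpha$ but $\nu(b\cup e,h)=3/15<\alpha$, so $\pc_s^9 ab$ and $\pc_s^9 ae$ hold while $\pc_s^9 a(b\vee e)$ fails. This establishes that $\pc_s^9$ satisfies \textsf{sub1} and \textsf{sub3} but none of \textsf{sub2}, \textsf{sub4}, \textsf{sub5}, \textsf{sub6} in general; the remaining work is just (i) arguing that $\approx_\kap$ is admissible as one of the paper's definable rough equalities, and (ii) checking in the \textsf{sub4} model that no granule other than $h$ interferes with the two hypotheses --- both routine once the profile reformulation is in place.
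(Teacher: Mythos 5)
Your proof is correct, and for the affirmative clauses it is essentially the paper's own argument: \textsf{sub1} is just reflexivity of the induced preorder, and \textsf{sub3} holds once $\approx$ is read as agreement of the threshold condition over all granules --- your kernel relation $P(a)=P(b)$ is literally the interpretation $(\forall h\in\mathcal{G})(\kap(a,h)\geq\alpha \leftrightarrow \kap(b,h)\geq\alpha)$ that the paper uses. Where you genuinely differ is in the negative part: the paper merely asserts that \textsf{sub2, sub4, sub5, sub6} do not necessarily hold and records no witnesses, whereas you construct explicit counterexamples, and they check out. In the Table~\ref{bitedap} context with $\kap=\nu$ and $\alpha=0.3$, the sets $\{x_1,x_2\}$ and $\{x_2,x_3\}$ do have equal profiles, so \textsf{sub2} fails against $\pc=\subseteq$; taking $b=\emptyset$ (so $\nu(\emptyset,h)=1$ and $P(\emptyset)=\mathcal{G}$) together with $a=h$, $e=h\cup D$, $\#h=3$, $\#D=10$ refutes \textsf{sub6}, and the dual device $P(b)=\emptyset$ for $b$ large relative to the granules refutes \textsf{sub5}; your dilution construction for \textsf{sub4} ($P(a)=\{h\}$, $\nu(b,h)=\nu(e,h)=1/3\geq\alpha$ but $\nu(b\cup e,h)=1/5<\alpha$) is also sound, and the closing worry you raise is harmless: once $P(a)$ is a singleton, the two hypotheses reduce exactly to $h\in P(b)$ and $h\in P(e)$, so no other granule can interfere, and the ambient structure (finite powerset, covering granulation, $\nu$ a wqRIF) satisfies all standing assumptions. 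So your write-up establishes strictly more than the paper's proof records; the only point needing acknowledgement is the one you already flag, namely that $\approx_\kap$ must be admitted as one of the framework's definable rough equalities, which is precisely the paper's own choice.
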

\begin{proof}
The predicate is defined by $\pc_s^9 {ab} \text{ if and only if } (\forall h\in \mathcal{G})(\kap (a, h) \geq \alpha \longrightarrow \kap (b, h) \geq \alpha )$

For $a=b$, $\pc_s^9 aa$ obviously holds. Thus \textsf{sub1} is satisfied.

The property \textsf{sub3} holds when $\approx$ is interpreted as $(\forall h\in \mathcal{G})(\kap (a, h) \geq \alpha \leftrightarrow \kap (b, h) \geq \alpha )$. 

The other properties do not necessarily hold.
\end{proof}

\begin{proposition}
$\pc_s^l$ satisfies \textsf{sub3}, and \textsf{sub1}. However does not satisfy \textsf{sub2, sub4, sub5} and \textsf{sub6} in general.
\end{proposition}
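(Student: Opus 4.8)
The plan is to split the statement into the two positive clauses \textsf{sub1}, \textsf{sub3}, which follow immediately from the definition $\pc_s^l ab \Leftrightarrow (a^{l_\alpha}\subseteq b^{l_\alpha}\ \&\ \kap(a,b)\geq 1-\alpha)$ together with the wqRIF axioms, and the four negative clauses \textsf{sub2}, \textsf{sub4}, \textsf{sub5}, \textsf{sub6}, for which I would exhibit counterexamples — three of them read off Table~\ref{bitedap} with $\alpha = 0.3$ and $\kap = \nu$, and the fourth from a slightly larger rough convenience built for the purpose. For \textsf{sub1}, $a^{l_\alpha}\subseteq a^{l_\alpha}$ is trivial and a wqRIF satisfies \textsf{R0}, hence \textsf{U1}, so $\kap(a,a)=1\geq 1-\alpha$ and $\pc_s^l aa$ holds. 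For \textsf{sub3}, $\pc_s^l ab$ and $\pc_s^l ba$ give $a^{l_\alpha}\subseteq b^{l_\alpha}$ and $b^{l_\alpha}\subseteq a^{l_\alpha}$, hence $a^{l_\alpha}=b^{l_\alpha}$; reading $\approx$ as the definable lower-$\alpha$ rough equality $a\approx b \Leftrightarrow a^{l_\alpha}=b^{l_\alpha}$, exactly as $\approx$ is chosen for $\pc_s^9$ above, this yields $a\approx b$. Both steps are routine.

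For the negative clauses I would exploit that $l_\alpha$ is not monotone: in the running example $A_5=\{x_1,x_2\}\subseteq A_{12}=\{x_1,x_2,x_4\}$ but $A_5^{l_\alpha}=\{x_1,x_2\}\not\subseteq\emptyset=A_{12}^{l_\alpha}$. \textsf{sub2} fails since $\pc_s^l A_{15} A_{11}$ holds ($A_{15}^{l_\alpha}=A_{11}^{l_\alpha}=\{x_1,x_2,x_3\}$ and $\nu(H,\{x_1,x_2,x_3\})=3/4\geq 0.7$) while $\pc A_{15} A_{11}$, i.e. $H\subseteq\{x_1,x_2,x_3\}$, fails. \textsf{sub5} fails with $b=A_{16}=\emptyset$, $a=A_5$, $e=A_{12}$: since $\emptyset^{l_\alpha}=\emptyset$ and $\nu(\emptyset,X)=1$ for every $X$, both $\pc_s^l \emptyset A_5$ and $\pc_s^l \emptyset A_{12}$ hold, $\pc A_5 A_{12}$ holds, but $\pc_s^l A_5 A_{12}$ fails. \textsf{sub6} fails with $b=A_{15}=H$, $a=A_5$, $e=A_{12}$: here $\nu(A_5,H)=\nu(A_{12},H)=1$ and $A_5^{l_\alpha}, A_{12}^{l_\alpha}\subseteq H^{l_\alpha}$, so $\pc_s^l A_5 H$ and $\pc_s^l A_{12} H$ hold, $\pc A_5 A_{12}$ holds, and again $\pc_s^l A_5 A_{12}$ fails.

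The main obstacle is \textsf{sub4}. Since every wqRIF satisfies \textsf{R3}, one gets $\kap(a,b\vee e)\geq\max\{\kap(a,b),\kap(a,e)\}\geq 1-\alpha$ for free (in a rough convenience $b\vee e = b\cup e$), so the $\kap$-component of the conclusion can never be the cause of failure and the counterexample must break the inclusion $a^{l_\alpha}\subseteq(b\vee e)^{l_\alpha}$; moreover the $l_\alpha$-values occurring in Table~\ref{bitedap} are too nested for that. I would therefore take $\wp(H)$ with $H=\{1,\dots,13\}$, $\mathcal{G}=\{h_0,\{8\},\dots,\{13\}\}$ where $h_0=\{1,\dots,7\}$, and $a=h_0$, $b=\{1,\dots,10\}$, $e=\{1,\dots,7,11,12,13\}$: then $\nu(b,h_0)=\nu(e,h_0)=7/10=1-\alpha$ and $\nu(a,b)=\nu(a,e)=1$, so $a^{l_\alpha}=h_0\subseteq b^{l_\alpha}$, $a^{l_\alpha}=h_0\subseteq e^{l_\alpha}$ and $\pc_s^l ab$, $\pc_s^l ae$ hold, whereas $\#(b\cup e)=13$ gives $\nu(b\cup e,h_0)=7/13<1-\alpha$ while no singleton granule qualifies either, so $(b\vee e)^{l_\alpha}=\emptyset$ and $\pc_s^l a(b\vee e)$ fails. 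The only delicate point is verifying that this data genuinely forms a rough convenience and that no granule rescues $(b\vee e)^{l_\alpha}$; with that bookkeeping checked, all six assertions follow.
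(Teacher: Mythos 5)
Your proposal is correct, and it is considerably more complete than what the paper records. The paper's own proof consists only of restating the definition of $\pc_s^l$ and observing that \textsf{sub3} is guaranteed once $\approx$ is read as $a^{l_\alpha}=b^{l_\alpha}\,\&\,\kap(a,b)\geq 1-\alpha\,\&\,\kap(b,a)\geq 1-\alpha$; the verification of \textsf{sub1} and all of the failure claims are left tacit. You do the same positive work (your reading of $\approx$ as the lower-$\alpha$ equality $a^{l_\alpha}=b^{l_\alpha}$ is weaker than the paper's, but both are admissible ``definable rough equalities,'' and under the hypotheses of \textsf{sub3} the extra $\kap$-clauses of the paper's version are already among the premises, so the two readings give the same conclusion), and then you add what the paper omits: explicit counterexamples. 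I checked these against Table~\ref{bitedap}: $\pc_s^l A_{15}A_{11}$ with $\nu(H,A_{11})=3/4\geq 0.7$ but $H\not\subseteq A_{11}$ kills \textsf{sub2}; the triples $(\emptyset, A_5, A_{12})$ and $(A_5, H, A_{12})$ kill \textsf{sub5} and \textsf{sub6} because $A_5^{l_\alpha}=\{x_1,x_2\}\not\subseteq\emptyset=A_{12}^{l_\alpha}$; and your observation that \textsf{R3} forces $\kap(a,b\vee e)\geq 1-\alpha$, so that \textsf{sub4} can only fail through the inclusion component, correctly explains why the running example (whose $l_\alpha$-values are nested whenever the relevant intersections are nonempty) cannot witness the failure. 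Your purpose-built example with $H=\{1,\dots,13\}$, $h_0=\{1,\dots,7\}$ checks out: $\nu(b,h_0)=\nu(e,h_0)=7/10=0.7$ gives $b^{l_\alpha}=e^{l_\alpha}=h_0=a^{l_\alpha}$, while $\nu(b\cup e,h_0)=7/13<0.7$ forces $(b\cup e)^{l_\alpha}=\emptyset$, so $\pc_s^l a(b\vee e)$ fails; since $\wp(H)$ with these granules is trivially a rough convenience and no singleton granule meets the $0.7$ threshold, the bookkeeping you flag is fine. In short: same skeleton as the paper for the positive clauses, plus a genuine contribution (the four counterexamples) where the paper asserts failure without proof.
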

\begin{proof}
The predicate is defined by $\pc_s^l ab \text{ if and only if } a^{l_\alpha}\subseteq b^{l_\alpha} \,\&\, \kap(a, b) \geq 1-\alpha$ 

For any $a$ and $b$, $a \approx b$ needs to be interpreted as $a^{l_\alpha} = b^{l_\alpha} \,\&\, \kap(a, b) \geq 1-\alpha \, \&\, \kap (b, a)\geq 1 -\alpha$ to guarantee \textsf{sub3}
\end{proof}

\begin{proposition}
$\pc_s^u$ satisfies \textsf{sub3}, and \textsf{sub1}. However does not satisfy \textsf{sub2, sub4, sub5} and \textsf{sub6} in general.
\end{proposition}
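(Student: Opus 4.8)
The plan is to split the statement into the two positive clauses (\textsf{sub1} and \textsf{sub3}), which I would read off directly from the properties of $\pc_s^u$ already established, and the four negative clauses (\textsf{sub2}, \textsf{sub4}, \textsf{sub5}, \textsf{sub6}), each of which I would settle by a single small explicit set model in which the VPRS upper operator $u_\alpha$ fails to be monotone. Throughout, the ambient parthood $\pc$ is set inclusion $\subseteq$, as in the set models under consideration, and $\kap$ is a \textsf{wqRIF}.

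For the positive part: \textsf{sub1} is just the reflexivity clause of the earlier proposition on $\pc_s^u$ — $a^{u_\alpha}\subseteq a^{u_\alpha}$ trivially, and since $\alpha\in[0,0.5)$ and $\kap$ is a \textsf{wqRIF} (so $\kap(a,a)=1$ via \textsf{R0} and \textsf{PT1}), one gets $\kap(a,a)=1\geq\alpha$; hence $\pc_s^u aa$. For \textsf{sub3}, if $\pc_s^u ab$ and $\pc_s^u ba$, then $a^{u_\alpha}\subseteq b^{u_\alpha}$ and $b^{u_\alpha}\subseteq a^{u_\alpha}$, so $a^{u_\alpha}=b^{u_\alpha}$ (again recorded in the earlier proposition). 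I would then take $\approx$ to be the definable rough equality $a\approx b$ iff $a^{u_\alpha}=b^{u_\alpha}$ — or, mirroring the treatment of $\pc_s^l$, strengthen it to also require $\kap(a,b)\geq\alpha$ and $\kap(b,a)\geq\alpha$, both of which are then available — which yields \textsf{sub3}.

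For the negative part: \textsf{sub2} I would dispatch with the running example, taking $\alpha=0.3$, $\kap=\nu$, and reading Table \ref{bitedap}: with $a=A_5=\{x_1,x_2\}$ and $b=A_8=\{x_2,x_3\}$ one has $A_5^{u_\alpha}=A_8^{u_\alpha}=\{x_1,x_2,x_3\}$ and $\nu(A_5,A_8)=\frac{1}{2}\geq 0.3$, so $\pc_s^u A_5 A_8$, yet $A_5\not\subseteq A_8$, so \textsf{sub2} (which would demand $A_5\subseteq A_8$) fails. For \textsf{sub4}, \textsf{sub5}, \textsf{sub6} the common engine is that $u_\alpha$ is not monotone: I would build a small partition-granulated universe (for instance $U=\{1,\dots,10\}$ with granules $\{1,9,10\}$, $\{2,3,4,5\}$, $\{6,7,8\}$, $\alpha=0.3$, $\kap=\nu$) in which $a=\{1\}\subseteq e$ for a suitable $e$ yet $a^{u_\alpha}\not\subseteq e^{u_\alpha}$, the granule qualifying for $a$ dropping out of $e^{u_\alpha}$ once its denominator grows past the threshold $\#(h\cap x)>\alpha\#(x)$. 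Then \textsf{sub5} fails on taking $b=\bot$, since $\pc_s^u\bot a$ and $\pc_s^u\bot e$ hold by the earlier proposition and $\pc ae$ holds because $a\subseteq e$, but $\pc_s^u ae$ does not; \textsf{sub6} fails on taking a $b$ (e.g. one point from each granule) whose $u_\alpha$ is the whole universe and with $\nu(a,b),\nu(e,b)\geq\alpha$, so that $\pc_s^u ab$, $\pc_s^u eb$ and $\pc ae$ all hold while $\pc_s^u ae$ fails; and \textsf{sub4} fails on choosing $a=\{1\}$ and $b,e$ each $\nu$-dense in the granule $a^{u_\alpha}$ but meeting it only in the single point $1$, so that $(b\vee e)^{u_\alpha}$ no longer contains $a^{u_\alpha}$, while the inclusion-function side $\nu(a,b\cup e)\geq\nu(a,b)\geq\alpha$ is automatic.

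The positive clauses and \textsf{sub2} are immediate, so the real content is \textsf{sub4}, \textsf{sub5}, \textsf{sub6}. The hard part will be pinning down one explicit model that simultaneously defeats all three, which reduces to checking, via the threshold inequalities $\#(h\cap x)>\alpha\#(x)$, exactly which granules enter each $(\cdot)^{u_\alpha}$ — the same ceiling-function bookkeeping used in the graded-versus-VPRS results above. Once the model is fixed, the verification is routine.
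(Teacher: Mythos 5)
Your proposal is correct, and it in fact supplies more than the paper does: the paper states this proposition with no proof at all, and its neighbouring propositions (for $\pc_s^l$ and $\pc_s^9$) verify only the positive clauses, leaving the failures unwitnessed. Your treatment of \textsf{sub1} and \textsf{sub3} coincides with the paper's pattern elsewhere: reflexivity of the inclusion $a^{u_\alpha}\subseteq a^{u_\alpha}$ together with $\kap(a,a)=1\geq\alpha$, and for \textsf{sub3} an explicit declaration of the rough equality, $a\approx b$ iff $a^{u_\alpha}=b^{u_\alpha}$ (optionally strengthened by the two $\kap$ conditions, which are available), exactly as the paper does for $\pc_s^l$. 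The negative clauses are where your contribution lies, and your witnesses check out: the pair $A_5=\{x_1,x_2\}$, $A_8=\{x_2,x_3\}$ from Table \ref{bitedap} defeats \textsf{sub2}, since $A_5^{u_\alpha}=A_8^{u_\alpha}=\{x_1,x_2,x_3\}$ and $\nu(A_5,A_8)=\tfrac{1}{2}\geq 0.3$ while $A_5\not\subseteq A_8$; and in your partition model ($U=\{1,\ldots,10\}$ with granules $\{1,9,10\}$, $\{2,3,4,5\}$, $\{6,7,8\}$, $\alpha=0.3$, $\kap=\nu$) the concrete choices $a=\{1\}$, $b=\{1,2,3\}$, $e=\{1,6,7\}$ give $a^{u_\alpha}=\{1,9,10\}$ contained in both $b^{u_\alpha}$ and $e^{u_\alpha}$ (each ratio $\tfrac{1}{3}>0.3$) but $(b\cup e)^{u_\alpha}=\{2,\ldots,8\}$ (ratio $\tfrac{1}{5}\not>0.3$), killing \textsf{sub4}; the same $a\subseteq e=\{1,2,3,6,7\}$ with $b=\bot$ kills \textsf{sub5}, using the already-recorded fact that $\pc_s^u\bot x$ holds for every $x$; and $b=\{1,2,6\}$, whose $u_\alpha$ is all of $U$, together with $\kap(a,b)=1$ and $\kap(e,b)=\tfrac{3}{5}$, kills \textsf{sub6}. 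The only thing to tighten is that your sketch leaves the sub4/sub6 witnesses partly implicit (``$\nu$-dense'' etc.); once they are pinned down as above, the verification is indeed the routine threshold bookkeeping you describe, and the single model does defeat all three clauses simultaneously.
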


\section{Applications}

Novel meta-applications are outlined in this section as VPRS, and hybrid variants are too frequently used in the literature. Further adaptations of the intelligent grain sorting meta-algorithm proposed in \cite{am202236} using graded rough sets is modified for the present context. This is potentially applicable to situations where sensors are not capable enough to handle the load (or restricted by design limitations).

\subsection{New Cluster Validation Technique}

In \cite{am2021c}, general rough set based methods are proposed for validation of soft and hard clustering contexts. The methods consist in modeling the entire clustering context from a granular rough set based perspective without intrusion and contamination, and measuring the deviation from rational approximations. 

The use of VPRS and related approximations cannot be expected to be non-intrusive or contamination free to a reasonable extent because of the fundamental role played by generalized RIFS. However can still provide reasonable comparisons of validity for different types of attributes (especially ones that are additive) and only when concepts of substantial parthood and rational approximations are definable.

The proposed methodology automatically makes it semi-supervised at the interpretation stage, and is essentially validation against an approximate reasoning perspective. The strategy is outlined in figure \ref{clumo} and the steps are described below.

\begin{figure}[hbt]
\centering
 \begin{tikzpicture}[node  distance=2.0cm, auto]
\node (A0) {Clean Data};
\node (F0) [left of=A0]{};
\node (A1) [below of=A0] {};
\node (B1) [left of=F0] {Ontology};
\node (F1) [right of=A1]{};
\node (A2) [right of=F1] {Soft/Hard-Clustering};
\node (B2) [below of=A1] {Relations-of-Interest};
\node (B3) [below of=B2] {GVPRS Model};
\node (C1) [below of=B3] {Substantial Parthoods};
\node (C2) [below of=C1] {Rational Classes};
\node (A3) [below of=C2] {Comparison};
\draw[->] (A0) to node {}(A2);
\draw[->] (A0) to node {}(B1);
\draw[->] (A0) to node {}(B2);
\draw[->] (A2) to node {}(A3);
\draw[->] (B1) to node {}(B2);
\draw[->] (B2) to node {}(B3);
\draw[->] (B3) to node {}(C1);
\draw[->] (C1) to node {}(C2);
\draw[->] (C2) to node {}(A3);
\draw[dashed] (B1) to node {}(A3);
\end{tikzpicture}
\caption{Cluster Validation by Models}\label{clumo}
\end{figure}
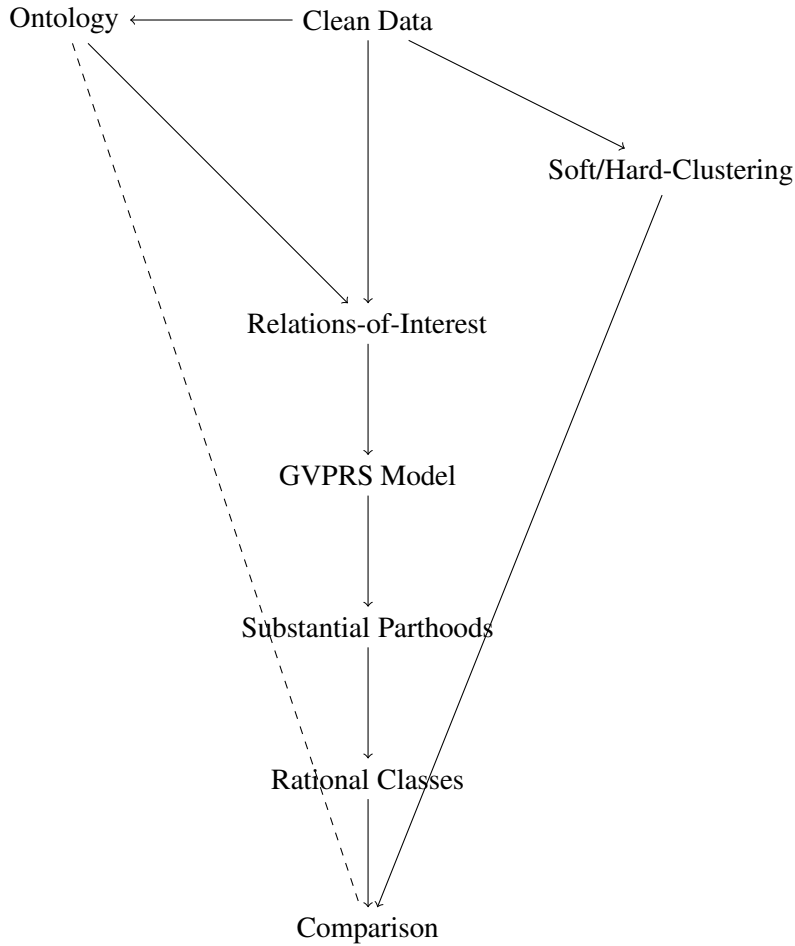

\begin{description}
 \item [Semantics]{Model the situation in the granular perspective using a suitable variant of the framework -- assuming that an interpretation using generalized granular VPRS is sensible in the situation (for the context of this research paper).}
 \item [Data]{Start with clean data.}
 \item [Ontology]{Specify relevant ontology in the context on the basis of meaning (in formal or informal terms). }
 \item [Clustering]{Use any soft or hard clustering technique in the light of available ontology.}
 \item [Relations]{Identify relations of interest on the basis of ontological considerations.}
 \item [GVPRS Model]{Compute the generalized VPRS model for fixed sets of values of parameters}
 \item [Rational Classes]{Compute the approximations and rational approximations of the clusters.}
 \item [Comparison]{Verify the closeness of clusters to rational approximations using the measures of \cite{am2021c}. Variants of the measures are investigated in a forthcoming paper by the first author.}
\end{description}

Modification of the parameters or even the generalized rough inclusion function may be required to improve the fit and associate an ontology. Moreover, the methodology is not intended to be unsupervised because at least some understanding of the data and context is necessary to specify the appropriate generalized RIFs, the parameters, choice of clustering techniques, and measure classes. 

\subsection{Problems in Medical Imaging}

In medical imaging, problems of identifying regions and volumes of interest that contain specific features characterizing medical conditions, malignancies or tumors is difficult for both radiologists and machine learning scientists. Previous research has helped in the formulation of methods of machine learning aided radiology based diagnosis, classification, treatment planning, prognosis, and post-treatment evaluation.  Medical characterizations have their grey regions due to gaps and shortcomings in medical research. These may be associated with a number of possible solutions/mechanisms, and it would be useful to consider automated discovery of patterns and rules across imaging data. Applications of VPRS and hybrid versions thereof to image processing and related feature selection has a long history, aspects of which are mentioned in \cite{ljy2020,angelma2016,ajg2008}. 

Image segmentation is used to assign labels to the pixels of an image (obtained through methods such as magnetic resonance imaging (MRI), computed tomography (CT), and positron emission tomography (PET)) based on shared visual attributes -- this helps in approximately identifying regions of interest. This applies to automatic detection and diagnostics. Most research on ML methods in the context focus on these problems (see \cite{ajg2008}); however, not on discovering other patterns. In fact, noise-removal techniques like multi-level thresholding that are always used on the image data can be suspected to contribute to significant data loss. 

In medical image segmentation problems, the idea of misclassification is often treated in a simplistic way. If a tumor appears to be next to an apparently good region, then the latter region is most likely to have been affected in a complex way. One way to identify different regions can be through generalized rough inclusion functions. In the literature on cancer research, surgical removal of tumors is known to trigger metastases \cite{tost2017}. This leads to the problem of finding patterns and possibly useful rules that characterize the process of their origin, and states that are likely to degenerate. If it is assumed as in \cite{bms2016}, that the main problem is of image segmentation for precisely identifying malignant areas or problem zones, then this can be masking the possibilities These may be spread over a few areas of the image or may be localized in a single area. The malignancy in different areas may additionally be characterized by distinct feature vectors. This motivates studies relating to decision-making and rule discovery. For example, the extended VPRS methods that are intended for probabilistic set-valued information systems (or rather, probabilistic indeterministic information systems) of \cite{ytchs2017} are possibly modifiable from the perspective of this research for application to rule discovery in the contexts of this subsection.

The detailed meta-analysis in \cite{lrdm2020} demonstrates that fMRI methods are not reliable because the empirical evidence for brains working consistently along fixed patterns (in the perspective of fMRI) is not in place. This invalidates a number of related studies and methods. While rough set have been applied to some of these, especially to noise reduction and feature selection, this affects the prospect of improving these. On the other hand, the recent development of portable MRI scanners \cite{llz2021} that use low intensity magnetic fields and produce images with more noise, pose additional computational modeling challenges.

\subsection{Dynamic Approximation of Mixed Quality Fluids}

This example does not involve explicit definitions of relations on the dynamic data, and is analogous to the method of dynamic food grain sorting and quality approximation through graded rough sets in \cite{am202236}. In the example, ingredients of different types and qualities are sorted, and dynamically combined to form blends that approximate acceptable target qualities. While grains can be classified with the help of sensors and graded rough sets, not every ingredient can be processed the same way. Furthermore, it might be useful to treat flowing stream of discrete entities as a fluid as is often done in physics to accommodate non-discrete measures such as volume or mass. 

Dynamic intelligent quantity-aware methods of ingredient classification are necessary for quality assurance in the food processing or pharmaceutical industry. If raw materials such as milk, milk products, seeds, grains, and oil, are procured from diverse sources, then variations in quality are common. Manufacture of pharmaceutical raw materials involves additional constraints on  environment, procedures, and quality.   

A survey of computer vision based methods can be found in \cite{hpra2021}. While the intent of the methods are for dynamic classification, the methods do not follow universal standards, often reinvent the wheel, are proprietary and have other limitations. Oversimplified color based sorting methods moreover lead to loss of efficiency. Whenever the computational power required to collect more data from \emph{flowing input} is not too high, and it is possible to use rational variable precision or graded rough sets for classification and approximation.

Suppose that the attribute value sets of the input product are  $\{V_{a}:\, a\in \{1, 2, \ldots n\}\}$. An $f$-valued classification (with $f\geq 3$) into the disjoint categories: \textsf{reject}, \textsf{perfect}, \textsf{sub-perfect-$(1)$} \dots \ \textsf{sub-perfect-$(f-2)$} can be assumed to be the result of the procedure. Similar sorting patterns can be assumed for other types of ingredients. A subsequent transfer of the things falling under the category \textsf{reject} to the \textsf{reject} bin, \textsf{perfect} to the \textsf{perfect} bin and \textsf{sub-perfect-$i$} to the \textsf{sub-perfect-$i$} bin ensures an initial classification. This stage can moreover involve rational VPRS or other soft methods (the classification by themselves require concepts of substantial parthood). However it is the use in subsequent stages that will be highlighted in this example.

\begin{figure}[hbt]
\begin{center}
\begin{tikzpicture}[node distance=2cm, auto]
\node (F) {Produce};
\node (O) [left of=F] {};
\node (E) [left of=O] {Categories};
\node (C) [below of=O] {Bins};
\node (A) [right of=F] {R-Bins};
\node (G) [below of=C] {};
\node (B) [right of=G] {Granulations};
\node (K) [below of=B] {G-Approximations};
\draw[->,font=\scriptsize] (F) to node {}(C);
\draw[->,font=\scriptsize] (E) to node {}(C);
\draw[->,font=\scriptsize] (A) to node {}(C);
\draw[->,font=\scriptsize] (C) to node {}(B);
\draw[->,font=\scriptsize] (E) to node {}(K);
\draw[->,font=\scriptsize] (B) to node {}(K);
\end{tikzpicture}
\caption{Industrial Application}
\label{cake3}
\end{center}
\end{figure}
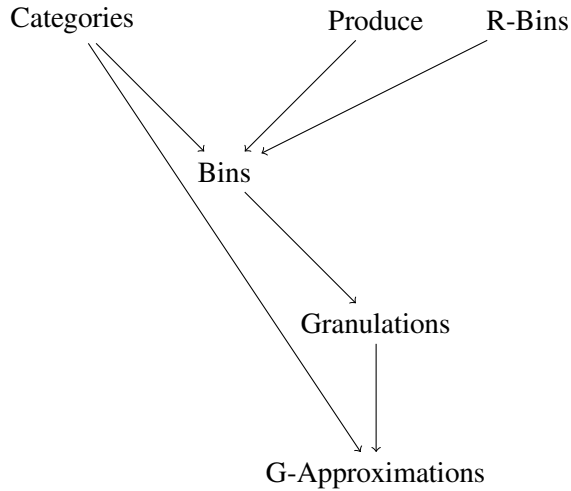

After a time $t$ of sorting the produce into $f$ number of bins, the problem would be of combining them to form blocks that satisfy the quality criteria. It is assumed that some bins have residual material from previous runs. The problem may moreover be read as a number of optimization problems (as it is not required to exhaust bins in each cycle), that may or may not have solutions; however, such an approach would require human intervention and decision-making.

\begin{itemize}
\item {A granule $h$ is a standardized set of objects determined by a set $V(h)$ of $n$-tuples of values that are computed to be admissible (from the mass of ingredients and associated classification) according to an external criteria. For example, an ingredient of type-1 and another ingredient of type-2 in the ratio $1:2$ may be known to satisfy the criteria. However there is no need to explicitly mention the ratio}
\item {Granular rational lower VPRS approximations are combinations from the bins that satisfy the criteria}
\item {Granular rational upper VPRS approximations on the other hand can be used to satisfy a lower quality criteria.}
\item {The objective is to produce admissible combinations of ingredients to approximate the quality criteria.}
\item {The associated mereological features are explained in the next sub-section.}
\end{itemize}

\subsubsection{Approximation of Desired Quality}

Suppose the granules are collections of ingredients of the form $x_ij$ with $i$ indicating the ingredient and $j$ it's type. $ij$ can be assumed to be the bin identity indicator as well. It is assumed that bin $ij$ contains many approximate copies of $x_{ij}$.  Note that if the sorting is imperfect, then this would mean that \emph{most of the content of bin $ij$ is of the $j$th type of ingredient $i$}. For this example, in addition, assume that there are three ingredients each of five types. A sum formula like $x_{11} \oplus x_{21} \oplus x_{33}$ is intended to mean a weighted combination of the granules in some ratio. In the simplest case this can be a union of a number of granules.

Now, a rule like there should be \emph{at least two units of ingredients of perfect type, and at least three units of ingredients for an acceptable combination} is equivalent to constructing a lower VPRS approximation. For a purely set theoretic version the granules can be numbered in addition (relative to the mass of ingredients).

While a combination need not exactly correspond to a quality grade, the goal is to approximate it. Thus a granular lower VPRS approximation can assure a certain quality, while a granular upper VPRS approximation can possibly assure a certain quality. These are also rational in suitable perspectives implicit in their construction.

\section{Remarks and Directions}

In this research, 
\begin{itemize}
 \item {granular generalizations of VPRS that are different from the generalizations in the DTRS and probabilistic perspectives are introduced,}
 \item {new results on connections with graded granular rough sets are proved,}
 \item {generalizations of inclusion functions are systematized and expanded,}
 \item {concepts of substantial parthood and rational approximations are introduced,}
 \item {a framework for rational granular graded approximations due to the first author in \cite{am202236} is improved and extended,}
 \item {variations of the framework for granular VPRS models are explored,}  
 \item {it is shown that particular combinations of parthood and approximations work better,}
 \item {meta-applications to rough-set based cluster analysis introduced in \cite{am2021c} are proposed,}
 \item {medical imaging and tumor metastases prediction are outlined, and }
 \item {applications to sensor-based dynamic approximate quality assurance through dynamic sorting are proposed.}
\end{itemize}

This research situates ideas of rationality in relation to granularity, and substantial part of relations in the generalized VPRS context motivates work from both theoretical and applied perspectives. Important connections with defeasible reasoning \cite{dm2003} are derived -- these suggest that the most rational kinds of VPRS approximations must be governed by a set of higher order ontological rules (associated with the definition of approximations). In future work, associated theoretical and application areas will be investigated in greater depth.

\begin{small}
\begin{flushleft}
\textbf{Acknowledgement:} The first author's contribution to this research is supported by a woman scientist grant (grant no. WOS-A/PM-22/2019) of the Department of Science and Technology.
\end{flushleft}
\end{small}

\bibliographystyle{fundam}
\bibliography{algroughf69flzf}

\begin{thebibliography}{100}
\providecommand{\url}[1]{\texttt{#1}}
\providecommand{\urlprefix}{URL }
\expandafter\ifx\csname urlstyle\endcsname\relax
  \providecommand{\doi}[1]{doi:\discretionary{}{}{}#1}\else
  \providecommand{\doi}{doi:\discretionary{}{}{}\begingroup
  \urlstyle{rm}\Url}\fi
\providecommand{\eprint}[2][]{\url{#2}}

\bibitem{knjbsep2020}
Kolodny N, Brunero J.
\newblock {Instrumental Rationality}.
\newblock In: Zalta EN (ed.), {The Stanford Encyclopedia of Philosophy}.
  Stanford University, 2020.
\newblock
  \urlprefix\url{https://plato.stanford.edu/archives/spr2020/entries/rationality-instrumental/}.

\bibitem{zpsk07}
Pawlak Z, Skowron A.
\newblock {Rough Sets and Boolean Reasoning}.
\newblock \emph{Information Sciences}, 2007.
\newblock \textbf{77}:41--73.

\bibitem{zpb}
Pawlak Z.
\newblock {Rough Sets: Theoretical Aspects of Reasoning About Data}.
\newblock Kluwer Academic Publishers, Dodrecht, 1991.

\bibitem{op84}
Or{\l}owska E, Pawlak Z.
\newblock {Logical Foundations of Knowledge Representation - Reports of The
  Computing Centre}.
\newblock Technical report, Polish Academy of Sciences, 1984.

\bibitem{am909}
Mani A.
\newblock {Towards Logics of Some Rough Perspectives of Knowledge}.
\newblock In: Suraj Z, Skowron A (eds.), {Intelligent Systems Reference Library
  dedicated to the memory of Prof. Pawlak ISRL 43 }, pp. 419--444. Springer
  Verlag, 2013.

\bibitem{am9411}
Mani A.
\newblock {Probabilities, Dependence and Rough Membership Functions}.
\newblock \emph{International Journal of Computers and Applications}, 2016.
\newblock \textbf{39}(1):17--35.
\newblock \doi{10.1080/1206212X.2016.1259800}.

\bibitem{am3930}
Mani A.
\newblock {Ontology, Rough Y-Systems and Dependence}.
\newblock \emph{Internat. J of Comp. Sci. and Appl.}, 2014.
\newblock \textbf{11}(2):114--136.
\newblock Special Issue of IJCSA on Computational Intelligence.

\bibitem{am9969}
Mani A.
\newblock {Dialectical Rough Sets, Parthood and Figures of Opposition-I}.
\newblock \emph{Transactions on Rough Sets}, 2018.
\newblock \textbf{XXI}(LNCS 10810):96--141.

\bibitem{ppm2}
Pagliani P, Chakraborty M.
\newblock {A Geometry of Approximation: Rough Set Theory: Logic, Algebra and
  Topology of Conceptual Patterns}.
\newblock Springer, Berlin, 2008.

\bibitem{am240}
Mani A.
\newblock {Dialectics of Counting and The Mathematics of Vagueness}.
\newblock \emph{Transactions on Rough Sets}, 2012.
\newblock \textbf{XV}(LNCS 7255):122--180.

\bibitem{am202236}
Mani A.
\newblock {Granularity and Rational Approximation: Rethinking Graded Rough
  Sets}.
\newblock \emph{Transactions on Rough Sets}, 2022.
\newblock \textbf{XXIII}:1--36.
\newblock Accepted for Publication.

\bibitem{zw}
Ziarko W.
\newblock {Variable Precision Rough Set Model}.
\newblock \emph{J. of Computer and System Sciences}, 1993.
\newblock \textbf{46}:39--59.

\bibitem{kz}
Katzberg T, Ziarko W.
\newblock {Variable Precision Extension of Rough Sets}.
\newblock \emph{Fundamenta Informaticae}, 1996.
\newblock \textbf{27}(2-3):155--168.

\bibitem{ya01}
Yao YY.
\newblock {Information Granulation and Rough Set Approximation}.
\newblock \emph{Int. J. of Intelligent Systems}, 2001.
\newblock \textbf{16}:87--104.

\bibitem{zl1997}
Zadeh LA.
\newblock {Toward a Theory of Fuzzy Information Granulation and its Centrality
  in Human Reasoning and Fuzzy Logic}.
\newblock \emph{Fuzzy Sets and Systems}, 1997.
\newblock \textbf{90}(2):111--127.

\bibitem{amedit}
Mani A, D{\"u}ntsch I, Cattaneo G (eds.).
\newblock {Algebraic Methods in General Rough Sets}.
\newblock {Trends in Mathematics}. Birkhauser Basel, 2018.
\newblock ISBN 978-3-030-01161-1.
\newblock \doi{10.1007/978-3-030-01162-8}.

\bibitem{yl96}
Yao YY, Lin TY.
\newblock {Generalizing Rough Sets Using Modal Logics}.
\newblock \emph{Intelligent Automation and Soft Computing}, 1996.
\newblock \textbf{2}(2):103--120.

\bibitem{pp2018}
Pagliani P.
\newblock {Three Lessons on the Topological and Algebraic Hidden Core of Rough
  Set Theory}.
\newblock In: Mani A, D{\"u}ntsch I, Cattaneo G (eds.), {Algebraic Methods in
  General Rough Sets}, {Trends in Mathematics}, pp. 337--415. Springer
  International, 2018.

\bibitem{jpr}
J{\"a}rvinen J, Pagliani P, Radeleczki S.
\newblock {Information Completeness in Nelson Algebras of Rough Sets Induced by
  Quasiorders}.
\newblock \emph{Studia Logica}, 2012.
\newblock pp. 1--20.

\bibitem{yy2016}
Yao YY.
\newblock {Rough-Set Concept Analysis: Interpreting Rs-Definable Concepts Based
  on Ideas From Formal Concept Analysis}.
\newblock \emph{Information Sciences}, 2016.
\newblock \textbf{347}:442--462.

\bibitem{cd9}
Ciucci D, Dubois D, Prade H.
\newblock {Oppositions in Rough Set Theory}.
\newblock In: Li T, et~al. (eds.), {RSKT'2012, LNAI 7414}. Springer-Verlag,
  2012 pp. 504--513.

\bibitem{am5559}
Mani A.
\newblock {Towards Student Centric Rough Concept Inventories}.
\newblock In: Bello R, et~al. (eds.), {IJCRS' 2020}, volume 12179 of
  \emph{{LNAI}}, pp. 251--266. Springer International, 2020.

\bibitem{tost2017}
Tohme S, Simmons RL, Tsung A.
\newblock {Surgery for Cancer: A Trigger for Metastases}.
\newblock \emph{Cancer Research}, 2017.
\newblock \textbf{77}(7):1548--1552.

\bibitem{hpra2021}
Velesaca HO, Su{\'a}rez PL, Mira R, Sappa AD.
\newblock {Computer Vision Based Food Grain Classification: A Comprehensive
  Survey}.
\newblock \emph{Computers and Electronics in Agriculture}, 2021.
\newblock \textbf{187}(106287):1--13.

\bibitem{sands2018}
Sands D, Parker M, Hedgeland H, Jordan S, Galloway R.
\newblock {Using Concept Inventories to Measure Understanding}.
\newblock \emph{Higher Education Pedagogies}, 2018.
\newblock \textbf{3}(1):173--182.

\bibitem{diana1998}
Lipscomb D.
\newblock {Basics of Cladistic Analysis}.
\newblock George Washington University, Washington D.C., 1998.
\newblock \urlprefix\url{https://www.researchgate.net/publication/221959140}.

\bibitem{qiao18}
Qiao J, Hu BQ.
\newblock {Granular Variable Precision L-Fuzzy Rough Sets Based on Residuated
  Lattices}.
\newblock \emph{Fuzzy Sets and Systems}, 2018.
\newblock \textbf{336}:148--166.

\bibitem{am24}
Mani A.
\newblock {Esoteric Rough Set Theory-Algebraic Semantics of a Generalized VPRS
  and VPRFS}.
\newblock In: Skowron A, Peters JF (eds.), {Transactions on Rough Sets, LNCS
  5084}, volume VIII, pp. 182--231. Springer Verlag, 2008.

\bibitem{yy3w2011}
Yao Y.
\newblock {The Superiority of Three-Way Decisions in Probabilistic Rough Set
  Models}.
\newblock \emph{Information Sciences}, 2011.
\newblock \textbf{181}(6):1080--1096.

\bibitem{yec2017}
Syau YR, Lin EB, Liau Cj.
\newblock {Neighborhood Systems and Variable Precision Generalized Rough Sets}.
\newblock \emph{Fundamenta Informaticae}, 2017.
\newblock \textbf{153}:271--290.

\bibitem{zhao09}
Zhao S, Tsang EC, Chen D.
\newblock {The Model of Fuzzy Variable Precision Rough Sets}.
\newblock \emph{IEEE Transactions on Fuzzy Systems}, 2009.
\newblock \textbf{17}(2):451--467.
\newblock \doi{10.1109/TFUZZ.2009.2013204}.

\bibitem{sdz05}
{\'S}l{\c e}zak D, Ziarko W.
\newblock {The Investigation of The Bayesian Rough Set Model}.
\newblock \emph{Internat. J. Approximate Reasoning}, 2005.
\newblock \textbf{40}:81--91.

\bibitem{xwzl2013}
Jia X, Liao W, Tang Z, Shang L.
\newblock {Minimum Cost Attribute Reduction in Decision-Theoretic Rough Set
  Models}.
\newblock \emph{Information Sciences}, 2013.
\newblock \textbf{219}:151--167.
\newblock \doi{10.1016/j.ins.2012.07.010}.

\bibitem{amk2018}
Akama S, Murai T, Kudo Y.
\newblock {Reasoning with Rough Sets}.
\newblock Springer, 2018.

\bibitem{whh22}
Holliday WH.
\newblock Possibility Semantics.
\newblock In: Fitting M (ed.), Landscapes in Logic, volume~2 of \emph{Selected
  Topics from Contemporary Logics}, pp. 363--476. College Publications, London,
  2022.

\bibitem{ywxy15}
Cheng Y, Zhan W, Wu X, Zhang Y.
\newblock {Automatic Determination about Precision Parameter Value Based on
  Inclusion Degree with Variable Precision Rough Set Model}.
\newblock \emph{Information Sciences}, 2015.
\newblock \textbf{290}:72--85.
\newblock \doi{10.1016/j.ins.2014.08.034}.

\bibitem{gg1998}
Gr{\"a}tzer G.
\newblock {General Lattice Theory}.
\newblock Birkhauser, 1998.

\bibitem{koh}
Koh K.
\newblock {On The Lattice of Maximum-Sized Antichains of A Finite Poset}.
\newblock \emph{Algebra Universalis}, 1983.
\newblock \textbf{17}:73--86.

\bibitem{am9501}
Mani A.
\newblock {Algebraic Semantics of Proto-Transitive Rough Sets}.
\newblock \emph{Transactions on Rough Sets}, 2016.
\newblock \textbf{XX}(LNCS 10020):51--108.

\bibitem{am3690}
Mani A.
\newblock {Approximation Dialectics of Proto-Transitive Rough Sets}.
\newblock In: Chakraborty MK, Skowron A, Kar S (eds.), {Facets of Uncertainties
  and Applications}, {Springer Proc. in Math and Statistics 125}. Springer,
  2013-15 pp. 99--109.

\bibitem{am6999}
Mani A.
\newblock {Antichain Based Semantics for Rough Sets}.
\newblock In: Ciucci D, Wang G, Mitra S, Wu W (eds.), {RSKT 2015}.
  Springer-Verlag, 2015 pp. 319--330.

\bibitem{am9114}
Mani A.
\newblock {Knowledge and Consequence in AC Semantics for General Rough Sets}.
\newblock In: Wang G, et~al. (eds.), {Thriving Rough Sets}, volume 708 of
  \emph{{Studies in Computational Intelligence Series}}, pp. 237--268. Springer
  International.
\newblock ISBN 978-3-319-54966-8, 2017.
\newblock \doi{10.1007/978-3-319-54966-8}.

\bibitem{cd3}
Ciucci D.
\newblock {Approximation Algebra and Framework}.
\newblock \emph{Fundamenta Informaticae}, 2009.
\newblock \textbf{94}:147--161.

\bibitem{cc5}
Cattaneo G, Ciucci D.
\newblock {Lattices With Interior and Closure Operators and Abstract
  Approximation Spaces}.
\newblock In: Peters JF, et~al. (eds.), {Transactions on Rough Sets X, LNCS
  5656}, pp. 67--116. Springer, 2009.

\bibitem{yy9}
Yao YY.
\newblock {Relational Interpretation of Neighbourhood Operators and Rough Set
  Approximation Operators}.
\newblock \emph{Information Sciences}, 1998.
\newblock pp. 239--259.

\bibitem{it2}
Iwinski TB.
\newblock {Rough Orders and Rough Concepts}.
\newblock \emph{Bull. Pol. Acad. Sci (Math)}, 1988.
\newblock \textbf{(3--4)}:187--192.

\bibitem{gc2018}
Cattaneo G.
\newblock {Algebraic Methods for Rough Approximation Spaces by Lattice
  Interior--closure Operations}.
\newblock In: Mani A, D{\"u}ntsch I, Cattaneo G (eds.), {Algebraic Methods in
  General Rough Sets}, {Trends in Mathematics}, pp. 13--156. Springer
  International, 2018.

\bibitem{bu}
Burmeister P.
\newblock {A Model-Theoretic Oriented Approach to Partial Algebras}.
\newblock Akademie-Verlag, 1986, 2002.

\bibitem{lj}
Ljapin ES.
\newblock {Partial Algebras and Their Applications}.
\newblock Academic, Kluwer, 1996.

\bibitem{bdtmp2008}
Ball D, Thames M, Phelps G.
\newblock {Content Knowledge for Teaching: What Makes it Special?}
\newblock \emph{Journal of Teacher Education}, 2008.
\newblock \textbf{59}(5):389--407.

\bibitem{gcd2018}
Cattaneo G, Ciucci D.
\newblock {Algebraic Methods for Orthopairs and induced Rough Approximation
  Spaces}.
\newblock In: Mani A, D{\"u}ntsch I, Cattaneo G (eds.), {Algebraic Methods in
  General Rough Sets}, pp. 553--640. Birkhauser Basel.
\newblock ISBN 978-3-030-01161-1, 2018.

\bibitem{am501}
Mani A.
\newblock {Algebraic Methods for Granular Rough Sets}.
\newblock In: Mani A, D{\"u}ntsch I, Cattaneo G (eds.), {Algebraic Methods in
  General Rough Sets}, {Trends in Mathematics}, pp. 157--336. Birkhauser Basel,
  2018.

\bibitem{mkcfi2016}
Chakraborty MK.
\newblock {On Some Issues in the Foundation of Rough Sets: the Problem of
  Definition}.
\newblock \emph{Fundamenta Informaticae}, 2016.
\newblock \textbf{148}:123--132.

\bibitem{yy2015}
Yao YY.
\newblock {The Two Sides of The Theory of Rough Sets}.
\newblock \emph{Knowledge-Based Systems}, 2015.
\newblock \textbf{80}:67--77.

\bibitem{am105}
Mani A.
\newblock {Algebraic Semantics of Similarity-Based Bitten Rough Set Theory}.
\newblock \emph{Fundamenta Informaticae}, 2009.
\newblock \textbf{97}(1-2):177--197.

\bibitem{am5586}
Mani A.
\newblock {Comparative Approaches to Granularity in General Rough Sets}.
\newblock In: Bello R, et~al. (eds.), {IJCRS 2020}, volume 12179 of
  \emph{{LNAI}}, pp. 500--518. Springer, 2020.

\bibitem{md}
Mundici D.
\newblock {Generalization of Abstract Model Theory}.
\newblock \emph{Fundamenta Math}, 1984.
\newblock \textbf{124}:1--25.

\bibitem{bk3}
Banerjee M, Khan MA.
\newblock {Propositional Logics for Rough Set Theory}.
\newblock In: {Transactions on Rough Sets VI, LNCS 4374}. Springer Verlag, 2007
  pp. 1--25.

\bibitem{bc2}
Banerjee M, Chakraborty MK.
\newblock {Algebras from Rough Sets -- An Overview}.
\newblock In: Pal SK, {et al} (eds.), {Rough-Neural Computing}, pp. 157--184.
  Springer Verlag, 2004.

\bibitem{am3}
Mani A.
\newblock {Super Rough Semantics}.
\newblock \emph{Fundamenta Informaticae}, 2005.
\newblock \textbf{65}(3):249--261.

\bibitem{dueo2011}
D{\"u}ntsch I, Or{\l}owska E.
\newblock {Discrete Dualities for Double Stone Algebras}.
\newblock \emph{Studia Logica}, 2011.
\newblock \textbf{99}(1):127--142.

\bibitem{oep05}
Turksen IB.
\newblock {An Ontological and Epistemological Perspective of Fuzzy Set Theory}.
\newblock Elsevier, 2005.

\bibitem{jd2018}
Dujmovic J.
\newblock {Soft Computing Evaluation Logic}.
\newblock {IEEE Press}. Wiley, 2018.

\bibitem{famo86}
Fagin R, Vardi M.
\newblock {The Theory of Data Dependencies - A Survey}.
\newblock In: Anshwel M, Gewirtz W (eds.), {Mathematics of Information
  Processing}, volume~34 of \emph{{Symposia in Applied Mathematics}}. American
  Math Society, 1986 pp. 19--71.

\bibitem{ewa1998}
Buszkowski W, Orlowska E.
\newblock {Indiscernibility-Based Formalizations of Dependence in Information
  Systems}.
\newblock In: Or{\l}owska E (ed.), {Incomplete Information: Rough Set
  Analysis}, chapter~9, pp. 293--315. Springer-Verlag, Berlin/Heidelberg, 1998.

\bibitem{joanna2008}
Pilarek JG, Or{\l}owska E.
\newblock {Logics of Similarity and Dual Tableux: A Survey}.
\newblock In: Riccia GD, et~al. (eds.), {Preferences And Similarities}, pp.
  129--160. CISM, Springer, 2008.

\bibitem{bd2010}
Dimitrov B.
\newblock {Some Obreshkov Measures of Dependence and Their Use}.
\newblock \emph{Comptes Rendus Acad Bulg. Sci.}, 2010.
\newblock \textbf{63}(1):5--18.

\bibitem{ham2017}
Burkhardt H, Seibt J, Imaguire G, Gerogiorgakis S (eds.).
\newblock {Handbook of Mereology}.
\newblock Philosophia Verlag, Germany, 2017.

\bibitem{rgac15}
Gruszczy{\'n}ski R, Varzi A.
\newblock {Mereology Then and Now}.
\newblock \emph{Logic and Logical Philosophy}, 2015.
\newblock \textbf{24}:409--427.

\bibitem{cam19}
Camposampiero M.
\newblock {Mereology and Mathematics: Christian Wolff{\rq}s Foundational
  Programme}.
\newblock \emph{British Journal For The History Of Philosophy}, 2019.
\newblock \textbf{27}(6):1151--1172.

\bibitem{cav21}
Cotnoir AJ, Varzi A.
\newblock {Mereology}.
\newblock Oxford University Press, Oxford, UK, 2021.

\bibitem{psi}
Simon P.
\newblock {Parts - A Study in Ontology}.
\newblock Oxford Univ. Press, 1987.

\bibitem{ldk}
Lewis DK.
\newblock {Parts of Classes}.
\newblock Oxford; Basil Blackwell, 1991.

\bibitem{ng77}
Goodman N.
\newblock {The Structure of Appearance}.
\newblock {Boston Studies in Philosophy of Science}. D Reidel, 3 edition, 1977.

\bibitem{rjle2007}
Janicki R, Le DT.
\newblock {Towards a Pragmatic Mereology}.
\newblock \emph{Fundamenta Informaticae}, 2007.
\newblock \textbf{75}(1-4):295--314.

\bibitem{lp4}
Polkowski L.
\newblock {Rough Neural Computation Model Based on Rough Mereology}.
\newblock In: Pal SK, et~al. (eds.), {Rough Neural Computation: Techniques for
  Computing With Words}, pp. 85--108. Springer Verlag, 2004.

\bibitem{pls}
Polkowski L, Polkowska SM.
\newblock {Reasoning About Concepts by Rough Mereological Logics}.
\newblock In: Wang G, et~al. (eds.), {RSKT 2008, LNAI 5009}. Springer, 2008 pp.
  197--204.

\bibitem{gk}
Grzegorczyk A.
\newblock {The Systems of Lesniewski in Relation to Contemporary Logical
  Research}.
\newblock \emph{Studia Logica}, 1955.
\newblock \textbf{3}:77--95.

\bibitem{lp2011}
Polkowski L.
\newblock {Approximate Reasoning by Parts}.
\newblock Springer Verlag, 2011.

\bibitem{ur}
Urbaniak R.
\newblock {Lesniewski's Systems of Logic and Mereology; History and
  Re-Evaluation}.
\newblock Ph.D. thesis, Department of Philosophy, Univ of Calgary, 2008.

\bibitem{intm2021}
Maffezioli P, Varzi A.
\newblock {Intuitionist Mereology}.
\newblock \emph{Synthese}, 2021.
\newblock \textbf{198}(S4):277--302.

\bibitem{vh}
Vopenka P, Hajek P.
\newblock {The Theory of Semisets}.
\newblock Academia, Prague, 1972.

\bibitem{vp9}
Vopenka P.
\newblock {Mathematics in The Alternative Set Theory}.
\newblock Teubner-Verlag, 1979.

\bibitem{ps3}
Polkowski L, Skowron A.
\newblock {Rough Mereology: A New Paradigm for Approximate Reasoning}.
\newblock \emph{Internat. J. Appr. Reasoning}, 1996.
\newblock \textbf{15}(4):333--365.

\bibitem{am2022c}
Mani A.
\newblock {Mereology for STEAM and Education Research}.
\newblock In: Chari D, Gupta A (eds.), {EpiSTEMe 9}, volume~9, pp. 1--9. TIFR,
  Mumbai, 2022.
\newblock \urlprefix\url{https://www.researchgate.net/publication/359773579}.

\bibitem{jrp2016}
Jacobs GM, Renandya WA, Power A.
\newblock {Simple, Powerful Strategies for Student Centered Learning}.
\newblock {Springer Briefs in Education}. Springer Nature, 2016.

\bibitem{fkqg2020}
Felisiak PA, Qin K, Li G.
\newblock {Generalized Multiset Theory}.
\newblock \emph{Fuzzy Sets and Systems}, 2020.
\newblock \textbf{380}:104--130.

\bibitem{vms09}
Vit{\'o}ria A, Ma{\l}uszy{\'n}ski J, Sza{\l}as A.
\newblock {Modeling and Reasoning with Paraconsistent Rough Sets}.
\newblock \emph{Fundamenta Informaticae}, 2009.
\newblock \textbf{97}:405--438.

\bibitem{base97}
Barwise J, Seligman J.
\newblock {Information Flow: The Logic of Distributed Systems}.
\newblock {Cambridge Tractts in TCS 44}. Cambridge University Press, 1997.

\bibitem{ssmc2019}
Dutta S, Skowron A, Chakraborty MK.
\newblock {Information Flow in Logic for Distributed Systems: Extending Graded
  Consequence}.
\newblock \emph{Information Science}, 2019.
\newblock \textbf{491}:232--250.

\bibitem{mkcs}
Chakraborty M, Dutta S.
\newblock {Theory of Graded Consequence}.
\newblock {Logic in Asia}. Springer, 2019.

\bibitem{wbl1991}
Blizard W.
\newblock {The Development of Multiset Theory}.
\newblock \emph{Modern Logic}, 1991.
\newblock \textbf{1}(4):319--352.

\bibitem{sva}
Snasel V.
\newblock {Lambda Lattices}.
\newblock \emph{Math. Bohemica}, 1997.
\newblock \textbf{122}(3):267--272.

\bibitem{frgra1976}
Fried E, Gratzer G.
\newblock {Partial and Free Weakly Associative Lattices}.
\newblock \emph{Houston Journal of Mathematics}, 1976.
\newblock \textbf{2}(4):501--512.

\bibitem{tyl}
Lin TY.
\newblock {Granular Computing-1: The Concept of Granulation and Its Formal
  Model}.
\newblock \emph{Int. J. Granular Computing, Rough Sets and Int Systems}, 2009.
\newblock \textbf{1}(1):21--42.

\bibitem{lz9}
Zadeh LA.
\newblock {Fuzzy Sets and Information Granularity}.
\newblock In: Gupta N, et~al. (eds.), {Advances in Fuzzy Set Theory and
  Applications}, pp. 3--18. North Holland, Amsterdam, 1979.

\bibitem{tyl1994}
Lin T, Liu Q.
\newblock {Rough Approximate Operators: Axiomatic Rough Set Theory}.
\newblock In: Ziarko W (ed.), {Rough Sets, Fuzzy Sets and Knowledge Discovery},
  pp. 256--260. Springer, 1994.

\bibitem{gll2006}
Liu G.
\newblock {The Axiomatization of The Rough Set Upper Approximation Operations}.
\newblock \emph{Fundamenta Informaticae}, 2006.
\newblock \textbf{69}(23):331--342.

\bibitem{yyi1996}
Yao YY.
\newblock {Two Views of the Theory of Rough Sets in Finite Universes}.
\newblock \emph{Int. J. Approx. Reasoning}, 1996.
\newblock pp. 291--317.

\bibitem{yy5}
Yao YY.
\newblock {The Art of Granular Computing}.
\newblock In: Kryszkiewicz M, et~al. (eds.), {RSEISP'2007, LNAI 4585}. Springer
  Verlag, 2007 pp. 101--112.

\bibitem{skajsd2016}
Skowron A, Jankowski A, Dutta S.
\newblock {Interactive granular computing}.
\newblock \emph{Granular Computing}, 2016.
\newblock \textbf{1}(2):95--113.

\bibitem{am5550}
Mani A.
\newblock {Functional Extensions of Knowledge Representation in General Rough
  Sets}.
\newblock In: Bello R, et~al. (eds.), {IJCRS' 2020}, volume 12179 of
  \emph{{LNAI}}, pp. 19--34. Springer International, 2020.

\bibitem{am111111}
Mani A.
\newblock {Algebraic Semantics of Generalized RIFs}.
\newblock \emph{ArXiv Preprint}, 2021.
\newblock (2109.12998):1--36.
\newblock \urlprefix\url{https://arxiv.org/abs/2109.12998}.

\bibitem{sl2008}
{\'S}l{\c e}zak D.
\newblock {Degrees of Conditional Dependence: A Framework for Bayesian
  Networks}.
\newblock \emph{Information Sciences}, 2009.
\newblock \textbf{179}:197--2009.

\bibitem{sl2006}
{\'S}l{\c e}zak D.
\newblock {Rough Sets and Bayes Factor}.
\newblock In: Peters JF, Skowron A (eds.), {Transactions on Rough Sets III},
  {LNCS 3400}, pp. 202--229. Springer Verlag, 2006.

\bibitem{ag2009}
Gomolinska A.
\newblock {Rough Approximation Based on Weak q-RIFs}.
\newblock \emph{Transactions on Rough Sets}, 2009.
\newblock \textbf{X}:117--135.

\bibitem{ag3}
Gomolinska A.
\newblock {On Certain Rough Inclusion Functions}.
\newblock In: Peters JF, et~al. (eds.), {Transactions on Rough Sets IX, LNCS
  5390}, pp. 35--55. Springer Verlag, 2008.

\bibitem{yy10}
Yao YY.
\newblock {Three-Way Decisions With Probabilistic Rough Sets}.
\newblock \emph{Information Sciences}, 2010.
\newblock \textbf{180}(3):341--353.

\bibitem{ss2010}
Skowron A, Stepaniuk J.
\newblock {Approximation Spaces in Rough-Granular Computing}.
\newblock \emph{Fundamenta Informaticae}, 2010.
\newblock \textbf{100}:141--157.

\bibitem{dm94}
Makinson D.
\newblock {General Patterns in Nonmonotonic Reasoning}, volume~3, pp. 35--110.
\newblock Oxford University Press, 1994.

\bibitem{zhang2005}
Zhang D.
\newblock {Triangular Norms On Partially Ordered Sets}.
\newblock \emph{Fuzzy Sets and Systems}, 2005.
\newblock \textbf{153}:195--209.

\bibitem{am2021c}
Mani A.
\newblock {General Rough Modeling of Cluster Analysis}.
\newblock In: Ramanna S, et~al. (eds.), {Rough Sets: IJCRS-EUSFLAT 2021}, {LNAI
  12872}. Springer Nature, 2021.
\newblock \urlprefix\url{https://doi.org/10.1007/978-3-030-87334-9_6}.

\bibitem{ljy2020}
Zhang L, Zhan J, Yao Y.
\newblock {Intuitionistic Fuzzy TOPSIS Method Based on CVPIFRS Models: An
  Application to Biomedical Problems}.
\newblock \emph{Information Sciences}, 2020.
\newblock \textbf{517}:315--339.

\bibitem{angelma2016}
Jothi AA, Rajam MA.
\newblock {Effective Segmentation and Classification of Thyroid Histopathology
  Images}.
\newblock \emph{Applied Soft Computing}, 2016.
\newblock \textbf{C}(46):652--664.

\bibitem{ajg2008}
Hassanien A, Abraham A, Peters J, Schaefer G.
\newblock {An Overview of Rough-Hybrid Approaches in Image Processing}.
\newblock In: {IEEE International Conference on Fuzzy Systems}. 2008
  \doi{10.1109/FUZZY.2008.4630665}.

\bibitem{bms2016}
Banerjee S, Mitra S, Shankar BU.
\newblock {Single Seed Delineation of Brain Tumor using Multi-Thresholding}.
\newblock \emph{Information Sciences}, 2016.
\newblock \textbf{330}:88--103.
\newblock \urlprefix\url{http://dx.doi.org/10.1016/j.ins.2015.10.018}.

\bibitem{ytchs2017}
Huang Y, Li TR, Luo C, Fujita H, Horn SJ.
\newblock {Dynamic Variable Precision Rough Set Approach for Probabilistic
  Set-Valued Information Systems}.
\newblock \emph{Knowledge-Based Systems}, 2017.
\newblock \textbf{122}:131--147.

\bibitem{lrdm2020}
Elliot L, Knodt AR, Ireland D, Morris ML, et~al.
\newblock {What is the Test-Retest Reliability of Common Task-fMRI Measure? New
  Empirical Evidence and a Meta-Analysis}.
\newblock \emph{Psychological Science}, 2020.
\newblock \textbf{31}(7):792--806.
\newblock \doi{10.1177/0956797620916786}.

\bibitem{llz2021}
Liu Y, Leong ATL, Zhang Y, et~al.
\newblock {A Low-Cost and Shielding-Free Ultra-Low-Field Brain MRI Scanner}.
\newblock \emph{Nature Communications}, 2021.
\newblock \textbf{12}(7238).
\newblock \doi{10.1038/s41467-021-27317-1}.

\bibitem{dm2003}
Makinson D.
\newblock {Bridges between Classical and Nonmonotonic Logic}.
\newblock \emph{Logic Journal of The IGPL}, 2003.
\newblock \textbf{11}:69--96.

\end{thebibliography}

\end{document}